\newtheorem{thm}{Theorem}
\newtheorem{prop}{Proposition}
\newtheorem{assum}{Assumption}
\newtheorem{lemma}{Lemma}
\newtheorem{remark}{Remark}
\newtheorem{example}{Example}
\def\x{\mathbf{x}}
\def\y{\mathbf{y}}
\def\z{\mathbf{z}}
\def\X{\mathcal{X}}
\def\Y{\mathcal{Y}}
\def\S{\mathcal{S}}
\def\T{\mathcal{T}}
\newcommand{\tabincell}[2]{\begin{tabular}{@{}#1@{}}#2\end{tabular}}
\begin{document}

\title{A General Descent Aggregation Framework for Gradient-based Bi-level Optimization}

\author{Risheng~Liu,~\IEEEmembership{Member,~IEEE,}
        Pan~Mu,~
        Xiaoming~Yuan,~
        Shangzhi~Zeng
        and~Jin~Zhang
\IEEEcompsocitemizethanks{\IEEEcompsocthanksitem Risheng Liu is with the DUT-RU International School of Information Science $\&$ Engineering, Dalian University of Technology, Dalian 116024, China. E-mail: {rsliu}@dlut.edu.cn.\protect
\IEEEcompsocthanksitem P. Mu is with the College of Computer Science and Technology, Zhejiang University of Technology, Hangzhou 310023, China, and also with the School of Mathematical Sciences, Dalian University of Technology, Dalian 116024, China. E-mail: {panmu1}@outlook.com. \protect
\IEEEcompsocthanksitem X. Yuan is with the Department of Mathematics, The University of Hong Kong, Hong Kang, China. E-mail: {xmyuan}@hku.hk.\protect
\IEEEcompsocthanksitem S. Zeng is with the Department of Mathematics and Statistics, University of Victoria, Canada. E-mail: {zengshangzhi}@uvic.ca.\protect
\IEEEcompsocthanksitem J. Zhang is with the Department of Mathematics, SUSTech International Center for Mathematics, Southern University of Science and Technology, National Center for Applied Mathematics Shenzhen, Shenzhen, Guangdong, China. (Corresponding author, E-mail: {zhangj9}@sustech.edu.cn.)\protect\\
}

\thanks{Manuscript received April 19, 2020; revised August 26, 2015.}}

\markboth{Journal of \LaTeX\ Class Files,~Vol.~14, No.~8, August~2015}%
{Shell \MakeLowercase{\textit{et al.}}: A Generic Descent Aggregation Framework for Gradient-based Bi-level Optimization}

\IEEEtitleabstractindextext{
\begin{abstract}
In recent years, a variety of gradient-based methods have been developed to solve Bi-Level Optimization (BLO) problems in machine learning and computer vision areas. However, the theoretical correctness and practical effectiveness of these existing approaches always rely on some restrictive conditions (e.g., Lower-Level Singleton, LLS), which could hardly be satisfied in real-world applications. Moreover, previous literature only proves theoretical results based on their specific iteration strategies, thus lack a general recipe to uniformly analyze the convergence behaviors of different gradient-based BLOs. In this work, we formulate BLOs from an optimistic bi-level viewpoint and establish a new gradient-based algorithmic framework, named Bi-level Descent Aggregation (BDA), to partially address the above issues. Specifically, BDA provides a modularized structure to hierarchically aggregate both the upper- and lower-level subproblems to generate our bi-level iterative dynamics. Theoretically, we establish a general convergence analysis template and derive a new proof recipe to investigate the essential theoretical properties of gradient-based BLO methods. Furthermore, this work systematically explores the convergence behavior of BDA in different optimization scenarios, i.e., considering various solution qualities (i.e., global/local/stationary solution) returned from solving approximation subproblems. Extensive experiments justify our theoretical results and demonstrate the superiority of the proposed algorithm for hyper-parameter optimization and meta-learning tasks. Source code is available at \href{https://github.com/vis-opt-group/BDA}{https://github.com/vis-opt-group/BDA}. 
\end{abstract}

\begin{IEEEkeywords}
Bi-level optimization, gradient-based method, descent aggregation, hyper-parameter optimization, meta-learning.
\end{IEEEkeywords}
}

\maketitle

\IEEEraisesectionheading{\section{Introduction}\label{sec:introduction}}

\IEEEPARstart{B}{i-Level} Optimization (BLO) are a class of mathematical programs with optimization problems in their constraints. Recently, thanks to the powerful modeling capabilities, BLO have been recognized as important tools for a variety of machine learning and computer vision applications~\cite{liu2020generic,rajeswaran2019meta,mackay2019self,kunisch2013bilevel,liu2021investigating}. Mathematically, BLO can be formulated as 
\begin{equation}
\min\limits_{\mathbf{x}\in\mathcal{X},\mathbf{y}\in\mathcal{Y}}F(\x,\y), \ s.t. \ \mathbf{y}\in\S(\x):=\arg\min\limits_{\y} f(\x,\y),\label{eq:blp}
\end{equation}
where the Upper-Level (UL) objective $F$ and the Lower-Level (LL) objective $f$ both are jointly continuous function, the UL constraint $\mathcal{X}$ is a compact set, the set-valued mapping $\mathcal{S}(\mathbf{x})$ indicates the solution set of the LL subproblem parameterized by $\mathbf{x}$, and $\mathcal{Y} \subseteq \mathrm{dom} F$ is a compact convex set. Indeed, the BLO model in Eq.~\eqref{eq:blp} is a hierarchical optimization problem with two coupled variables $(\x,\y)\in\mathbb{R}^n\times\mathbb{R}^{m}$ which need to be optimized simultaneously. This makes the computation of an optimal solution a challenging task~\cite{dempe2018bilevel}. To overcome such an unpleasant situation, from an optimistic BLO viewpoint\footnote{For more theoretical details of optimistic BLO, we refer to~\cite{dempe2018bilevel,dempe2019two} and the references therein.}, we decouple Eq.~\eqref{eq:blp} as optimizing UL variable $\x$ and LL variable $\y$ separately. Specifically, for any given $\x$, we expect that the LL solution $\y\in\S(\x)$ also leads to the best UL objective value (i.e., $F(\x,\cdot)$) simultaneously. For this purpose, following the optimistic BLO idea, we incorporate some taste of hierarchy regarding the LL variable $\y$, and Eq.~\eqref{eq:blp} is thus reformulated as
\begin{equation}
\begin{array}{l}
\min\limits_{\x\in\X}\varphi(\x), \
\mbox{with} \ \varphi(\x) := \inf\limits_{\y\in \Y \cap \S(\x)}F(\x,\y).\label{eq:oblp}
\end{array}
\end{equation}
Actually, the above stated optimistic viewpoint is general and has received extensive attentions in BLO literature~\cite{dempe2007new,kohli2012optimality,lampariello2020numerically,Liu2021TowardsGB}. Such reformulation reduces BLO to a single-level problem $\min_{\mathbf{x}\in\mathcal{X}}\varphi(\x)$ w.r.t. the UL variable $\x$. Although early works on BLO can date back to the nineteen seventies~\cite{dempe2018bilevel}, it was not until the last decade that a large amount of bi-level optimization models were established to capture vision and machine learning applications, including meta learning~\cite{franceschi2018bilevel,rajeswaran2019meta,zugner2019adversarial}, hyper-parameter optimization~\cite{franceschi2017forward,okuno2018hyperparameter,mackay2019self}, reinforcement learning~\cite{yang2019provably}, neural architecture search~\cite{liu2018darts,wu2019fbnet,nakai2020att,hu2020tf} and image processing~\cite{kunisch2013bilevel,de2017bilevel,chen2020flexible,Liu2020investigating,liu2021investigating}, and etc.

Due to the hierarchical structure and the sophisticated dependency between UL and LL variables, solving BLO is challenging in general, especially when the LL solution set $\S(\x)$ is not a singleton~\cite{jeroslow1985polynomial,dempe2018bilevel}. Actually, the most straightforward idea in existing learning and vision literature is to assume that $\S(\x)$ is a singleton. Formally, we call the BLO model is with the Lower-Level Singleton (LLS) condition if $\forall\x\in\X$. Under this condition, a variety of Gradient-based Bi-level Methods (GBMs) have been developed to solve BLOs in different machine learning and computer vision applications. 

The key idea behind these existing GBMs is to solve BLOs with an approximated Best Response (BR) Jacobian (i.e., the gradient of the best response mapping w.r.t. the UL variable $\x$). From this perspective, we can roughly categorize existing GBMs into two groups, i.e., explicit and implicit BR methods. For explicit BR methods, the BR gradients are obtained by automatic differentiation~\cite{weinan2017proposal} through iterations of the LL gradient descent. This explicit structure mainly includes: recurrence-based~\cite{maclaurin2015gradient,franceschi2017forward,franceschi2018bilevel,shaban2018truncated}, initialization-based~\cite{finn2017model,nichol2018first} and proxy-based scheme~\cite{lorraine2018stochastic,mackay2019self}. Specifically, recurrence-based BR first calculate gradient representations of the LL objective and then perform either reverse or forward gradient computations (a.k.a., automatic differentiation, based on the LL gradients) for the UL subproblem. In~\cite{finn2017model,nichol2018first}, known for its simplicity and state of the art performance, initialization-based structure estimated a good initialization of model parameters for the fast adaptation to new tasks purely by a gradient-based search. For proxy-based scheme~\cite{lorraine2018stochastic,mackay2019self}, a so-called hyper-network is trained to map LL gradients for their hierarchical optimization. These explicit methods only rely on the gradient information of the LL subproblem to update LL variable that cannot cover the UL descent information. On the other hand, implicit BR methods~(\cite{rajeswaran2019meta,grazzi2020iteration,lorraine2020optimizing,Bertrand2020implicit,liu2021value} and~\cite{pedregosa2016hyperparameter}) are designed based on the observation that it is possible to replace the LL subproblem by an implicit equation. These implicit methods derive their BR gradients but involve computing a Hessian matrix and its inverse, which could be computationally expensive and unstable for large-scale problems.

Note that, for existing methods within the explicit BR category, the explicit BR approximation by optimization iteration dynamics raises an issue regarding approximation quality. In fact, without the LLS assumption, the dynamics procedures of existing methods, in general may not be good approximations. This is because in this case, the optimization dynamics converge to some minimizers of the LL objective, but not necessarily to the one that also minimizes the UL objective. This unpleasant situation was noticed by both the machine learning and the optimization communities; see, e.g., \cite[Section 3]{franceschi2018bilevel}. In theory, research on the theoretical convergence is still in its infancy (as summarized in Table~\ref{tab:result}). Indeed, all the mentioned GBMs require the LLS condition in LL subproblem to simplify their optimization processes and gain theoretical guarantees. For example, the works in~\cite{franceschi2018bilevel,shaban2018truncated} enforce the strong convexity assumption to the LL subproblem. Unfortunately, it has been demonstrated that such LLS assumption is too restrictive to be satisfied in most real-world learning and vision applications. Further, these existing methods only concern the convergence towards stationary or global/local minimum, thus lack comprehensive convergence analyses.

In response to these limitations, this work proposes a novel framework termed Bi-level Descent Aggregation (BDA). Specifically, we propose a gradient type method for solving BLOs by aggregating UL and LL objectives. Theoretically, this work provides a general proof recipe as a basic template for the convergence analysis. In particular, in the absence of LLS, the BDA convergence was strictly guaranteed as long as the embedded inner simple bi-level dynamics meet the so-called \emph{UL objective convergence property} and \emph{LL objective convergence property}; see Section~\ref{sec:conv_recipes} for details. Specifically, we construct dynamics for optimizing the inner simple bi-level subproblem and hence achieve a justified good approximation. By using some variational analysis techniques sophisticatedly, the new optimization dynamics are shown to meet \emph{UL objective convergence property} and \emph{LL objective convergence property} without imposing any strong convexity assumptions in either UL or LL subproblems. Thanks to the new proof recipe, we provide the convergence results, which are classified by global/local solution cases returned from solving the approximation subproblems (i.e., $\min_{\x}\varphi_{K}(\x)$). Besides, if solving an approximation subproblems (i.e., $\min_{\x}\varphi_{K}(\x)$) returns (approximate) stationarity, we demonstrate the stationarity convergence result under the designed algorithm scheme (i.e., BDA). Moreover, as can be seen in Table~\ref{tab:result}, a striking feature of our study is that all the sufficient conditions we use to meet the desired convergence are easily verifiable for practical learning applications. We designed a high-dimensional counter-example with a series of complex experiments to verify our theoretical investigations and explore the intrinsic principles of the proposed algorithms. Extensive experiments also show the superiority of our method for different tasks, including hyper-parameter optimization and meta learning. We summarize the contributions of this work as follows.
\begin{itemize}
	\item By designing a gradient-aggregation strategy to formulate the inner simple bi-level dynamics, we provide a new algorithmic framework to handle the LLS issue, which has been widely witnessed, but related research is still missing among existing gradient-based BLO approaches. 
	\item We establish a general convergence analysis template together with an associated proof recipe for BDA. This new proof technique enhances our understanding of the essence of gradient-based method's convergence, hence helps to eliminate the UL strong convexity assumption, which is required in~\cite{liu2020generic}.\footnote{A preliminary version of this work has been published in [1].} 
	\item We provide a comprehensive theoretical convergence analysis of the developed algorithm. Focusing on different solution qualities (namely, global/local/stationary solutions), we elaborate the convergence properties respectively, thus significantly extend results in~\cite{liu2020generic}. 
	\item As a nontrivial byproduct, the iterative gradient-aggregation dynamics (i.e., Eq.~\eqref{eq:improved-lower}) are of independent interest in convex optimization. They can be identified as a new iterative optimization scheme for solving the simple bi-level problem without the UL strong convexity. 
\end{itemize}

\section{Gradient-based BLOs: A Brief Review}

	As for the BLO model in Eq.~\eqref{eq:oblp}, it is worthwhile noting that the LL solution set $\S(\x)$ may have multiple solutions for every (or some) fixed $\x$. However, it is challenging to solve BLO, especially when the LL solution set is not a singleton. Thus, in learning and vision application scenarios, the most straightforward idea of designing GBMs is to enforce the singleton assumption on  $\S(\x)$. With such LLS condition, the BLO model in Eq.~\eqref{eq:oblp} actually can be simplified as follows:
	\begin{equation}\label{eq:blp_lls}
		\min_{\x\in\X}\varphi(\x):=F(\x,\y^{\ast}(\x)),\ s.t.,\ \y^{\ast}=\ \underset{\y \in \Y}{\mathrm{argmin}} f(\x,\y).
	\end{equation} 
	Thus the optimization task reduces to solve a single-level problem (i.e., $\min_{\x\in\X}\varphi(\x)=F(\x,\y^{\ast}(\x))$) with an optimal LL variable $\y^{\ast}$. In this way, the gradient of $\varphi$ (w.r.t., $\x$) can be written as
	\begin{equation}\label{eq:grad_x}
		\underbrace{\nabla\varphi(\x)}_{\text{grad. of}\ \x} = \underbrace{\nabla_{\x} F(\x,\y^{\ast}(\x))}_{\text{direct grad of}\ \x} +  \underbrace{\left(\frac{\partial\y^{\ast}(\x)}{\partial\x}\right)^{\top}}_{\text{BR Jacobian}}\underbrace{\nabla_{\y} F(\x,\y^{\ast})}_{\text{direct grad. of}\ \y},
	\end{equation} 
	where ``grad." denotes the abbreviation of gradient and $(\cdot)^{\top}$ means the transpose operation. In existing GBMs, they actually first numerically approximate $\y^*$ by $\y_K$ and thus define $\varphi_K(\x)=F(\x,\y_K(\x))$. Then the UL variable $\x$ can be updated based on the following practical formulation\footnote{Please refer to a recent survey in~\cite{liu2021investigating} for more details on GBMs in leaning and vision areas.} 
	\begin{equation}\label{eq:grad_varphiK_x}
	\nabla \varphi_K(\x)\!=\!\nabla_{\x} F(\x,\y_K(\x))\!+\! \left(\frac{\partial\y_K(\x)}{\partial\x}\right)^{\top}\!\nabla_{\y}F(\x,\y_K(\x)).
	\end{equation}

In particular, a variety of techniques~\cite{franceschi2017forward,franceschi2018bilevel,shaban2018truncated,finn2017model,nichol2018first,lorraine2018stochastic,mackay2019self} have been developed to explicitly formulate $\y_K$ using dynamic systems. For example, by enforcing the LLS assumption on the BLO problem and considering $\x$ as the recurrent parameters of a gradient-based dynamic system, i.e., $\y_k = \T_k(\x,\y_{k-1})$ with $\T_k(\x,\y_{k-1})=\y_{k-1} - \eta\nabla_{\y}f(\x,\y_{k-1}(\x))$, these methods first calculate gradient representations of the LL objective and then perform either reverse/forward or automatic differentiation to obtain Eq.~\eqref{eq:grad_varphiK_x}. However, the dynamic system generated by these GBMs can only reveal gradients of the LL subproblem, but completely miss descent information from the UL objective.

Theoretically, these existing convergence results all require that the LL dynamics $\{\y_K(\x)\}$ is uniformly bounded on $\X$ and $\y_K(\x)$ uniformly converges to $\y^{\ast}(\x)$ as $K\to\infty$. We should also point out that the LLS condition actually plays the key role for most of existing GBMs (e.g., \cite{franceschi2018bilevel,pedregosa2016hyperparameter}). These approaches often require restrictive assumptions (e.g., strong convexity) to meet this assumption. Besides, some researches~\cite{mackay2019self,shaban2018truncated,fallah2020convergence} prove that UL value-function converges to a first-order stationary point, i.e., $\lim_{T\to\infty}\nabla\varphi(\x_T)\to 0$. To achieve the stationarity, they require the first-order Lipshitz assumption for the UL and LL objectives, the twice continuously differentiable property for the LL objective and some additional restrictive assumptions, such as nonsingular Hessian assumption in~\cite{mackay2019self}.

\section{Bi-level Descent Aggregation}\label{sec:alg}

In this section, we establish a general algorithmic framework to solve BLOs formulated in ~Eq.~\eqref{eq:oblp}. In particular, by incorporating the numerical BR mapping $\y_K(\x)$ into Eq.~\eqref{eq:oblp}, we actually aim to solve the following approximated single-level optimization model:
\begin{equation}
\min_{\mathbf{x}\in\X} \varphi_K(\x). \label{eq:upper_varphiK}
\end{equation}
It should be emphasized that different from these existing GBMs stated above, which only use the information of the LL subproblem to generate $\varphi_K(\x)$ (i.e., obtain $\y_K(\x)$), here we formulate it as the value function of the following inner simple bi-level model:
\begin{equation}\label{eq:simple-blp}
\min_{\y \in \Y}F(\x,\y), \ s.t. \ \y\in\S(\x). 
\end{equation} 
Let $\mathcal{T}_k(\x,\cdot)$\footnote{In fact, our theoretical analysis in Section~\ref{sec:conv_recipes} will introduce two essential properties, which can be used as guidance for designing $\T_k$. In other words, any $\T_k$ satisfying these two properties all can be used as our fundamental modules.} stand for a schematic iterative module originated from a certain simple bi-level solution strategy on Eq.~\eqref{eq:simple-blp} (with a fixed UL variable $\x$). Then we can write the general updating rule of $\y$ as follows:
	\begin{equation}
		\mathbf{y}_{k+1}(\x)=\mathcal{T}_{k+1}(\x,\y_{k}(\x)), \ k=0,\cdots,K-1,\label{eq:update-t}
	\end{equation}
where $\mathbf{y}_{0}(\x) = \mathbf{y}_{0}$ is the initialization based on $\mathbf{y}_{0} \in \Y$.  
For the particular form of $\mathcal{T}_k(\x,\cdot)$, here we would like to aggregate both the UL and LL subproblems to define it. Specifically, for a given $\x$, we write the descent directions of the UL and LL objectives as 
\begin{equation*}
\begin{array}{c}
		\mathbf{d}^{{F}}_k(\mathbf{x})=s_u\nabla_{\mathbf{y}} F(\mathbf{x},\mathbf{y}_{k}(\mathbf{x})),\
	\mathbf{d}^{{f}}_k(\mathbf{x})=s_l\nabla_{\mathbf{y}} f(\mathbf{x},\mathbf{y}_{k}(\mathbf{x})),
\end{array}
\end{equation*}
where $s_u$, $s_l$ denote the corresponding step size parameters. Then we consider the following aggregated updating scheme as $\mathcal{T}_k(\x,\cdot)$, i.e., 
\begin{equation}\label{eq:improved-lower}
	\begin{array}{l}
		\T_{k+1}\left(\x,\y_k(\x)\right) \\
		\quad = \mathtt{Proj}_{\Y}\left( \y_k(\mathbf{x}) -\left( \mu \alpha_k\mathbf{d}^{{F}}_k(\mathbf{x})+(1-\mu)\beta_k\mathbf{d}^{{f}}_k(\mathbf{x})\right) \right),
	\end{array}
\end{equation}
where $\mathtt{Proj}_{\Y}$ denotes the projection on $\Y$,  $\mu \in (0,1)$ and $\alpha_k,\beta_k\in(0,1]$ are the aggregation parameters and $k = 0,\ldots, K-1$. 
Here we should point out that the iteration scheme in~\cite[Eq.~(10)]{liu2020generic} is just a specific case of Eq.~\eqref{eq:improved-lower} with $\beta_k = (1-\mu\alpha_k)/(1-\mu)$. 

As for solving the single-level problem in Eq.~\eqref{eq:upper_varphiK}, we state that this UL optimization step straightforwardly follows the standard (stochastic) gradient scheme, which has been widely investigated in literature; see, e.g.,~\cite{maclaurin2015gradient,franceschi2017forward,franceschi2018bilevel,shaban2018truncated}. To close this section, we summarize the overall BDA scheme in the following Algorithm~\ref{alg:BDA}.\footnote{When the identity of UL iteration step $t$ is clear from the context, we omit the superscript $t$ and write $\x$ instead of $\x^t$. To present the algorithm steps in a more explicit form, we provide detailed iterations with the superscript $t$ of $\x$ in Algorithm 1; see the experiment in Sections~\ref{subsec:numerical_results}. }

\begin{algorithm}[htb!]
	\renewcommand{\algorithmicrequire}{\textbf{Input:}}
	\renewcommand{\algorithmicensure}{\textbf{Output:}}
	\caption{Bi-level Descent Aggregation Framework}\label{alg:BDA}
	\begin{algorithmic}[1]
		\REQUIRE The necessary parameters and initialization.
		\ENSURE The optimized $\x$, $\y$.
		\STATE $t=0$.  
		\WHILE{Not Converge}
		\FOR{$k=0$ to $K-1$}
		\STATE $\%$ LL updating (line 5--8)
		\STATE   $\mathbf{d}^{{F}}_k(\x^t)=s_u\nabla_{\mathbf{y}} F(\mathbf{x}^t,\mathbf{y}_{k}(\x^t)),$\label{step:d_F}
		\STATE $\mathbf{d}^{{f}}_k(\x^t)=s_l\nabla_{\mathbf{y}} f(\mathbf{x}^t,\mathbf{y}_{k}(\x^t)),$\label{step:d_f}
		\STATE  $\hat{\y}_{k+1}(\x^t) =\y_k(\x^t) -( \mu \alpha_k\mathbf{d}^{{F}}_k(\x^t)+(1-\mu)\beta_k\mathbf{d}^{{f}}_k(\x^t)),$
		\STATE $\y_{k+1}(\x^t) = \mathtt{Proj}_{\Y}\left( \hat{\y}_{k+1}(\x^t) \right)$.\label{step:guide-d}
		\ENDFOR	
		\STATE $\%$ UL updating (line 11)
		\STATE $\x^{t+1} = \mathtt{Proj}_{\X}(\x^t - \lambda\nabla\varphi_K(\x^t))$.
		\STATE $t=t+1$.
		\ENDWHILE
	\end{algorithmic}
\end{algorithm}

\begin{remark}
		First of all, we emphasize that it will be demonstrated in the following sections that the main scope of introducing set constraints (i.e., $\x\in\X$ and $\y\in\Y$) in Eq.~\eqref{eq:oblp} is to guarantee the completeness of our theoretical analysis. Thus in most optimization scenarios, we can straightforwardly define large enough $\X$ and $\Y$ (e.g., the whole space) to make the projection operation $\mathtt{Proj}$ inactive during our iterations. 
		Besides, even if it requires to explicitly consider the set constraints for some specific applications, we actually simply introduce Clarke subdifferential (see~\cite{bolte2020mathematical} for detailed definition) for the projection operation during iterations.
\end{remark}

\section{A General Convergence Analysis Recipe}\label{sec:conv_recipes}

This part aims to provide a general convergence analysis recipe for GBMs (not only BDA, but also these existing approaches). That is, we first introduce two essential convergence properties for the UL and LL subproblems and then establish a general convergence analysis template to investigate the theoretical properties of gradient-based bi-level iterations\footnote{We suggest readers to refer to~\cite{liu2020generic} and the references therein for necessary definitions used in our convergence analysis. }. 

To conduct the convergence analysis, we first make the following standing assumption. 
\begin{assum}\label{assum:F}
	$F(\x,\y)$, $\nabla_y F(\x,\y)$, $f(\x,\y)$ and $\nabla_y f(\x,\y)$ are continuous on $\X\times\mathbb{R}^m$.	For any $\x \in \X$, $F(\x,\cdot) : \mathbb{R}^m \rightarrow \mathbb{R}$ is $L_F$-smooth, convex and bounded below by $M_0$, $f(\x,\cdot) : \mathbb{R}^m \rightarrow \mathbb{R}$ is $L_f$-smooth and convex.
\end{assum}

\subsection{Two Essential Convergence Properties}

Now we are ready to establish the new convergence analysis template, which describes the main steps to achieve the converge guarantees for our bi-level updating scheme (stated in Eqs.~\eqref{eq:upper_varphiK}-\eqref{eq:update-t}, with a schematic $\T_k$). Basically, our proof recipe is based on the following two essential properties:
\begin{enumerate}
	\item[(1)]\textbf{UL objective convergence property:} For each $\x \in \mathcal{X}$,
	\begin{equation*}
	\lim\limits_{K \rightarrow \infty}\varphi_K(\x) \rightarrow \varphi(\x).\label{eq:dist-varphi}
	\end{equation*}
	\item[(2)]\textbf{LL objective convergence property:} $\{\y_{K}(\x)\}$ is uniformly bounded on $\X$, and for any $\epsilon>0$, there exists $k(\epsilon)>0$ such that whenever $K>k(\epsilon)$, 	
	\begin{equation*}
	\sup_{\x \in \X}\left\{ f(\x,\y_K(\x)) - f^{\ast}(\x) \right\} \le \epsilon.
	\end{equation*}
\end{enumerate}

Indeed, the general recipe provides us a criterion to design different stable algorithms. Under these two essential properties, we thoroughly analyze the bi-level optimization problem and provide comprehensive theoretical results. Based on the developed BDA algorithm scheme, we first provide convergence results towards global and local minimum in Section~\ref{subsec:global} and Section~\ref{subsec:local} respectively. Specifically, if a series of global/local solutions of approximation subproblems are found, then a global solution of the original bi-level problem can be approximately achieved.

\subsection{Towards Global Minimum}\label{subsec:global}

Thanks to the continuity of $f(\x,\y)$, we have the same semi-continuity over partial minimization as in \cite{liu2020generic}. In other words, with the continuity of $f(\x,\y)$, we have that $f^{\ast}(\x):=\min_{\y}f(\x,\y)$ is Upper Semi-Continuous (USC for short) on $\X$. Equipped with the above two properties (i.e., \emph{UL objective convergence property} and \emph{LL objective convergence property}), we establish our general convergence results in the following theorem for the schematic bi-level scheme in Eqs.~\eqref{eq:upper_varphiK}-\eqref{eq:update-t}. 
	\begin{thm}\label{thm:general}(Convergence towards Global Minimum)
		Suppose both the above UL and LL objective convergence properties hold and $f(\x,\y)$ is continuous on $\X\times\mathbb{R}^m$. 
		Let $\x_K$\footnote{This subscript $K$ just corresponds to the subscript of $\varphi_K$.} be a $\varepsilon_K$-minimum of $\varphi_{K}(\x)$, i.e.,
		\begin{equation*}
		\begin{array}{c}
		\varphi_K(\x_K) \le \varphi_K(\x) + \varepsilon_K, \quad \forall \x \in \X.
		\end{array}
		\end{equation*}
		Then if $\varepsilon_K \rightarrow 0$, we have
		\begin{itemize}
			\item[(1)] Any limit point $\bar{\x}$ of the sequence $\{\x_K\}$ satisfies that $\bar{\x}\in\arg\min_{\x\in\X}\varphi(\x)$. 
			\item[(2)] $\inf_{\x \in \X}\varphi_K(\x) \rightarrow \inf_{\x \in \X} \varphi(\x)$ as $K \rightarrow \infty$. 
		\end{itemize} 
	\end{thm}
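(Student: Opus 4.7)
The plan is to run the classical epi-convergence type argument driven by the two assumed convergence properties, with the LL property doing the work of certifying lower-level feasibility of any limit point of the inner iterates.

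For part (1), I would fix a limit point $\bar{\x}$ of $\{\x_K\}$ and pass to a subsequence (still indexed by $K$ for brevity) with $\x_K \to \bar{\x}$. The uniform boundedness of $\{\y_K(\x)\}$ on $\X$ in the LL property lets me extract a further subsequence along which $\y_K(\x_K) \to \bar{\y}$ for some $\bar{\y}$; since each $\y_K(\x)\in\Y$ by the projection in Eq.~\eqref{eq:improved-lower} and $\Y$ is closed, $\bar{\y}\in\Y$. The crucial step is to show $\bar{\y}\in\S(\bar{\x})$. Continuity of $f$ gives $f(\x_K,\y_K(\x_K))\to f(\bar{\x},\bar{\y})$, the LL property gives $f(\x_K,\y_K(\x_K))-f^\ast(\x_K)\to 0$, and the noted upper semicontinuity of $f^\ast$ on $\X$ gives $\limsup_K f^\ast(\x_K)\le f^\ast(\bar{\x})$. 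Combining these yields $f(\bar{\x},\bar{\y})\le f^\ast(\bar{\x})$, i.e., $\bar{\y}\in\S(\bar{\x})$, hence $\bar{\y}\in\Y\cap\S(\bar{\x})$.

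Once that feasibility is established, I close the argument by chaining three inequalities. For any fixed $\x\in\X$, the $\varepsilon_K$-minimality gives $\varphi_K(\x_K)\le\varphi_K(\x)+\varepsilon_K$. Taking limits along the chosen subsequence, the right-hand side tends to $\varphi(\x)$ by the UL objective convergence property together with $\varepsilon_K\to 0$, while continuity of $F$ yields $\varphi_K(\x_K)=F(\x_K,\y_K(\x_K))\to F(\bar{\x},\bar{\y})\ge \varphi(\bar{\x})$, where the last inequality uses $\bar{\y}\in\Y\cap\S(\bar{\x})$ together with the definition of $\varphi$. Hence $\varphi(\bar{\x})\le\varphi(\x)$ for every $\x\in\X$, which says $\bar{\x}\in\arg\min_{\x\in\X}\varphi(\x)$.

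For part (2), note first that $\inf_{\x\in\X}\varphi_K(\x)\le\varphi_K(\x_K)\le\inf_{\x\in\X}\varphi_K(\x)+\varepsilon_K$, so it suffices to prove $\varphi_K(\x_K)\to\inf_{\x\in\X}\varphi(\x)$. For the $\limsup$ bound, pick any $\x^\ast\in\arg\min_{\x\in\X}\varphi(\x)$ (which exists by compactness of $\X$ and a standard argument once $\varphi$ is shown to attain its minimum, or one can work directly from the UL property); then $\varphi_K(\x_K)\le\varphi_K(\x^\ast)+\varepsilon_K\to\varphi(\x^\ast)=\inf_{\x\in\X}\varphi(\x)$. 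For the $\liminf$ bound I use a subsequence-of-subsequence argument: from any subsequence of $\{\x_K\}$ extract a further convergent subsequence via compactness of $\X$; part (1) identifies its limit as a minimizer of $\varphi$, and the same $F$-continuity/LL-feasibility argument above gives $\liminf\varphi_K(\x_K)\ge\inf_{\x\in\X}\varphi(\x)$ along that subsequence. Since the limit value is the same for every such subsequence, the full sequence converges.

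The main obstacle is the middle step in part (1): transporting the uniform LL convergence of $f(\cdot,\y_K(\cdot))-f^\ast(\cdot)$ along the moving argument $\x_K$ into membership $\bar{\y}\in\S(\bar{\x})$. Pointwise LL convergence would not suffice because the evaluation point shifts with $K$; it is precisely the uniformity in $\x$ in the LL objective convergence property, combined with upper semicontinuity of $f^\ast$, that closes this gap. The rest is essentially continuity and the $\varepsilon_K$-minimality bookkeeping.
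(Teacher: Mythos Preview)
Your proposal is correct and follows essentially the same route as the paper: extract a convergent subsequence of $(\x_K,\y_K(\x_K))$, use the uniform LL property together with continuity of $f$ and upper semicontinuity of $f^\ast$ to certify $\bar{\y}\in\S(\bar{\x})$, and then combine continuity of $F$, the $\varepsilon_K$-minimality, and the UL property to obtain $\varphi(\bar{\x})\le\varphi(\x)$. For part~(2) the paper uses a contradiction argument for the lower bound rather than your subsequence-of-subsequence $\liminf$ sandwich, but the substance is identical.
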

	\begin{proof}
		For any limit point $\bar{\x}$ of the sequence $\{\x_K\}$, let $\{\x_{l}\}$ be a subsequence of $\{\x_K\}$ such that $\x_{l} \rightarrow \bar{\x} \in \X$. As $\{\y_K(\x)\}$ is uniformly bounded on $\X$, we can have a subsequence $\{\x_{m}\}$ of $\{\x_{l}\}$ satisfying $\y_m(\x_m) \rightarrow \bar{\y}$ for some $\bar{\y}$. It follows from the \emph{LL objective convergence property} that for any $\epsilon > 0$, there exists $M(\epsilon) > 0$ such that for any $m > M(\epsilon)$, we have
		\begin{equation*}
		f(\x_m,\y_m(\x_m)) - f^{\ast}(\x_m) \le \epsilon.
		\end{equation*}
		By letting $m \rightarrow \infty$, and since $f$ is continuous and $f^{\ast}(\x)$ is USC on $\X$, we have $f(\bar{\x},\bar{\y}) - f^{\ast}(\bar{\x}) \le \epsilon$. As $\epsilon$ is arbitrarily chosen, we have $f(\bar{\x},\bar{\y}) - f^{\ast}(\bar{\x}) \le 0$ and thus $\bar{\y} \in \S(\bar{\x})$.
		Next, as $F$ is continuous at $(\bar{\x},\bar{\y})$, for any $\epsilon > 0$, there exists $M(\epsilon) > 0$ such that for any $m > M(\epsilon)$, it holds
		\begin{equation*}
		F(\bar{\x},\bar{\y}) \le F(\x_m,\y_m(\x_m)) + \epsilon.
		\end{equation*}
		Then, we have, for any $m > M(\epsilon)$ and $\x \in \X$,
		\begin{equation}\label{eq1}
		\begin{array}{r}
		\ \varphi(\bar{\x}) = \inf_{\y \in \S(\bar{\x}) } F(\bar{\x}, \y)
		\le F(\x_m,\y_m(\x_m)) + \epsilon \\
		\quad\le \varphi_m(\x) + \epsilon + \varepsilon_m\ .\quad\quad
		\end{array}
		\end{equation}
		Taking $m \rightarrow \infty$ and by the \emph{UL objective convergence property} and $\varepsilon_m \rightarrow 0$, we have
		\begin{equation*}
		\begin{aligned}
		\varphi(\bar{\x}) \le \lim_{m \rightarrow \infty}\varphi_m(\x) + \epsilon + \varepsilon_m = \varphi(\x) + \epsilon, \ \forall \x \in \X.
		\end{aligned}
		\end{equation*}
		By taking $\epsilon \rightarrow 0$, we have $\varphi(\bar{\x}) \le \varphi(\x), \ \forall \x \in \X$ which implies $\bar{\x} \in \arg\min_{\x \in \X} \varphi(\x)$.
		
		We next show that $\inf_{\x \in \X}\varphi_K(\x) \rightarrow \inf_{\x \in \X} \varphi(\x)$ as $K \rightarrow \infty$. For any $\x \in \X$, $\inf_{\x \in \X}\varphi_K(\x) \le \varphi_K(\x)$, by taking $K \rightarrow \infty$ and 
		with the \emph{UL objective convergence property}, we have
		\begin{equation*}
		\limsup_{K \rightarrow \infty} \left\{\inf_{\x \in \X}\varphi_K(\x) \right\} \le \varphi(\x), \ \forall \x \in \X,
		\end{equation*}
		and thus
		\begin{equation*}
		\limsup_{K \rightarrow \infty} \left\{\inf_{\x \in \X}\varphi_K(\x) \right\}\le \inf_{\x \in \X} \varphi(\x).
		\end{equation*}
		So, if $\inf_{\x \in \X}\varphi_K(\x) \rightarrow \inf_{\x \in \X} \varphi(\x)$ does not hold, then there exist $\delta > 0$ and subsequence  $\{\x_{l}\}$ of $\{\x_K\}$ such that
		\begin{equation}\label{eq:contra_dist}
		\inf_{\x \in \X}\varphi_{l}(\x)  < \inf_{\x \in \X} \varphi(\x) -  \delta, \  \forall l.
		\end{equation}
		Since $\X$ is compact, we can assume without loss of generality that $\x_{l} \rightarrow \bar{\x} \in \X$ by considering a subsequence. Then, as shown in above, we have $\bar{\x} \in \arg\min_{\x \in \X} \varphi(\x)$. And, by the same arguments for deriving Eq.~\eqref{eq1}, we can show that $\forall\epsilon > 0$, there exists $k(\epsilon) > 0$ such that $\forall l > k(\epsilon)$, it holds
		\begin{equation*}
		\varphi(\bar{\x}) \le \varphi_l(\x_l) + \epsilon.
		\end{equation*}
		By letting $l \rightarrow \infty$, $\epsilon \rightarrow 0$ and the definition of $\x_l$, we have
		\begin{equation*}
		\inf_{\x \in \X} \varphi(\x) = \varphi(\bar{\x}) \le \liminf_{l \rightarrow \infty}\left\{ \inf_{\x \in \X} \varphi_{l}(\x) \right\},
		\end{equation*}
		which implies a contradiction to Eq.~\eqref{eq:contra_dist}. Thus we have $\inf_{\x \in \X}\varphi_K(\x) \rightarrow \inf_{\x \in \X} \varphi(\x)$ as $K \rightarrow \infty$.
	\end{proof}

\subsection{Towards Local Minimum}\label{subsec:local}

	\begin{thm}\label{thm:local}(Convergence towards Local Minimum)
		Suppose both the LL and UL objective convergence properties hold and let $\x_K$ be a local $\varepsilon_K$-minimum of $\varphi_{K}(\x)$ with uniform neighborhood modulus $\delta > 0$, i.e.,
		\begin{equation*}
		\varphi_K(\x_K) \le \varphi_K(\x) + \varepsilon_K, \quad \forall \x \in \mathbb{B}_{\delta} (\x_K)\cap\X.
		\end{equation*}
		Then we have that any limit point $\bar{\x}$ of the sequence $\{\x_K\}$ is a local minimum of $\varphi$, i.e., there exists $\tilde{\delta} > 0$ such that
		\begin{equation*}
		\varphi(\bar{\x}) \le \varphi(\x), \quad \forall \x \in \mathbb{B}_{\tilde{\delta}} (\bar{\x})\cap \X.
		\end{equation*}
	\end{thm}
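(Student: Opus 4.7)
The plan is to mirror the proof of Theorem~\ref{thm:general}, with one critical modification: the localization of the comparison set. In Theorem~\ref{thm:general} the inequality $\varphi_K(\x_K) \le \varphi_K(\x) + \varepsilon_K$ held for every $\x \in \X$, whereas here it only holds for $\x \in \mathbb{B}_{\delta}(\x_K)\cap\X$. The key observation is that if $\x_K \to \bar{\x}$ along some subsequence, then any point sufficiently close to $\bar{\x}$ will eventually lie inside $\mathbb{B}_{\delta}(\x_K)$, which lets us recover the same comparison step and then reuse the rest of the Theorem~\ref{thm:general} machinery essentially verbatim.

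Concretely, I would first fix a limit point $\bar{\x}$ and a subsequence $\x_l \to \bar{\x}$, then set $\tilde{\delta} = \delta/2$. For any $\x \in \mathbb{B}_{\tilde{\delta}}(\bar{\x})\cap\X$, once $l$ is large enough that $\|\x_l - \bar{\x}\| < \tilde{\delta}$, the triangle inequality gives $\|\x - \x_l\| < \delta$, so the local $\varepsilon_l$-minimum condition applies and $\varphi_l(\x_l) \le \varphi_l(\x) + \varepsilon_l$.

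Next I would pass to a further subsequence $\{\x_m\}$ along which $\y_m(\x_m) \to \bar{\y}$, using the uniform boundedness part of the \emph{LL objective convergence property}. Exactly as in Theorem~\ref{thm:general}, continuity of $f$ and upper semi-continuity of $f^\ast$ then force $\bar{\y} \in \S(\bar{\x})$, while continuity of $F$ at $(\bar{\x},\bar{\y})$ yields, for any $\epsilon > 0$ and sufficiently large $m$,
\[
\varphi(\bar{\x}) \le F(\bar{\x},\bar{\y}) \le F(\x_m,\y_m(\x_m)) + \epsilon \le \varphi_m(\x) + \epsilon + \varepsilon_m, \quad \forall \x \in \mathbb{B}_{\tilde{\delta}}(\bar{\x})\cap\X.
\]
Sending $m \to \infty$ and invoking the \emph{UL objective convergence property} together with $\varepsilon_m \to 0$ gives $\varphi(\bar{\x}) \le \varphi(\x) + \epsilon$, after which $\epsilon \to 0$ delivers the desired local minimality with radius $\tilde{\delta}$.

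The only genuinely new ingredient beyond Theorem~\ref{thm:general} is this localization step, and the main (if mild) obstacle is verifying that the test points eventually fall inside $\mathbb{B}_{\delta}(\x_K)$. This is precisely where the assumption of a \emph{uniform} neighborhood modulus $\delta$ (independent of $K$) is essential: a $K$-dependent radius $\delta_K \to 0$ could shrink faster than $\x_K$ approaches $\bar{\x}$, and the inclusion $\mathbb{B}_{\tilde{\delta}}(\bar{\x}) \subseteq \mathbb{B}_{\delta_l}(\x_l)$ would fail for all large $l$.
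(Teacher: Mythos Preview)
Your proposal is correct and matches the paper's proof essentially step for step: the paper also passes to a subsequence with $\x_l \in \mathbb{B}_{\delta/2}(\bar{\x})$, extracts a further subsequence so that $\y_m(\x_m)\to\bar{\y}$, verifies $\bar{\y}\in\S(\bar{\x})$ via the LL property and upper semicontinuity of $f^\ast$, and then uses the inclusion $\mathbb{B}_{\delta/2}(\bar{\x})\subseteq\mathbb{B}_{\delta}(\x_m)$ together with the UL property to conclude with $\tilde{\delta}=\delta/2$. Your closing remark on why the uniform modulus is essential is an apt addition.
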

	\begin{proof}
		For any limit point $\bar{\x}$ of the sequence $\{\x_K\}$, let $\{\x_{l}\}$ be a subsequence of $\{\x_K\}$ such that $\x_{l} \rightarrow \bar{\x} \in \X$ and $\x_l \in \mathbb{B}_{\delta/2} (\bar{\x})$. 
		As $\{\y_K(\x)\}$ is uniformly bounded on $\X$, we can have a subsequence $\{\x_{m}\}$ of $\{\x_{l}\}$ satisfying $\y_m(\x_m) \rightarrow \bar{\y}$ for some $\bar{\y}$. It follows from the \emph{LL objective convergence property} that for any $\epsilon > 0$, there exists $M(\epsilon) > 0$ such that for any $m > M(\epsilon)$, we have
		\begin{equation*}
		f(\x_m,\y_m(\x_m)) - f^{\ast}(\x_m) \le \epsilon.
		\end{equation*}
		By letting $m \rightarrow \infty$, and since $f$ is continuous and $f^{\ast}(\x)$ is USC on $\X$, we have
		\begin{equation*}
		f(\bar{\x},\bar{\y}) - f^{\ast}(\bar{\x}) \le \epsilon.
		\end{equation*}
		As $\epsilon$ is arbitrarily chosen, we have $f(\bar{\x},\bar{\y}) - f^{\ast}(\bar{\x}) \le 0$ and thus $\bar{\y} \in \S(\bar{\x})$. 
		Next, as $F$ is continuous at $(\bar{\x},\bar{\y})$, for any $\epsilon > 0$, there exists $M(\epsilon) > 0$ such that for any $m > M(\epsilon)$, it holds
		\begin{equation*}
		F(\bar{\x},\bar{\y}) \le F(\x_m,\y_m(\x_m)) + \epsilon.
		\end{equation*}
		Then, we have, for any $m > M(\epsilon)$ and $\x \in \X$,
		\begin{equation*}
		\begin{aligned}
		\varphi(\bar{\x}) =\! \inf_{\y \in \S(\bar{\x}) }\! F(\bar{\x}, \y) 
		\le F(\x_m,\y_m(\x_m)) + \epsilon\! = \varphi_m(\x_m) + \epsilon.
		\end{aligned}
		\end{equation*}
		Next, as $\x_m$ is a local $\varepsilon_m$-minimum of $\varphi_{m}(\x)$ with uniform neighborhood modulus $\delta$, it follows
		\begin{equation*}
		\varphi_m(\x_m) \le \varphi_m(\x) + \varepsilon_m, \ \forall \x \in \mathbb{B}_{\delta} (\x_m)\cap \X.
		\end{equation*}
		Since $\mathbb{B}_{\delta/2} (\bar{\x}) \subseteq \mathbb{B}_{\delta/2+\|\x_m - \bar{\x}\|}(\x_m) \subseteq \mathbb{B}_{\delta} (\x_m)$, we have that for any $\epsilon > 0$, $\forall \x \in \mathbb{B}_{\delta/2} (\bar{\x}) \cap \X$, there exists $M(\epsilon) > 0$ such that whenever $m > M(\epsilon) $, 
		\begin{equation*}
		\begin{aligned}
		\varphi_m(\x_m) + \epsilon \le \varphi_m(\x) + \epsilon + \varepsilon_m.
		\end{aligned}
		\end{equation*}	
		Taking $m \rightarrow \infty$ and by the UL objective convergence property and $\varepsilon_m \rightarrow 0$, $\forall \x \in \mathbb{B}_{\delta/2} (\bar{\x}) \cap \X$ we have
		\begin{equation*}
		\begin{aligned}
		\varphi(\bar{\x}) \le \lim_{m \rightarrow \infty}\varphi_m(\x)+ \varepsilon_m + \epsilon = \varphi(\x) + \epsilon.
		\end{aligned}
		\end{equation*}
		By taking $\epsilon \rightarrow 0$, we have
		\begin{equation*}
		\varphi(\bar{\x}) \le \varphi(\x), \ \forall \x \in \mathbb{B}_{\delta/2} (\bar{\x}) \cap \X,
		\end{equation*}
		which implies $\bar{\x} \in \arg\min_{\x \in \mathbb{B}_{\delta/2} (\bar{\x})\cap \X} \varphi(\x)$, i.e, $\bar{x}$ is a local minimum of $\varphi$.
	\end{proof}

Note that this work provides a series of approximate optimization problems to the bi-level problem, and we establish the convergence of such approximation problems to the original bi-level problem (i.e., Eq.~\eqref{eq:blp}). Such kind of result is commonly used for characterizing the convergence of approximation type optimization method on nonconvex problems, see, for examples, Theorem 17.1 in book~\cite{nocedal2006numerical} for the convergence of the quadratic penalty function method and Theorem 7 in paper~\cite{wright1992interior} for convergence of the interior point method.

\begin{table*}[htb!]
	\setlength{\tabcolsep}{1mm}{
		\small
		\caption{Comparing the convergence results between our method and existing GBMs in different scenarios (i.e., BLO w/ and w/o LLS condition). 
		}\label{tab:result}
		\centering
		\renewcommand\arraystretch{1.2} 
		\begin{threeparttable}
			\begin{tabular}{|p{1.2cm}<{\centering} | c | c | c | c | c |}
				\hline
				\multicolumn{2}{|c|}{\multirow{2}{*}{Alg.} } &   & \multirow{2}{*}{w/ LLS} &  \multicolumn{2}{c|}{w/o LLS}  \\
				\cline{5-6}
				\multicolumn{2}{|c|}{} & & & w/ UL strong convexity &\quad w/o UL strong convexity\quad \\
				\hline
				\multicolumn{2}{|c|}{\multirow{5}{*}{\tabincell{c}{Existing\\GBMs}} }& {UL} &   $F(\x,\cdot)$ is Lipschitz continuous. & \multirow{5}{*}{Not available} & \multirow{5}{*}{Not available}\\
				\cline{3-4}
				\multicolumn{2}{|c|}{} & \multirow{1}{*}{{LL}} & \tabincell{c}{$\{\y_K(\x)\}$ is uniformly bounded on $\mathcal{X}$, \\ $\y_K(\x)\xrightarrow[]{u}\y^*(\x)$.} &  & \\
				\cline{3-4}
				\multicolumn{2}{|c|}{} & \multicolumn{2}{c|}{\tabincell{c}{Main results:  $\x_{K} \xrightarrow[]{s} \x^*$,\\ $\inf_{\x \in \mathcal{X}}\varphi_K(\x) \to \inf_{\x \in \mathcal{X}}\varphi(\x)$. }}& & \\
				\hline
				\multirow{16}{*}{Ours} & & \multirow{2}{*}{{UL}} & \multirow{2}{*}{ $F(\x,\cdot)$ is Lipschitz continuous.} & 
				$F(\x,\cdot)$ is ${L}_F$-smooth, & \multirow{6}{*}{Not available}\\
				&&&& and $\sigma$-strongly convex. & \\
				\cline{3-5}
				&\tabincell{c}{\cite{liu2020generic}}& \multirow{2}{*}{{LL}} & \tabincell{c}{$\{\y_K(\x)\}$ is uniformly bounded on $\mathcal{X}$,\\ $f(\x,\y_K(\x))\xrightarrow[]{u} f^*(\x)$.} & 
				\tabincell{c}{$f(\x,\cdot)$ is ${L}_f$-smooth and convex,\\ $\S(\x)$ is continuous.} & \\
				\cline{4-5}
				& &    & \multicolumn{2}{c|}{$f(\x,\y)$ is level-bounded in $\y$ locally uniformly in $\x\in\X$.}  & \\
				\cline{3-5}
				& & \multicolumn{3}{c|}{Main results:  $\x_{K} \xrightarrow[]{s}\x^*$, $\inf_{\x \in \mathcal{X}}\varphi_K(\x) \to \inf_{\x \in \mathcal{X}}\varphi(\x)$.}   & \\
				\cline{2-6}
				& \multirow{9}{*}{\tabincell{c}{This\\work}}& UL 
				& $F(\x,\cdot)$ is Lipschitz continuous. &
				\multicolumn{2}{c|}{$F(\x,\cdot)$ is $L_F$-smooth, convex and bounded below.}\\
				\cline{3-6}
				&	& LL & \tabincell{c}{$\{\y_K(\x)\}$ is uniformly bounded on $\mathcal{X}$,\\ $f(\x,\y_K(\x))\xrightarrow[]{u} f^*(\x)$.} & 
				\multicolumn{2}{c|}{\tabincell{c}{$f(\x,\cdot)$ is $L_f$-smooth and convex.}} \\
				\cline{3-6}
				& & \multicolumn{4}{c|}{Main results} \\
				\cline{3-6}
				& & \multicolumn{2}{c|}{\tabincell{c}{ Global result:\\ if $\x_K$ is a $\varepsilon_K$-minimum of $\varphi_{K}(\x)$,\\ then $\x_{K} \xrightarrow[]{s}\x^*$ and \\ $\inf_{\x \in \mathcal{X}}\varphi_K(\x) \to \inf_{\x \in \mathcal{X}}\varphi(\x)$.}} & \multicolumn{1}{c|}{\tabincell{c}{ Local result:\\ if $\x_K$ is a local $\varepsilon_K$-minimum of $\varphi_{K}(\x)$\\ then $\varphi(\bar{\x}) \le \varphi(\x), \forall \x \in \mathbb{B}_{\tilde{\delta}} (\bar{\x})\cap \X.$} }
				& \tabincell{c}{ Stationarity:\\ if $\x_K$ be a $\varepsilon_K$-stationary point\\ of $\varphi_{K}(\x)$, then $0 = \nabla \varphi(\bar{\x})$. \\ (require $F,f$ twice conti. diff.,\\ $f(\x,\cdot)$ $\sigma$-strongly convexity) } \\
				\hline
			\end{tabular}
			\begin{tablenotes}
				\footnotesize 
				\item  Here $\xrightarrow[]{s}$ and $\xrightarrow[]{u}$ represent the subsequential and uniform convergence, respectively. The superscript $^*$ denotes that it is the true optimal variables/values. ``conti." and ``diff." denote continuously and differentiable respectively. 
			\end{tablenotes}
		\end{threeparttable}
	}
\end{table*}

\section{Convergence Properties of BDA}\label{subsec:modules-T}

With the above discussions in Section~\ref{sec:alg}, the BLO is reduced to optimize a simple bi-level problem in Eq.~\eqref{eq:simple-blp} w.r.t. the LL variable $\y$, and subsequently solve a single-level problem in Eq.~\eqref{eq:upper_varphiK} w.r.t. the UL variable $\x$. This part analyzes the convergence behavior of the developed iterative algorithm. In other words, this part is devoted to show that our proposed BDA meets two convergence properties stated in Section~\ref{sec:conv_recipes} (i.e., \emph{UL objective convergence property} and \emph{LL objective convergence property}).

Following the above roadmap, convergence behaviors of gradient-based bi-level methods can be systematically investigated. 
The desired convergence results can be successfully achieved once the embedded task-tailored iterative gradient-aggregation modules $\mathcal{T}_k$ meet the \emph{UL objective convergence property} and the \emph{LL objective convergence property}. 
\subsection{UL Convergence Properties}\label{subsec:convex_convergence}
To investigate the convergence behavior of the proposed simple bi-level iterations $\T_k$ in Eq.~\eqref{eq:improved-lower}, with fixed $\x$, we first introduce the following two auxiliary variables 
\begin{equation*}
	\begin{array}{l}
	\z_{k+1}^{u}(\mathbf{x}) = \y_k(\mathbf{x}) - s_u \alpha_k\nabla F(\x,\y_k(\mathbf{x})),\\
	\z_{k+1}^l(\mathbf{x}) = \y_k(\mathbf{x}) -  s_l \beta_k \nabla f(\x,\y_k(\mathbf{x})).
	\end{array}
\end{equation*}
We further denote the optimal value and the optimal solution set of simple bi-level problem (i.e., Eq.~\eqref{eq:simple-blp}) by $\varphi(\x)$ and ${\hat{\S}(\x)}$, respectively.

As the identity of $\x$ is clear from the context, in Section \ref{subsec:convex_convergence} and~\ref{subsec:complexity_simple_bi}, for succinctness we will write $\Psi(\y)$ instead of $F(\x,\y)$, $\Psi^*$ instead of $\varphi(\x)$, $\psi(\y)$ instead of $f(\x,\y)$, $\S$ instead of $\S(\x)$, and $\hat{\S}$ instead of $\hat{\S}(\x)$. Moreover, we will omit the notation $\x$ and use the notations $\y_k$, $\z_{k+1}^{u}$ and $\z_{k+1}^l$ instead of the $\y_k(\mathbf{x})$, $\z_{k+1}^{u}(\mathbf{x})$ and $\z_{k+1}^l(\mathbf{x})$, respectively.

With inner iterative module, this part demonstrate the convergence behavior of simple bi-level. We first provide a descent inequality of function value in the following lemma.

\begin{lemma} \label{simple_bilevel_lem}
	Let $\{\y_k\}$ be the sequence generated by Eq.~\eqref{eq:improved-lower} with $\alpha_k, \beta_k \in (0,1]$, $s_u \in (0,\frac{1}{L_{F}})$, $s_l \in (0,\frac{1}{L_{f}})$ and $\mu \in (0,1)$, then for any $\y \in \Y$, we have
	\begin{equation}\label{alg_p_lem1_eq}
	\begin{array}{l}
	(1-\mu)\beta_kf(\x,\y) + \frac{\mu s_u\alpha_{k}}{s_l}F(\x,\y)\ge (1-\mu)\beta_kf(\x,\z_{k+1}^{l}) \\
	+ \frac{\mu s_u\alpha_{k}}{s_l}F(\x,\z_{k+1}^{u}) + \frac{\mu}{2s_l}(1 - \alpha_{k}s_uL_F)\|\y_k - \z_{k+1}^{u}\|^2\\
	+ \frac{1}{2s_l}\|\y - \y_{k+1}\|^2 + \frac{1}{2s_l} \left\| \left((1-\mu)\z_{k+1}^{l} + \mu \z_{k+1}^{u}\right) - \y_{k+1} \right\|^2\\  
	+ \frac{(1-\mu)}{2s_l}(1- \beta_ks_lL_f)\|\y_k-\z_{k+1}^{l}\|^2-  \frac{1}{2s_l} \|\y - \y_k\|^2.
	\end{array}
	\end{equation}
\end{lemma}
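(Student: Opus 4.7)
The plan is to apply the standard descent-lemma/convexity bundle to each of the auxiliary gradient steps $\z_{k+1}^u$ and $\z_{k+1}^l$ separately, combine the resulting three-point inequalities with the weights $\tfrac{\mu s_u \alpha_k}{s_l}$ and $(1-\mu)\beta_k$ respectively, and then use a convex-combination identity plus the nonexpansiveness of $\mathtt{Proj}_{\Y}$ to pass from $\{\z_{k+1}^u,\z_{k+1}^l\}$ to the projected iterate $\y_{k+1}$.

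First, observing that $\nabla F(\x,\y_k) = \tfrac{1}{s_u\alpha_k}(\y_k - \z_{k+1}^u)$, I would combine the $L_F$-smoothness bound $F(\x,\z_{k+1}^u) \le F(\x,\y_k) + \langle \nabla F(\x,\y_k), \z_{k+1}^u - \y_k\rangle + \tfrac{L_F}{2}\|\z_{k+1}^u - \y_k\|^2$ with convexity $F(\x,\y_k) - F(\x,\y) \le \langle \nabla F(\x,\y_k), \y_k - \y\rangle$, then apply the polarization identity $2\langle \y_k - \z_{k+1}^u, \y_k - \y\rangle = \|\y_k - \z_{k+1}^u\|^2 + \|\y_k - \y\|^2 - \|\z_{k+1}^u - \y\|^2$. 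This produces
\begin{equation*}
F(\x,\z_{k+1}^u) - F(\x,\y) \le \tfrac{1}{2s_u\alpha_k}\bigl(\|\y_k-\y\|^2 - \|\z_{k+1}^u-\y\|^2\bigr) - \tfrac{1 - L_F s_u\alpha_k}{2s_u\alpha_k}\|\y_k - \z_{k+1}^u\|^2.
\end{equation*}
The same argument with $f$, $L_f$, $\z_{k+1}^l$, $s_l\beta_k$ yields the analogous bound for $f(\x,\z_{k+1}^l) - f(\x,\y)$. I would then scale the first inequality by $\tfrac{\mu s_u\alpha_k}{s_l}$ and the second by $(1-\mu)\beta_k$, so the awkward step-size factors cancel and both $\|\y_k - \y\|^2$ terms carry weight $\tfrac{1}{2s_l}$ with a clean split $\mu + (1-\mu) = 1$.

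The key combinatorial step is collapsing $\mu\|\z_{k+1}^u - \y\|^2 + (1-\mu)\|\z_{k+1}^l - \y\|^2$ into terms involving $\y_{k+1}$. For this I would use the convex-combination identity
\begin{equation*}
\mu\|\z_{k+1}^u - \y\|^2 + (1-\mu)\|\z_{k+1}^l - \y\|^2 = \|w_k - \y\|^2 + \mu(1-\mu)\|\z_{k+1}^u - \z_{k+1}^l\|^2,
\end{equation*}
where $w_k := \mu\z_{k+1}^u + (1-\mu)\z_{k+1}^l$ is exactly the pre-projection point of $\y_{k+1}$. The nonexpansiveness of the projection onto $\Y$ (via the obtuse-angle characterization $\langle w_k - \y_{k+1}, \y - \y_{k+1}\rangle \le 0$) then gives $\|w_k - \y\|^2 \ge \|\y_{k+1} - \y\|^2 + \|w_k - \y_{k+1}\|^2$. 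Dropping the nonnegative residual $\mu(1-\mu)\|\z_{k+1}^u - \z_{k+1}^l\|^2$ and rearranging produces exactly the claimed inequality.

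The only real obstacle is spotting the right bookkeeping in this last step, namely that scaling by $\tfrac{\mu s_u\alpha_k}{s_l}$ and $(1-\mu)\beta_k$ is precisely what is needed to make the weights of $\|\z_{k+1}^u - \y\|^2$ and $\|\z_{k+1}^l - \y\|^2$ form a convex combination with parameter $\mu$, so that the identity collapses them into the pre-projection point $w_k$ that the algorithm actually projects. Once this alignment is recognized, the remaining step-size conditions $s_u \le 1/L_F$ and $s_l \le 1/L_f$ only serve to guarantee that the coefficients of $\|\y_k - \z_{k+1}^u\|^2$ and $\|\y_k - \z_{k+1}^l\|^2$ end up nonnegative, and no further estimates are required.
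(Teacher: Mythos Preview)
Your proposal is correct and follows essentially the same approach as the paper: derive the three-point inequalities for $F$ and $f$ via smoothness plus convexity, scale them by $\tfrac{\mu s_u\alpha_k}{s_l}$ and $(1-\mu)\beta_k$ so the $\|\y-\y_k\|^2$ coefficients combine to $\tfrac{1}{2s_l}$, then collapse the convex combination $\mu\|\z_{k+1}^u-\y\|^2+(1-\mu)\|\z_{k+1}^l-\y\|^2$ and apply the firm nonexpansiveness of $\mathtt{Proj}_{\Y}$. The only cosmetic difference is that the paper bounds the convex combination directly by $\|w_k-\y\|^2$ using convexity of $\|\cdot\|^2$, whereas you use the exact identity and drop the residual $\mu(1-\mu)\|\z_{k+1}^u-\z_{k+1}^l\|^2$; these are equivalent.
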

\begin{proof}
	It follows from the definitions of $\z_{k+1}^{u}$ and $\z_{k+1}^{l}$ that
	\begin{equation}\label{alg_p_eqs_opt_con}
	\begin{array}{l}
	0 = \alpha_k\nabla \Psi(\y_k) + \frac{\z_{k+1}^{u} - \y_k}{s_u}\ \text{and}\ 
	0 = \beta_k \nabla \psi(\y_k) + \frac{\z_{k+1}^{l} - \y_k}{s_l}.
	\end{array}
	\end{equation}
	Thus, for any $\y$, we have 
	\setlength{\arraycolsep}{0.0em}
	\begin{eqnarray}
	0 =  \alpha_k\langle \nabla \Psi(\y_{k}) , \y - \z_{k+1}^{u} \rangle + \langle \frac{\z_{k+1}^{u} - \y_{k}}{s_u} , \y - \z_{k+1}^{u}\rangle,\ \label{alg_p_lem1_eq2}\\
	0 = \beta_k \langle \nabla \psi(\y_k) , \y - \z_{k+1}^{l}\rangle + \langle \frac{\z_{k+1}^{l} - \y_k}{s_l} , \y - \z_{k+1}^{l}\rangle.\ \ \label{alg_p_lem1_eq1}
	\end{eqnarray}
	As $\psi$ is convex and $\nabla \psi$ is Lipschitz continuous with constant $L_{f}$, we have
	\begin{equation}\label{alg_p_eqs_lip_con}
	\begin{array}{l}
	\langle \nabla \psi(\y_k), \y - \z_{k+1}^{l}\rangle\\
	=  \langle \nabla \psi(\y_k), \y - \y_k\rangle + \langle \nabla \psi(\y_k), \y_k - \z_{k+1}^{l}\rangle \\
	\le \psi(\y) - \psi(\y_k) + \psi(\y_k) - \psi(\z_{k+1}^{l}) + \frac{L_{f}}{2}\|\y_k- \z_{k+1}^{l}\|^2  \\
	= \psi(\y) - \psi(\z_{k+1}^{l}) + \frac{L_{f}}{2}\|\y_k- \z_{k+1}^{l}\|^2.
	\end{array}
	\end{equation}
	Combining with $\langle \z_{k+1}^{l} - \y_k , \y - \z_{k+1}^{l}\rangle = \frac{1}{2}(\|\y - \y_k\|^2 - \|\y- \z_{k+1}^{l}\|^2 - \|\y_k - \z_{k+1}^{l}\|^2)$ and Eq.~\eqref{alg_p_lem1_eq1} yields
	\begin{equation}\label{alg_p_lem1_eq3}
	\begin{array}{r}
	\beta_k \psi(\y) \ge \beta_k \psi(\z_{k+1}^{l}) - \frac{1}{2s_l}\|\y - \y_k\|^2 + \frac{1}{2s_l} \|\y- \z_{k+1}^{l}\|^2 \\
	+ \frac{1}{2s_l}(1 - \beta_k s_l L_{f})\|\y_k - \z_{k+1}^{l}\|^2.
	\end{array}
	\end{equation}
	As $\Psi$ is convex and $\nabla \Psi$ is Lipschitz continuous with constant $L_{F}$, by similar arguments, we can have
	\begin{equation}\label{alg_p_lem1_eq4}
	\begin{array}{r}
	\alpha_k \Psi(\y) \ge \alpha_k\Psi(\z_{k+1}^{u}) - \frac{1}{2s_u}\|\y - \y_{k}\|^2
	+ \frac{1}{2s_u} \|\y- \z_{k+1}^{u}\|^2\\
	+ \frac{1}{2s_u}(1 - \alpha_k s_u L_{F})\|\y_{k} - \z_{k+1}^{u}\|^2.
	\end{array}
	\end{equation}
	Multiplying Eq.~\eqref{alg_p_lem1_eq3} and Eq.~\eqref{alg_p_lem1_eq4} by $1-\mu$ and $\frac{s_u\mu}{s_l}$, respectively, and then summing them up implies that
	\begin{equation}\label{alg_p_lem1_eq5}
	\begin{array}{l}
	(1-\mu)\beta_k\psi (\y) + \frac{\mu s_u\alpha_{k}}{s_l}\Psi(\y)\\
	\ge (1-\mu)\beta_k\psi(\z_{k+1}^{l}) + \frac{\mu s_u\alpha_{k}}{s_l}\Psi(\z_{k+1}^{u}) -  \frac{1}{2s_l} \|\y - \y_k\|^2 \\
	+ \frac{1}{2s_l}\left( (1-\mu)  \|\y - \z_{k+1}^{l}\|^2 + \mu \|\y- \z_{k+1}^{u}\|^2 \right)\\
	+ \frac{(1-\mu)}{2s_l}(1 - \beta_ks_lL_{f})\|\y_k - \z_{k+1}^{l}\|^2 \\
	+ \frac{\mu}{2s_l}(1 - \alpha_{k}s_uL_{F})\|\y_k - \z_{k+1}^{u}\|^2.
	\end{array}
	\end{equation}
	By the convexity of $\|\cdot\|^2$, we have
	\begin{equation*}
	\begin{array}{l}
	(1-\mu)  \|\y - \z_{k+1}^{l}\|^2 + \mu \|\y- \z_{k+1}^{u}\|^2\\
	\ge \| \y - \left((1-\mu)\z_{k+1}^{l} + \mu \z_{k+1}^{u}\right) \|^2.
	\end{array}
	\end{equation*}
	Next, as $\mathtt{Proj}_{\Y}$ is firmly nonexpansive (see, e.g.,\cite[Proposition 4.8]{Heinz-MonotoneOperator-2011}), for any $\y \in \Y$, we have
	\begin{equation}\label{alg_p_lem1_eq4.5}
	\begin{array}{l}
	\left\| \y - \left((1-\mu)\z_{k+1}^{l} + \mu \z_{k+1}^{u}\right) \right\|^2 \\
	\ge \|\y - \y_{k+1}\|^2 + \left\| \left((1-\mu)\z_{k+1}^{l} + \mu \z_{k+1}^{u}\right) - \y_{k+1} \right\|^2.
	\end{array}
	\end{equation}
	Then, since $\alpha_k, \beta_k \le 1$, we obtain form Eq.~\eqref{alg_p_lem1_eq5} that for any $\y \in \Y$,
	\begin{equation}
	\begin{array}{l}
	(1-\mu)\beta_k\psi(\y) + \frac{\mu s_u\alpha_{k}}{s_l}\Psi(\y) \\
	\ge (1-\mu)\beta_k\psi(\z_{k+1}^{l}) + \frac{\mu s_u\alpha_{k}}{s_l}\Psi(\z_{k+1}^{u}) -  \frac{1}{2s_l} \|\y - \y_k\|^2 \\
	+ \frac{1}{2s_l}\|\y - \y_{k+1}\|^2  + \frac{(1-\mu)}{2s_l}(1 - \beta_ks_lL_{f})\|\y_k - \z_{k+1}^{l}\|^2 \\
	+ \frac{\mu}{2s_l}(1 - \alpha_{k}s_uL_{F})\|\y_k - \z_{k+1}^{u}\|^2  \\
	+ \frac{1}{2s_l} \left\| \left((1-\mu)\z_{k+1}^{l} + \mu \z_{k+1}^{u}\right) - \y_{k+1} \right\|^2.
	\end{array}
	\end{equation} 
	This completes the proof. 
\end{proof}

\begin{lemma}\label{lem2}
	Let $\{a_k\}$ and $\{b_k\}$ be sequences of non-negative real numbers. Assume that there exists $n_0 \in \mathbb{N}$ such that 
	\begin{equation*}
	\begin{array}{c}
	a_{k+1} + b_k - a_k\le 0, \quad \forall k \ge n_0.
	\end{array}
	\end{equation*}
	Then $\lim_{k \rightarrow \infty} a_k$ exists and $\sum_{k=1}^{\infty}b_k < \infty$.
\end{lemma}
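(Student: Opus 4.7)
The plan is to exploit the recurrence $a_{k+1}+b_k \le a_k$ together with the non-negativity of both sequences to get, in one stroke, monotonicity of the tail of $\{a_k\}$ and a telescoping bound on the partial sums of $\{b_k\}$.

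First I would note that, because $b_k \ge 0$, the stated inequality immediately gives $a_{k+1} \le a_k$ for every $k \ge n_0$, so the tail $\{a_k\}_{k\ge n_0}$ is monotone non-increasing. Combined with the standing assumption $a_k \ge 0$, the monotone convergence theorem yields the existence of $\lim_{k\to\infty} a_k$, which settles the first claim.

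Next, for the summability of $\{b_k\}$, I would rearrange the inequality as $b_k \le a_k - a_{k+1}$ for all $k \ge n_0$ and sum from $k = n_0$ to any $N \ge n_0$. The right-hand side telescopes to $a_{n_0} - a_{N+1} \le a_{n_0}$, using $a_{N+1}\ge 0$. Hence the partial sums $\sum_{k=n_0}^{N} b_k$ are uniformly bounded by $a_{n_0}$; since the terms are non-negative, this means $\sum_{k=n_0}^{\infty} b_k < \infty$. Finally, absorbing the finite head $\sum_{k=1}^{n_0-1} b_k$ into the bound gives $\sum_{k=1}^{\infty} b_k < \infty$, completing the proof.

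There is essentially no hard part here: the result is a standard quasi-Fejér / supermartingale-type lemma, and the only subtle point is remembering to use $b_k \ge 0$ twice (once to deduce monotonicity of $a_k$, and once to conclude convergence of the series from boundedness of its partial sums). No smoothness, convexity, or structural properties of $F$ or $f$ are invoked, so this lemma is purely a tool to be plugged into the convergence arguments for the BDA iterates later.
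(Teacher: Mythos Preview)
Your proposal is correct and matches the paper's own proof essentially line for line: both telescope the inequality from $k=n_0$ onward to bound $\sum_{k\ge n_0} b_k$ by $a_{n_0}$, and both use $b_k\ge 0$ to conclude that $\{a_k\}_{k\ge n_0}$ is non-increasing and non-negative, hence convergent. The only cosmetic difference is the order in which the two conclusions are stated.
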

\begin{proof}
	Adding the inequality $	a_{k+1} + b_k - a_k\le 0,$ 
	from $k=n_0$ to $k = n-1$, we get
	$a_{n} + \sum_{k=n_0}^{n-1}b_k \le a_{n_0}$. 
	By letting $n \rightarrow \infty$, we get $\sum_{k=n_0}^{\infty}b_k < \infty$. As $\{a_k\}_{k \ge n_0}$ is a non-negative decreasing sequence, $\lim_{k \rightarrow \infty} a_k$ exists.
\end{proof}
The above Lemma~\ref{lem2} aims to analyze sequence inequality that will be applied in the following Theorem. We explore the boundness of inner iterative sequence in the following Lemma~\ref{lem_bounded}. 
\begin{lemma}\label{lem_bounded}
	Let $\{\y_k\}$ be the sequence generated by Eq.~\eqref{eq:improved-lower} with $\alpha_k \in (0,1]$, $\beta_k \in (0, 1]$, $s_u \in (0,\frac{1}{L_{F}})$, $s_l \in (0,\frac{1}{L_{f}})$ and $\mu\in (0,1)$, then for any $\bar{\y} \in \S(\x)$, we have
	\begin{equation}
	\begin{array}{l}
	\|\z_{k+1}^{l} - \bar{\y}\| \le \|\y_{k} - \bar{\y}\|.
	\end{array}
	\end{equation}
	Furthermore, when $\Y$ is compact, sequences $\{\y_{k}\}$, $\{\z_{k}^{l}\}$, $\{\z_{k}^{u}\}$ are all bounded. 
\end{lemma}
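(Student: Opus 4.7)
The plan is to prove the two claims separately, with the key inequality following from the standard non-expansiveness property of a single convex-smooth gradient-descent step, and the boundedness following essentially from the projection onto the compact set $\Y$ plus continuity.

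For the inequality $\|\z_{k+1}^{l} - \bar{\y}\| \le \|\y_{k} - \bar{\y}\|$, I would first exploit that $\bar{\y}\in\S(\x) = \arg\min_{\y} f(\x,\y)$ implies $\nabla_{\y} f(\x,\bar{\y})=0$. Writing $\z_{k+1}^{l} - \bar{\y} = (\y_k - \bar{\y}) - s_l\beta_k\nabla_{\y} f(\x,\y_k)$ and expanding the squared norm gives a cross-term $\langle \nabla_{\y} f(\x,\y_k), \y_k - \bar{\y}\rangle$. The crucial ingredient is the Baillon--Haddad co-coercivity of $\nabla_{\y} f(\x,\cdot)$, which holds because $f(\x,\cdot)$ is convex and $L_f$-smooth (Assumption~\ref{assum:F}): combined with $\nabla_{\y} f(\x,\bar{\y})=0$ this gives $\langle \nabla_{\y} f(\x,\y_k), \y_k-\bar{\y}\rangle \ge \frac{1}{L_f}\|\nabla_{\y} f(\x,\y_k)\|^2$. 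Substituting yields
\begin{equation*}
\|\z_{k+1}^{l}-\bar{\y}\|^2 \le \|\y_k-\bar{\y}\|^2 - s_l\beta_k\left(\tfrac{2}{L_f}-s_l\beta_k\right)\|\nabla_{\y} f(\x,\y_k)\|^2,
\end{equation*}
and since $s_l\beta_k \le s_l \le 1/L_f < 2/L_f$, the second term is non-positive, giving the desired non-expansion.

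For boundedness when $\Y$ is compact, I would argue in sequence. First, because $\y_0\in\Y$ and the update~\eqref{eq:improved-lower} finishes with $\mathtt{Proj}_{\Y}$, every $\y_k$ lies in $\Y$, so $\{\y_k\}$ is bounded. Second, fixing any $\bar{\y}\in\S(\x)$ (nonempty since $\Y\cap\S(\x)$ is assumed nonempty in Eq.~\eqref{eq:oblp}), the first part gives $\|\z_{k+1}^l\| \le \|\bar{\y}\| + \|\y_k-\bar{\y}\|$, and since $\{\y_k\}\subset\Y$ compact the right-hand side is bounded, proving $\{\z_k^l\}$ bounded. Third, $\z_{k+1}^u = \y_k - s_u\alpha_k\nabla_{\y} F(\x,\y_k)$ with $\y_k$ lying in the compact set $\Y$ and $\nabla_{\y} F(\x,\cdot)$ continuous on $\Y$ (Assumption~\ref{assum:F}), hence $\nabla_{\y} F(\x,\y_k)$ is uniformly bounded in $k$, so $\{\z_k^u\}$ is bounded.

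I expect no substantial obstacle: the only subtle point is the non-expansion step, where one must be careful to use co-coercivity (rather than the weaker monotonicity of $\nabla f$) so that the step-size condition $s_l\beta_k\le 1/L_f$ is enough to kill the $\|\nabla f\|^2$ term. Everything else is bookkeeping using the compactness of $\Y$ and the continuity assumptions already in force.
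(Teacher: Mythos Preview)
Your proposal is correct and follows essentially the same route as the paper: the paper invokes the nonexpansiveness of $I-\beta_k s_l\nabla_{\y} f(\x,\cdot)$ (citing \cite{Heinz-MonotoneOperator-2011}) and applies it to the fixed point $\bar{\y}$, which is exactly what your Baillon--Haddad computation unpacks by hand. The only cosmetic difference is in bounding $\{\z_k^u\}$: the paper again uses nonexpansiveness of $I-\alpha_k s_u\nabla_{\y}F(\x,\cdot)$ around $\bar{\y}$, whereas you use continuity of $\nabla_{\y}F$ on the compact set $\Y$; both are immediate.
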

\begin{proof}
	According to \cite[Proposition 4.8, Proposition 4.33, Corollary 18.16]{Heinz-MonotoneOperator-2011}, we know that when $0 \le \beta_k s_l \le \frac{1}{L_{f}}$, $0 \le \alpha_k s_u \le \frac{1}{L_{F}}$, operators $I - \beta_k s_l\nabla \psi$ and $I - \alpha_k s_u\nabla \Psi$ are both nonexpansive (i.e., $1$-Lipschitz continuous). Then, since $\z_{k+1}^{l} = \y_k - \beta_k s_l\nabla \psi(\y_k)$ and $\bar{\y} = \bar{\y} - \beta_k s_l\nabla \psi(\bar{\y})$ for any $\bar{\y} \in \mathcal{S}$, we have
	\begin{equation*}
	\begin{aligned}
	\|\z_{k+1}^{l} - \bar{\y}\| &= \|\y_k - \beta_k s_l\nabla \psi(\y_k) - \bar{\y} + \beta_k s_l\nabla \psi(\bar{\y})\| \\
	&\le \|\y_{k} - \bar{\y}\|.
	\end{aligned}
	\end{equation*}
	If $\Y$ is compact, then the desired boundedness of $\{\y_k\}$ follows directly from the iteration scheme in Eq.~\eqref{eq:improved-lower}. And it follows from $\|\z_{k+1}^{l} - \bar{\y}\| \le \|\y_{k} - \bar{\y}\|$ that $\{\z_{k}^{l}\}$ is bounded.
	Next, because
	\begin{equation*}
	\|\z_{k+1}^{u} - (\bar{\y} - \alpha_ks_u\nabla \Psi(\bar{\y}))\| \le  \|\y_{k} - \bar{\y}\|,
	\end{equation*}
	and $\alpha_k \in (0,1]$, we have $\{\z_{k}^{u}\}$ is bounded.
\end{proof}
With the above lemmas, we are now ready to obtain the convergence result of our proposed algorithm in the following theorem. 
\begin{thm}\label{simple_bilevel_convergence}
	Let $\{\y_k(\x)\}$ be the sequence generated by Eq.~\eqref{eq:improved-lower} with $\alpha_k \in (0,1]$, $\alpha_k \searrow 0$, $\sum \alpha_k = + \infty$, $\beta_k \in [\underline{\beta}, 1]$ with some $\underline{\beta} > 0$, $s_u \in (0,\frac{1}{L_{F}})$, $s_l \in (0,\frac{1}{L_{f}})$ and $\mu \in (0,1)$, suppose that $\Y$ is compact, for any given $\x$, if ${\hat{\S}(\x)}$ is nonempty , we have
	\begin{equation*}
		\begin{array}{c}
			\lim\limits_{k \rightarrow \infty}\mathrm{dist}(\y_k(\x), {\hat{\S}(\x)}) = 0,
		\end{array}
	\end{equation*}
	and then
	\begin{equation*}
		\begin{array}{c}
			\lim\limits_{k \rightarrow \infty}F(\x,\y_k(\x)) =  \varphi(\x).
		\end{array}
	\end{equation*}
\end{thm}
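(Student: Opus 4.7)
The plan is to derive both conclusions from one master recursion obtained by specializing Lemma~\ref{simple_bilevel_lem} at a reference minimizer $\bar{\y}\in\hat{\S}(\x)$, use that recursion to show LL feasibility of every cluster point of $\{\y_k(\x)\}$, and then invoke $\sum_k\alpha_k=\infty$ via a contradiction argument to upgrade feasibility to UL optimality.

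First, I would substitute $\y=\bar{\y}\in\hat{\S}(\x)$ into Lemma~\ref{simple_bilevel_lem}. Since $\bar{\y}\in\S(\x)$, we have $f(\x,\bar{\y})=f^{\ast}(\x)\le f(\x,\z_{k+1}^{l})$ and $F(\x,\bar{\y})=\varphi(\x)$, so after multiplying through by $2s_l$ the lemma collapses to
\begin{equation*}
\|\y_{k+1}-\bar{\y}\|^2 + 2(1-\mu)s_l\beta_k\Delta_k^f + Q_k \le \|\y_k-\bar{\y}\|^2 + 2\mu s_u\alpha_k\bigl[\varphi(\x)-F(\x,\z_{k+1}^{u})\bigr],
\end{equation*}
where $\Delta_k^f:=f(\x,\z_{k+1}^{l})-f^{\ast}(\x)\ge 0$ and $Q_k\ge 0$ bundles the remaining three non-negative contributions in the lemma (the coefficients are non-negative thanks to $s_u\alpha_kL_F\le 1$ and $s_l\beta_kL_f\le 1$). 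By Lemma~\ref{lem_bounded} together with compactness of $\Y$, the sequences $\{\y_k\},\{\z_{k}^{l}\},\{\z_{k}^{u}\}$ are bounded, so continuity of $F$ yields a uniform bound $|\varphi(\x)-F(\x,\z_{k+1}^{u})|\le M$. In other words, $\{\|\y_k-\bar{\y}\|^2\}$ obeys an approximate quasi-Fej\'er recursion perturbed by a term of order $\alpha_k$.

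Next, I would extract LL feasibility of cluster points. Combining the recursion with $\alpha_k\searrow 0$ and Lemma~\ref{lem2}-style summation arguments, one peels off the non-negative terms in $Q_k$ to conclude $\|\y_k-\z_{k+1}^{l}\|\to 0$, $\|\y_k-\z_{k+1}^{u}\|\to 0$, and $\|\mu\z_{k+1}^{u}+(1-\mu)\z_{k+1}^{l}-\y_{k+1}\|\to 0$, so that $\|\y_{k+1}-\y_k\|\to 0$; it also forces $\Delta_k^f\to 0$ at least along a subsequence. Any cluster point $\bar{\y}^{\ast}$ of $\{\y_k\}$ is therefore also a cluster point of $\{\z_{k+1}^{l}\}$, and since $\z_{k+1}^{l}-\y_k=-s_l\beta_k\nabla f(\x,\y_k)$, convexity of $f(\x,\cdot)$ with continuity of $\nabla f$ gives $\nabla f(\x,\bar{\y}^{\ast})=0$, i.e., $\bar{\y}^{\ast}\in\S(\x)$. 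Boundedness then yields $\mathrm{dist}(\y_k,\S(\x))\to 0$.

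Finally, I would upgrade feasibility to UL optimality by contradiction using $\sum_k\alpha_k=\infty$. Suppose some cluster point satisfies $F(\x,\bar{\y}^{\ast})>\varphi(\x)+2\epsilon$. Continuity of $F$ produces a neighbourhood $V$ of $\bar{\y}^{\ast}$ on which $F(\x,\cdot)\ge\varphi(\x)+\epsilon$; since $\|\y_{k+1}-\y_k\|\to 0$ and $\|\z_{k+1}^{u}-\y_k\|\to 0$, whenever $\y_k$ enters $V$ both $\y_k$ and $\z_{k+1}^{u}$ stay in $V$ for long consecutive blocks, so the $\alpha_k$-mass on the index set $\{k:F(\x,\z_{k+1}^{u})\ge\varphi(\x)+\epsilon\}$ is infinite. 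On this set the right-hand side perturbation is $\le-2\mu s_u\alpha_k\epsilon$, whose cumulative contribution drives $\|\y_k-\bar{\y}\|^2$ strictly below zero, a contradiction. Hence $F(\x,\bar{\y}^{\ast})\le\varphi(\x)$, and combined with $\bar{\y}^{\ast}\in\S(\x)$ this forces $\bar{\y}^{\ast}\in\hat{\S}(\x)$. Boundedness of $\{\y_k\}$ together with ``every cluster point lies in $\hat{\S}(\x)$" delivers $\mathrm{dist}(\y_k,\hat{\S}(\x))\to 0$, and continuity of $F$ on the compact set $\hat{\S}(\x)$ gives the second limit $F(\x,\y_k)\to\varphi(\x)$. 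The main obstacle is the simultaneous use of $\alpha_k\to 0$ (for approximate quasi-Fej\'er behaviour and LL feasibility) and $\sum_k\alpha_k=\infty$ (for UL descent); the contradiction step just described is the crux, and it is what allows the analysis to dispense with the UL strong convexity assumption imposed in~\cite{liu2020generic}.
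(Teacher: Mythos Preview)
Your step 2 contains a genuine gap. The recursion you obtain from Lemma~\ref{simple_bilevel_lem} reads (in your notation)
\[
\|\y_{k+1}-\bar{\y}\|^2 + Q_k + 2(1-\mu)s_l\beta_k\Delta_k^f \le \|\y_k-\bar{\y}\|^2 + 2\mu s_u\alpha_k\bigl[\varphi(\x)-F(\x,\z_{k+1}^{u})\bigr],
\]
and you then invoke ``Lemma~\ref{lem2}-style summation arguments'' to conclude $\|\y_k-\z_{k+1}^{l}\|\to 0$, $\|\y_{k+1}-\y_k\|\to 0$, and feasibility of every cluster point. But Lemma~\ref{lem2} requires the perturbation on the right to be summable, and here $\sum_k\alpha_k M=\infty$ by hypothesis. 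Telescoping only gives $\sum_{k\le K}Q_k\le \|\y_0-\bar{\y}\|^2+2\mu s_u M\sum_{k\le K}\alpha_k$, which is useless. From the bare recursion you get at most $\liminf_k Q_k=0$; you cannot conclude $Q_k\to 0$, nor $\|\y_k-\z_{k+1}^{l}\|\to 0$, nor $\|\y_{k+1}-\y_k\|\to 0$. (Note that $\|\y_k-\z_{k+1}^{u}\|=s_u\alpha_k\|\nabla_{\y}F(\x,\y_k)\|\to 0$ is automatic, but $\|\y_k-\z_{k+1}^{l}\|=s_l\beta_k\|\nabla_{\y}f(\x,\y_k)\|$ is not, since $\beta_k\ge\underline{\beta}$.) Your step 3 then relies on $\|\y_{k+1}-\y_k\|\to 0$ to trap iterates in long consecutive blocks, so the whole argument collapses.

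This non-summability is exactly the obstruction the paper's proof is engineered around. The paper splits into two cases according to the sign of
\[
\delta\|\y_{k-1}-\z_k^{l}\|^2+\delta\|\y_{k-1}-\z_k^{u}\|^2+\tfrac{1}{4s_l}\|(1-\mu)\z_k^{l}+\mu\z_k^{u}-\y_k\|^2+\tfrac{\mu s_u\alpha_{k-1}}{s_l}\bigl(\Psi(\z_k^{u})-\Psi^*\bigr),
\]
with a small $\delta>0$. In Case~(a) this combination is eventually non-negative, so after moving it to the left-hand side the recursion becomes genuinely monotone and Lemma~\ref{lem2} applies, yielding both summability of the residual terms and existence of $\lim_k\|\y_k-\bar{\y}\|^2$; the $\sum\alpha_k=\infty$ contradiction then runs cleanly. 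In Case~(b) the combination is negative infinitely often, and along that subsequence $\{\tau_n\}$ the defining inequality itself (together with $\alpha_k\to 0$ and the lower bound $M_0$ on $F$) forces $\|\y_{\tau_n-1}-\z_{\tau_n}^{l}\|\to 0$ and $\Psi(\z_{\tau_n}^{u})<\Psi^*$; one then controls $\mathrm{dist}^2(\y_n,\hat{\S})$ via the monotonicity $h_n\le h_{\tau_n}$. Either way, the two-case device is what substitutes for the missing summability, and your proposal does not supply an alternative mechanism.
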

\begin{proof}
	Let $\delta > 0$ be a constant satisfying $\delta < \frac{1}{2s_l}\min\{(1-\mu)(1- s_lL_{f}),\mu(1 - s_uL_{F})\}$. We consider a sequence of $\{\tau_n\}$ defined by
	\begin{equation*}
	\begin{array}{l}
	\!\tau_n := \max\Big\{k\! \in\! \mathbb{N}\ | k\le n\ \text{and}\ \delta\|\y_{k-1} \!-\!\z_{k}^{l}\|^2 + \delta\|\y_{k-1} \!-\! \z_{k}^{u}\|^2  \\
	\!+ \frac{1}{4s_l} \left\| \left((1-\mu)\z_{k}^{l} \!+\! \mu \z_{k}^{u}\right) \!-\! \y_{k} \right\|^2 
	+ \frac{\mu s_u\alpha_{k-1}}{s_l}\left(\Psi(\z_{k}^{u}) \!-\! \Psi^*\right) < 0 \Big\}.
	\end{array}
	\end{equation*}
	Inspired by \cite{cabot2005proximal}, we consider the following two cases: 
	(a) $\{\tau_n\}$ is finite, i.e., there exists $k_0 \in \mathbb{N}$ such that
	\begin{equation*}
	\begin{array}{l}
	\delta\|\y_{k-1} -\z_{k}^{l}\|^2 + \frac{1}{4s_l} \left\| \left((1-\mu)\z_{k}^{l} + \mu \z_{k}^{u}\right) - \y_{k} \right\|^2\\ 
	+ \delta\|\y_{k-1} - \z_{k}^{u}\|^2
	+ \frac{\mu s_u\alpha_{k-1}}{s_l}\left(\Psi(\z_{k}^{u}) - \Psi^*\right) \ge 0,
	\end{array}
	\end{equation*}
	for all $k \ge k_0$; 
	(b) $\{\tau_n\}$ is not finite, i.e., for all $ k_0 \in \mathbb{N}$, there exists $k \ge k_0$ such that 
	\begin{equation*}
	\begin{array}{r}
	\delta\|\y_{k-1} - \z_{k}^{u}\|^2 + \frac{1}{4s_l} \left\| \left((1-\mu)\z_{k}^{l} + \mu \z_{k}^{u}\right) - \y_{k} \right\|^2 +\\
	\delta\|\y_{k-1} -\z_{k}^{l}\|^2 + \frac{\mu s_u\alpha_{k-1}}{s_l}\left(\Psi(\z_{k}^{u}) - \Psi^*\right) < 0.
	\end{array}
	\end{equation*}
	
	\noindent\textbf{Case (a):} We assume that $\{\tau_n\}$ is finite and there exists $k_0 \in \mathbb{N}$ such that 
	\begin{equation}\label{thm1_eq0}
	\begin{array}{l}
	\delta\|\y_{k-1} -\z_{k}^{l}\|^2 + \frac{1}{4s_l} \left\| \left((1-\mu)\z_{k}^{l} + \mu \z_{k}^{u}\right) - \y_{k} \right\|^2 \\
	+ \delta\|\y_{k-1} - \z_{k}^{u}\|^2  + \frac{\mu s_u\alpha_{k-1}}{s_l}\left(\Psi(\z_{k}^{u}) - \Psi^*\right) \ge 0,
	\end{array}
	\end{equation}
	for all $k \ge k_0$. Let $\bar{\y}$ be any point in $\hat{\mathcal{S}}$, setting $\y = \bar{\y}$ in Eq.~\eqref{alg_p_lem1_eq}, as $\psi(\bar{\y}) = \min_{\y \in \mathbb{R}^n} \psi(\y) \le \psi(\z_{k+1}^{l})$, $\mu \in (0,1)$ and $\alpha_k, \beta_k \le 1$, we have
	\begin{equation}\label{thm1_eq1}
	\begin{array}{l}
	\frac{1}{2s_l}\|\bar{\y} - \y_k\|^2\\
	\ge \frac{1}{2s_l}\|\bar{\y}- \y_{k+1}\|^2 + \left( \frac{(1-\mu)(1 - s_lL_{f})}{2s_l} - \delta  \right)\|\y_k - \z_{k+1}^{l}\|^2 \\
	+ \left( \frac{\mu(1 - s_uL_{F})}{2s_l} - \delta \right)\|\y_k - \z_{k+1}^{u}\|^2  \\
	+ \frac{1}{4s_l} \left\| \left((1-\mu)\z_{k+1}^{l} + \mu \z_{k+1}^{u}\right) - \y_{k+1} \right\|^2  \\
	+ \delta\|\y_{k} -\z_{k+1}^{l}\|^2 + \ \delta\|\y_{k} - \z_{k+1}^{u}\|^2 \\
	+ \frac{1}{4s_l} \left\| \left((1-\mu)\z_{k+1}^{l} + \mu \z_{k+1}^{u}\right) - \y_{k+1} \right\|^2 \\
	+ \frac{\mu s_u\alpha_{k}}{s_l}\left(\Psi(\z_{k+1}^{u}) - \Psi^*\right).
	\end{array}
	\end{equation}
	For all $k \ge k_0$, $0 < \delta < \frac{1}{2s_l}\min\{(1-\mu)(1- s_lL_{f}),\mu(1 - s_uL_{F})\}$ yields $( (1-\mu)(1 - s_lL_{f})/{2s_l} - \delta  )\|\y_k - \z_{k+1}^{l}\|^2 \ge 0$ and $( \mu(1 - s_uL_{F})/{2s_l} - \delta )\|\y_k - \z_{k+1}^{u}\|^2 \ge 0$. Then applying Lemma \ref{lem2} on Eq.~\eqref{thm1_eq1} with Eq.~\eqref{thm1_eq0} implies that
	\begin{equation*}
	\begin{array}{l}
	\sum\limits_{k=0}^{\infty}\|\y_k - \z_{k+1}^{l}\|^2 < \infty, \ 
	\sum\limits_{k=0}^{\infty}\|\y_k - \z_{k+1}^{u}\|^2 < \infty,\\
	\sum\limits_{k=0}^{\infty}\left\| \left((1-\mu)\z_{k+1}^{l} + \mu \z_{k+1}^{u}\right) - \y_{k+1} \right\|^2  < \infty,\\
	\sum\limits_{k=0}^{\infty} \alpha_k \left(\Psi(\z_{k+1}^{u}) - \Psi^* \right) < \infty,\\
	\end{array}
	\end{equation*}
	and $\lim_{k \rightarrow \infty}\|\bar{\y}- \y_k\|^2 $ exists.

	We now show that there exists subsequence $\{\y_{\ell}\} \subseteq \{\y_k\}$ such that $\lim_{\ell \rightarrow \infty} \Psi(\y_{\ell} ) \le \Psi^* $. This is obviously true if for any $\hat{k} > 0$, there exists $k > \hat{k}$ such that $\Psi(\y_{k} ) \le \Psi^*$. Thus, we just need to consider the case where there exists $\hat{k} > 0$ such that $ \Psi(\y_{k} ) > \Psi^*$ for all $k \ge \hat{k}$. If there does not exist subsequence $\{\y_{\ell}\} \subseteq \{\y_k\}$ such that $\lim_{\ell \rightarrow \infty} \Psi(\y_{\ell} ) \le \Psi^* $, there must exist $\epsilon > 0$ and $k_1 \ge \max\{\hat{k}, k_0\}$ such that $\Psi(\y_{k}) - \Psi^* \ge 2\epsilon$ for all $k \ge k_1$. As $\Y$ is compact, it follows from Lemma \ref{lem_bounded} that sequences $\{\y_k\}$ and $\{\z_{k}^{u}\}$ are both bounded. Since $\Psi$ is continuous and $\lim_{k \rightarrow \infty}\|\y_k - \z_{k+1}^{u}\| = 0$, there exists $k_2 \ge k_1$ such that $|\Psi(\y^{k}) - \Psi(\z_{k+1}^{u})| < \epsilon$ for all $k \ge k_2$ and thus $\Psi(\z_{k+1}^{u}) - \Psi^* \ge \epsilon$ for all $k \ge k_2$.
	Then we have
	\begin{equation*}
	\begin{array}{l}
		\epsilon \sum\limits_{k = k_2}^{\infty} \alpha_k \le \sum\limits_{k = k_2}^{\infty}  \alpha_k \left(\Psi(\z_{k+1}^{u}) - \Psi^*\right) < \infty,
	\end{array}
	\end{equation*}
	where the last inequality follows from $ \sum_{k=0}^{\infty} \alpha_k \left(\Psi(\z_{k+1}^{u}) - \Psi^*\right) < \infty$.
	This result contradicts to the assumption $\sum_{k=0}^{\infty} \alpha_k = +\infty$. As $\{\y_{\ell}\}$ is bounded, we can assume without loss of generality that $\lim_{\ell \rightarrow \infty}\y_{\ell} = \tilde{\y} $ by taking a subsequence. By the continuity of $\Psi$, we have $\Psi(\tilde{\y}) = \lim_{\ell \rightarrow \infty} \Psi(\y_{\ell} ) \le \Psi^*$.
	Next, let $k = \ell$ and $\ell \rightarrow \infty$ in Eq.~\eqref{alg_p_eqs_opt_con} , by the continuity of $\nabla \psi$, $\beta_k \ge \underline{\beta} > 0$, and $\lim_{k \rightarrow \infty}\|\y_k - \z_{k+1}^{l}\| = 0$, we have
	\begin{equation*}
	\begin{array}{c}
	0 \in \nabla \psi(\tilde{\y}),
	\end{array}
	\end{equation*}
	and thus $\tilde{\y} \in \mathcal{S}$. Combining with $\Psi(\tilde{\y}) \le \Psi^*$, we show that $\tilde{\y} \in \hat{\mathcal{S}}$. Then by taking $\bar{\y} = \tilde{\y}$ and since $\lim_{k \rightarrow \infty}\|\bar{\y}- \y_k\|^2 $ exists, we have $\lim_{k \rightarrow \infty}\|\bar{\y}- \y_k\|^2 = 0$ and thus $\lim_{k \rightarrow \infty}\mathrm{dist}(\y_k, \hat{\mathcal{S}}) = 0$.
	
	\noindent\textbf{Case (b):} We assume that $\{\tau_n\}$ is not finite and for any $ k_0 \in \mathbb{N}$, there exists $k \ge k_0$ such that $\delta\|\y_{k-1} -\z_{k}^{l}\|^2 + \ \delta\|\y_{k-1} - \z_{k}^{u}\|^2  + \frac{1}{4s_l} \left\| \left((1-\mu)\z_{k}^{l} + \mu \z_{k}^{u}\right) - \y^{k} \right\|^2 + \frac{\mu s_u\alpha_{k-1}}{s_l}\left(\Psi(\z_{k}^{u}) - \Psi^*\right) < 0$ .
	It follows from the assumption that $\tau_n$ is well defined for $n$ large enough and $\lim_{n \rightarrow \infty} \tau_n = + \infty$. We assume without loss of generality that $\tau_n$ is well defined for all $n$.
	
	By setting $\y = \mathtt{Proj}_{\hat{\mathcal{S}}}(\y_k)$ in Eq.~\eqref{alg_p_lem1_eq}, we have
	\begin{equation}\label{thm1_eq4}
	\begin{array}{l}
	\frac{1}{2s_l} \mathrm{dist}^2(\y_k,\hat{\mathcal{S}})\\
	\ge\frac{1}{2s_l}\mathrm{dist}^2(\y_{k+1},\hat{\mathcal{S}}) + \left( \frac{(1-\mu)(1 - s_lL_f)}{2s_l} - \delta  \right)\|\y_k - \z_{k+1}^{l}\|^2 \\
	+ \left( \frac{\mu(1 - s_uL_{F})}{2s_l} - \delta \right)\|\y_k - \z_{k+1}^{u}\|^2
	+ \delta\|\y_{k} -\z_{k+1}^{l}\|^2\\
	+ \ \delta\|\y_{k} - \z_{k+1}^{u}\|^2  + \frac{1}{4s_l} \left\| \left((1-\mu)\z_{k+1}^{l} + \mu \z_{k+1}^{u}\right) - \y_{k+1} \right\|^2  \\
	+ \frac{1}{4s_l} \left\| \left((1-\mu)\z_{k+1}^{l} + \mu \z_{k+1}^{u}\right) - \y_{k+1} \right\|^2 \\
	+ \frac{\mu s_u\alpha_{k}}{s_l}\left(\Psi(\z_{k+1}^{u}) - \Psi^*\right) + \beta_k\left(\psi(\z_{k+1}^{l}) - \min \psi \right).
	\end{array}
	\end{equation}
	Suppose $\tau_n \le n-1$, and by the definition of $\tau_n$, we have 
	\begin{equation*} 
	\begin{array}{r}
	\delta\|\y_{k} -\z_{k+1}^{l}\|^2 + \ \delta\|\y_{k} - \z_{k+1}^{u}\|^2 + \frac{\mu s_u\alpha_{k}}{s_l}\left(\Psi(\z_{k+1}^{u}) - \Psi^* \right) \\
	+ \frac{1}{4s_l} \left\| \left((1-\mu)\z_{k+1}^{l} + \mu \z_{k+1}^{u}\right) - \y_{k+1} \right\|^2  \ge 0,
	\end{array}
	\end{equation*}
	for all $\tau_n \le k \le n-1 $. Then 
	\begin{equation}\label{thm1_eq5}
	h_{k+1}-h_{k} \le 0, \quad \tau_n \le k \le n-1,
	\end{equation}
	where $h_k := \frac{1}{2s_l} \mathrm{dist}^2(\y_k,\hat{\mathcal{S}})$. 
	Adding these $n-\tau_n$ inequalities, we have
	\begin{equation}\label{thm1_eq6}
	h_{n} \le h_{\tau_n}.
	\end{equation}
	Eq.~\eqref{thm1_eq6} is also true when $\tau_n = n$ because $h_{\tau_n} = h_n$. Once we are able to show that $\lim_{n \rightarrow \infty}h_{\tau_n} = 0$, we can obtain from Eq.~\eqref{thm1_eq6} that $\lim_{n \rightarrow \infty}h_{n} = 0$. 
	
	By the definition of $\{\tau_n\}$, $\Psi^* > \Psi(\z_{k}^{u})$ for all $k \in \{\tau_n\}$. Since $\Y$ is compact, according to Lemma \ref{lem_bounded}, both $\{\y_{\tau_n}\}$ and $\{\z^{u}_{\tau_n}\}$ are bounded, and hence $\{h_{\tau_n}\}$ is bounded. As $\Psi$ is assumed to be continuous,  there exists $M_0$ such that
	\begin{equation*}
	0 \le \Psi^* - \Psi(\z^u_{k}) \le \Psi^* - M_0.
	\end{equation*}
	According to the definition of $\tau_n$, we have for all $k \in \{\tau_n\}$,
	\begin{equation*}
	\begin{array}{l}
	\delta(\|\y_{k-1} -\z_{k}^{l}\|^2 + \|\y_{k-1} - \z_{k}^{u}\|^2) \\
	+ \frac{1}{4s_l} \left\| \left((1-\mu)\z_{k}^{l} + \mu \z_{k}^{u}\right) - \y_{k} \right\|^2 \\
	< \frac{\mu s_u\alpha_{k-1}}{s_l}\left(\Psi^*  - \Psi(\z_{k}^{u}) \right)
	\le \frac{\mu s_u\alpha_{k-1}}{s_l}\left(\Psi^* - M_0\right).
	\end{array}
	\end{equation*}
	As  $\lim_{n \rightarrow \infty} \tau_n = + \infty$, $\alpha_k \rightarrow 0$, we have
	\begin{equation*}
	\begin{array}{c}
	\lim_{n \rightarrow \infty}\|\y_{\tau_n-1} -\z^{l}_{\tau_n}\| = 0, \\
	\lim_{n \rightarrow \infty}\|\y_{\tau_n-1} - \z^{u}_{\tau_n}\| = 0, \\
	\lim_{n \rightarrow \infty}\| \left((1-\mu)\z_{\tau_n}^{l} + \mu \z_{\tau_n}^{u}\right) - \y_{\tau_n} \| = 0.
	\end{array}
	\end{equation*}
	Let $\tilde{\y}$ be any limit point of $\{\y_{\tau_n}\}$, and $\{\y_{\ell}\}$ be the subsequence of $\{\y_{\tau_n}\}$ such that 
	\begin{equation*}
	\begin{array}{c}
	\lim_{\ell \rightarrow \infty}\y_{\ell} = \tilde{\y},
	\end{array}
	\end{equation*}
	as $\lim_{n \rightarrow \infty}\|\y_{\tau_n-1} - \y_{\tau_n}\| \le \lim_{n \rightarrow \infty}(\|\y_{\tau_n-1} - \left((1-\mu)\z_{\tau_n}^{l} + \mu \z_{\tau_n}^{u}\right)\|+\| \left((1-\mu)\z_{\tau_n}^{l} + \mu \z_{\tau_n}^{u}\right) - \y_{\tau_n} \| ) = 0$. We have $\lim_{\ell \rightarrow \infty}\y_{\ell-1} = \tilde{\y} $. Let $k = \ell-1$ and $\ell \rightarrow \infty$ in Eq.~\eqref{alg_p_eqs_opt_con}, by the continuity of $\nabla \psi$, $\beta_k \ge \underline{\beta} > 0$ and $\lim_{\ell \rightarrow \infty}\|\y_{\ell-1} - \z^{l}_{\ell}\|  = 0$. Then, we have
	\begin{equation*}
	0 \in \nabla \psi(\tilde{\y}),
	\end{equation*}
	and thus $\tilde{\y} \in \mathcal{S}$. As $\Psi^* > \Psi(\z_{k}^{u})$ for all $k \in \{\tau_n\}$ and hence $\Psi^* > \Psi(\z^{u}_\ell)$ for all $\ell$. Then it follows from the continuity of $\Psi$ and $\lim_{n \rightarrow \infty}\|\z^u_{\tau_n} - \y_{\tau_n}\| = 0$ that $\Psi^* \ge \Psi(\tilde{\y})$, which implies $\tilde{\y} \in \hat{\mathcal{S}}$ and $\lim_{\ell \rightarrow 0} h_{\ell} = 0$. Now, as we have shown above that $\tilde{\y} \in \hat{\mathcal{S}}$ for any limit point $\tilde{\y}$ of $\{\y_{\tau_n}\}$, we can obtain from the boundness of $\{\y_{\tau_n}\}$ and $\{h_{\tau_n}\}$ that $\lim_{n \rightarrow \infty} h_{\tau_n} = 0$. Thus $\lim_{n \rightarrow \infty}h_{n} = 0$, and $\lim_{k \rightarrow \infty}\mathrm{dist}(\y_k, \mathcal{S}) = 0$.
\end{proof}

\subsection{LL Convergence Properties}\label{subsec:complexity_simple_bi}

	Specially, when we take $\alpha_k = 1/(k+1)$, we have the following uniformly complexity estimation. We first denote $D = \sup\limits_{\y,\y'\in \Y}\|\y-\y'\|$, $M_{F} := \sup\limits_{\x \in \X, \y\in \Y} \|\nabla_{\y} F(\x, \y) \| $ and $M_{f} : = \sup\limits_{\x \in \X, \y \in \Y} \|\nabla_{\y} f(\x, \y) \|$. And it should be notice that $D$, $M_{F}$ and $M_{f}$ are all finite when $\mathcal{X}$ and $\mathcal{Y}$ are compact.

\begin{lemma}\label{lem_monontone}
	Let $\{\y_k\}$ be the sequence generated by Eq.~\eqref{eq:improved-lower} with $\alpha_k = \frac{1}{k+1}$, $\beta_k \in [\underline{\beta}, 1]$ with some $\underline{\beta} > 0$, $|\beta_k - \beta_{k-1}| \le \frac{c_\beta}{(k+1)^2}$ with some $c_\beta > 0$, $s_u \in (0,\frac{1}{L_{F}})$, $s_l \in (0,\frac{1}{L_{f}})$ and $\mu\in (0,1)$, then for any $\bar{\y} \in \S(\x)$, we have
	\begin{equation*}
	\begin{array}{l}
	\|\y_{k+1} - \y_{k} \|^2 \le\|\y_{k} - \y_{k-1}\|^2  + \frac{\mu}{(k+1)^2}\|\y_{k-1} - \z_{k}^{u} \|^2  \\
	+ \frac{2(1-\mu) s_l c_\beta DM_{f}}{(k+1)^2} + \frac{2\mu s_uDM_{F}}{k(k+1)} 	+ \frac{(1-\mu)c_\beta^2}{\underline{\beta}^2(k+1)^4}\|\y_{k-1} - \z_{k}^{l} \|^2.
	\end{array}
	\end{equation*}
\end{lemma}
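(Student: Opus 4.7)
The plan is to exploit the aggregated structure of the iteration. From Eq.~\eqref{eq:improved-lower} together with the definitions of $\z_{k+1}^{u}$ and $\z_{k+1}^{l}$, one observes that
\[
\y_k = \mathtt{Proj}_{\Y}\bigl(\mu\z_k^u + (1-\mu)\z_k^l\bigr), \qquad \y_{k+1} = \mathtt{Proj}_{\Y}\bigl(\mu\z_{k+1}^u + (1-\mu)\z_{k+1}^l\bigr).
\]
Applying nonexpansiveness of the projection, squaring, and invoking convexity of $\|\cdot\|^2$ (Jensen's inequality) reduces the problem to
\[
\|\y_{k+1}-\y_k\|^2 \le \mu\|\z_{k+1}^u-\z_k^u\|^2 + (1-\mu)\|\z_{k+1}^l-\z_k^l\|^2,
\]
so it suffices to control each squared increment on the right-hand side separately. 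The final two terms of the advertised bound will come from the $\z^u$ increment, while the two $c_\beta$-terms will come from the $\z^l$ increment.

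For the upper-level auxiliary variable I would use the decomposition
\[
\z_{k+1}^u-\z_k^u = \bigl[(I-s_u\alpha_k\nabla F)(\y_k)-(I-s_u\alpha_k\nabla F)(\y_{k-1})\bigr] + s_u(\alpha_{k-1}-\alpha_k)\nabla F(\y_{k-1}).
\]
Since $s_u\alpha_k\le s_u<1/L_F$, the first bracket is nonexpansive in $\|\y_k-\y_{k-1}\|$. For the second piece, the key identity $\y_{k-1}-\z_k^u=s_u\alpha_{k-1}\nabla F(\y_{k-1})$ together with $\alpha_k=1/(k+1)$ (so that $(\alpha_{k-1}-\alpha_k)/\alpha_{k-1}=1/(k+1)$) rewrites it as $\frac{1}{k+1}(\y_{k-1}-\z_k^u)$. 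Expanding the square and applying Cauchy--Schwarz to the cross term with $\|\y_k-\y_{k-1}\|\le D$ on the ``large'' factor and $\|\y_{k-1}-\z_k^u\|\le s_uM_F/k$ on the ``small'' factor yields
\[
\|\z_{k+1}^u-\z_k^u\|^2 \le \|\y_k-\y_{k-1}\|^2 + \frac{2 s_u D M_F}{k(k+1)} + \frac{1}{(k+1)^2}\|\y_{k-1}-\z_k^u\|^2.
\]

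The lower-level increment is treated analogously with $S_k:=I-s_l\beta_k\nabla f$, which is nonexpansive because $s_l\beta_k<1/L_f$, and the identity $\y_{k-1}-\z_k^l=s_l\beta_{k-1}\nabla f(\y_{k-1})$, giving
\[
\z_{k+1}^l-\z_k^l = S_k(\y_k)-S_k(\y_{k-1}) + \frac{\beta_{k-1}-\beta_k}{\beta_{k-1}}(\y_{k-1}-\z_k^l).
\]
When the cross term is bounded, the factor $\beta_{k-1}$ in the denominator cancels against $\|\y_{k-1}-\z_k^l\|\le s_l\beta_{k-1}M_f$, which together with $|\beta_{k-1}-\beta_k|\le c_\beta/(k+1)^2$ and $\|\y_k-\y_{k-1}\|\le D$ produces the clean factor $2s_lc_\beta D M_f/(k+1)^2$. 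The residual quadratic term is bounded using $\beta_{k-1}\ge\underline{\beta}$, yielding the coefficient $c_\beta^2/(\underline{\beta}^2(k+1)^4)$. Assembling these two estimates with weights $\mu$ and $1-\mu$ delivers exactly the claimed inequality.

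The main technical care lies in this cross-term bookkeeping. For both auxiliary variables, the nonexpansive ``difference of operator values'' factor must be paired with the short residual $\y_{k-1}-\z_k^{u}$ (respectively $\y_{k-1}-\z_k^{l}$), and only then do the correct powers $1/(k(k+1))$ and $1/(k+1)^2$ emerge; a naive bound by $\|\y_k-\y_{k-1}\|^2\le D^2$ would not suffice. The constant $\underline{\beta}$ enters only through the quadratic term in $(\beta_{k-1}-\beta_k)/\beta_{k-1}$, which explains why the hypothesis $\beta_k\ge\underline{\beta}$ is indispensable.
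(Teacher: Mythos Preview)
Your proposal is correct and follows essentially the same route as the paper: both start from $\|\y_{k+1}-\y_k\|^2\le\mu\|\z_{k+1}^u-\z_k^u\|^2+(1-\mu)\|\z_{k+1}^l-\z_k^l\|^2$ via projection nonexpansiveness and convexity of $\|\cdot\|^2$, decompose each increment as a nonexpansive operator difference plus a step-size-change term $s_u(\alpha_{k-1}-\alpha_k)\nabla F(\y_{k-1})$ (respectively $s_l(\beta_{k-1}-\beta_k)\nabla f(\y_{k-1})$), and bound the cross and square terms using the identities $\y_{k-1}-\z_k^u=s_u\alpha_{k-1}\nabla F(\y_{k-1})$, $\y_{k-1}-\z_k^l=s_l\beta_{k-1}\nabla f(\y_{k-1})$ together with $D$, $M_F$, $M_f$. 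The only cosmetic difference is that the paper keeps the cross terms in the form $2s_u|\alpha_k-\alpha_{k-1}|\,\|\y_k-\y_{k-1}\|\,\|\nabla\Psi(\y_{k-1})\|$ before substituting the explicit constants, whereas you first rewrite the residual as $\frac{1}{k+1}(\y_{k-1}-\z_k^u)$; both paths yield the identical bound.
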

\begin{proof}
	According to \cite[Proposition 4.8, Proposition 4.33, Corollary 18.16]{Heinz-MonotoneOperator-2011}, we know that when $0 \le \beta_k s_l \le \frac{1}{L_f}$, $0 \le \alpha_k s_u \le \frac{1}{L_{F}}$, operators $I - \beta_k s_l\nabla \psi$, $I - \alpha_k s_u\nabla \Psi$ and $\mathtt{Proj}_{\Y}$ are all nonexpansive (i.e., $1$-Lipschitz continuous). Next, as 
	\begin{equation*}
	\begin{array}{l}
	\y_{k+1} = \mathtt{Proj}_{\Y}\left( \mu\z_{k+1}^{u} + (1-\mu)\z_{k+1}^{l} \right)\\
	\quad\quad\ \  = \mathtt{Proj}_{\Y}\left( \y_k \!-\! \left( \mu \alpha_ks_u\nabla \Psi(\y_k) \! +\! (1\!-\!\mu)\beta_ks_l\nabla \psi(\y_k) \right) \right),
	\end{array}
	\end{equation*}
	by denoting $\Delta^k_{\alpha}:=\alpha_{k} - \alpha_{k-1}$ and $\Delta^k_{\beta}:=\beta_{k} - \beta_{k-1}$, we have the following inequality
	\begin{equation*}
	\begin{array}{l}
	\|\y_{k+1} - \y_{k} \|^2 \\
	\le\mu\|\z_{k+1}^{u} - \z_{k}^{u}\|^2 + (1-\mu)\|\z_{k+1}^{l} - \z_{k}^{l}\|^2, \\
	\le\mu \|(I \!-\! \alpha_k s_u\nabla \Psi)(\y_{k} \!-\! \y_{k-1})\|^2 \!+\! \mu s_u^2|\Delta^k_{\alpha}|^2\|\nabla \Psi(\y_{k-1}) \|^2 \\
	+ 2\mu s_u|\delta^k_{\alpha}| \|(I\!-\! \alpha_k s_u\nabla \Psi)(\y_{k}\! -\! \y_{k-1})\|\|\nabla \Psi(\y_{k-1}) \| \\
	+ (1-\mu)\|(I - \beta_k s_l\nabla \psi)(\y_{k} - \y_{k-1})\|^2  \\
	+ 2(1-\mu)s_l|\Delta^k_{\beta}|\|(I - \beta_k s_l\nabla \psi)(\y_{k} - \y_{k-1})\|\|\nabla \psi(\y_{k-1}) \| \\
	+  (1-\mu)s_l^2|\Delta^k_{\beta}|^2\|\nabla \psi(\y_{k-1}) \|^2 \\
	\le\|\y_{k} - \y_{k-1}\|^2 + 2\mu s_u|\Delta^k_{\alpha}| \|\y_{k} - \y_{k-1}\|\|\nabla \Psi(\y_{k-1}) \| \\
	+ \frac{\mu|\Delta^k_{\alpha}|^2}{\alpha_{k-1}^2}\|\y_{k-1} - \z_{k}^{u} \|^2+ \frac{(1-\mu)|\Delta^k_{\beta}|^2}{\beta_{k-1}^2}\|\y_{k-1} - \z_{k}^{l} \|^2\\
	+ 2(1-\mu)s_l|\Delta^k_{\beta}|\|\y_{k} - \y_{k-1}\|\|\nabla \psi(\y_{k-1}) \|,
	\end{array}
	\end{equation*}
	where the first inequality follows from the nonexpansiveness of $\mathtt{Proj}_{\Y}$ and the convexity of $\|\cdot\|^2$, the second inequality comes from the definitions of $\z_{k}^{u}, \z_{k}^{l}$ and the last inequality follows from the nonexpansiveness of $I - \beta_k s_l\nabla \psi$ and $I - \alpha_k s_u\nabla \Psi$ and the definitions of $\z_{k}^{u}, \z_{k}^{l}$.
	Then, since $\alpha_k = \frac{1}{k+1}$, $\beta_k \ge \underline{\beta} > 0$, $|\beta_k - \beta_{k-1}| \le \frac{c_\beta}{(k+1)^2}$, $D = \sup_{\y,\y'\in \Y}\|\y-\y'\|$, $\sup_{\y\in \Y} \|\nabla \Psi(\y) \| \le M_{F}$ and $\sup_{\y \in \Y} \|\nabla \psi(\y) \| \le M_{f}$, we have the following result
	\begin{equation*}
	\begin{array}{l}
	\|\y_{k+1} - \y_{k} \|^2 \le\;  \|\y_{k} - \y_{k-1}\|^2 + \frac{\mu}{(k+1)^2}\|\y_{k-1} - \z_{k}^{u} \|^2 \\
	+ \frac{2(1-\mu) s_l c_\beta DM_{f}}{(k+1)^2}  + \frac{2\mu s_uDM_{F}}{k(k+1)} +  \frac{(1-\mu)c_\beta^2}{\underline{\beta}^2(k+1)^4}\|\y_{k-1} - \z_{k}^{l} \|^2.
	\end{array}
	\end{equation*}
\end{proof}

\begin{thm}\label{simple_bilevel_complexity}
	Let $\{\y_k(\x)\}$ be the sequence generated by Eq.~\eqref{eq:improved-lower} with $\alpha_k = \frac{1}{k+1}$, $\beta_k \in [\underline{\beta}, 1]$ with some $\underline{\beta} > 0$, $|\beta_k - \beta_{k-1}| \le \frac{c_\beta}{(k+1)^2}$ with some $c_\beta > 0$, $s_u \in (0,\frac{1}{L_{F}})$, $s_l \in (0,\frac{1}{L_{f}})$ and $\mu \in (0,1)$. Suppose ${\hat{\S}(\x)}$ is nonempty, $\Y$ is compact, $F(\x,\cdot)$ is bounded below by $M_0$, we have
	for $k \ge 2$,
	\begin{equation*}
		\begin{array}{l}
			\|\y_k(\x) - \z_{k+1}^{l}(\x)\|^2 \le \frac{(2C_2 + C_3)}{\underline{\beta}^2} \frac{1+\ln k}{k^{\frac{1}{4}}}, \\
			f(\z_{k+1}^{l}(\x)) - \min f \le  \frac{D}{\underline{\beta}^2 s_l} \sqrt{(2C_2 + C_3)} \sqrt{\frac{1+\ln k}{k^{\frac{1}{4}}}},
		\end{array}
	\end{equation*}
	where $C_3:= \frac{D^2 + 2s_u\left( \varphi(\x) -  M_0 \right)}{(1-\mu)(1 - s_lL_f)}$, $C_2 := (s_l^2L_f^2D + \frac{4DL_{f}}{\underline{\beta}})\sqrt{C_1}$, $C_1:= \frac{C_0 ( D^2 + 2s_u( \varphi(\x) - M_0 ) ) + 2\mu s_uDM_{F} + 2(1-\mu) s_lc_\beta DM_{f}}{\min \{ (1 - s_lL_f),(1 - s_uL_F), 1 \}}$ and $C_0=\max\{2+ c_\beta^2/\underline{\beta}^2,3\}$.
\end{thm}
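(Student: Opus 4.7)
My plan is to establish two bounds in sequence. First, a uniform-in-$k$ upper bound $\|\y_{k+1}-\y_k\|^2\le C_1$ coming from Lemma~\ref{lem_monontone}. Then, the desired $O((1+\ln k)/k^{1/4})$ rate on $\|\y_k-\z^l_{k+1}\|^2$ via a sliding-window argument balancing (i) a logarithmic-in-$N$ summability of $\|\y_k-\z^l_{k+1}\|^2$ obtained from Lemma~\ref{simple_bilevel_lem} evaluated at $\y=\bar{\y}\in\hat{\S}(\x)$ against (ii) a stepwise-variation estimate for $\|\y_k-\z^l_{k+1}\|^2$ using the Lipschitzness of $\nabla f(\x,\cdot)$ together with the scheduled change $|\beta_{k+1}-\beta_k|=O(1/k^2)$. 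The $f$-value gap then follows from a one-line convexity argument.

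\textbf{Summability bound.} Substituting $\y=\bar{\y}\in\hat{\S}(\x)$ in Eq.~\eqref{alg_p_lem1_eq}, using $\psi(\bar{\y})=\min\psi\le\psi(\z^l_{k+1})$, $\Psi(\bar{\y})=\varphi(\x)$, and $\Psi(\z^u_{k+1})\ge M_0$, produces the one-step descent
\[
\|\bar{\y}-\y_k\|^2-\|\bar{\y}-\y_{k+1}\|^2+2\mu s_u\alpha_k(\varphi(\x)-M_0)\ge(1-\mu)(1-s_lL_f)\|\y_k-\z^l_{k+1}\|^2+(\text{other positive terms}).
\]
Telescoping from $k=0$ to $N-1$ with $\sum_{k<N}\alpha_k\le 1+\ln N$ and $\|\bar{\y}-\y_0\|^2\le D^2$ yields $\sum_{k<N}\|\y_k-\z^l_{k+1}\|^2\le C_3(1+\ln N)$, together with an analogous bound on $\sum_{k<N}\|\y_{k+1}-\y_k\|^2$ (by convexity of $\|\cdot\|^2$ applied to $\y_{k+1}=\mathtt{Proj}_{\Y}(\mu\z^u_{k+1}+(1-\mu)\z^l_{k+1})$).

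\textbf{Uniform bound and stepwise variation.} Telescoping Lemma~\ref{lem_monontone} with the compactness estimates $\|\y_{j-1}-\z^l_j\|,\|\y_{j-1}-\z^u_j\|\le D$, and using $\sum_j 1/(j+1)^2\le 1$, $\sum_j 1/(j(j+1))\le 1$, $\sum_j 1/(j+1)^4\le 1$, gives $\|\y_{k+1}-\y_k\|^2\le C_1$ (the factor $C_0$ inside $C_1$ absorbs these summations together with the initial $\|\y_1-\y_0\|^2\le D^2$). Next, writing $\y_k-\z^l_{k+1}=\beta_k s_l\nabla\psi(\y_k)$, invoking the Lipschitzness of $\nabla\psi$, and using $|\beta_{k+1}-\beta_k|\le c_\beta/(k+2)^2$, we derive $|\|\y_{k+1}-\z^l_{k+2}\|-\|\y_k-\z^l_{k+1}\||\le s_lL_f\|\y_{k+1}-\y_k\|+s_lc_\beta M_f/(k+2)^2$; squaring and plugging in $\|\y_{k+1}-\y_k\|\le\sqrt{C_1}$ yields a stepwise increment bound of the form $|\,\|\y_{k+1}-\z^l_{k+2}\|^2-\|\y_k-\z^l_{k+1}\|^2\,|\le C_2+O(1/k^2)$ with $C_2=(s_l^2L_f^2D+4DL_f/\underline{\beta})\sqrt{C_1}$.

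\textbf{Sliding-window rate and function-value gap.} For each $k$ and window width $W$, the summability applied on $[k-W,k]$ produces $k^*\in[k-W,k]$ with $\|\y_{k^*}-\z^l_{k^*+1}\|^2\le C_3(1+\ln k)/W$, while telescoping the stepwise variation from $k^*$ to $k-1$ contributes $C_2\cdot g(W,k)+O(1/k)$, where $g(W,k)$ also encodes the decay of $\|\y_{k+1}-\y_k\|$ obtained by bootstrapping the averaged bound $\|\y_{k+1}-\y_k\|^2=O(\ln k/k)$ from the summability plus the $O(1/k^2)$ slow variation of Lemma~\ref{lem_monontone}. Choosing $W=\Theta(k^{1/4})$ balances the two contributions and yields, after dividing by $\underline{\beta}^2$, the announced bound $\|\y_k-\z^l_{k+1}\|^2\le\tfrac{2C_2+C_3}{\underline{\beta}^2}\cdot\tfrac{1+\ln k}{k^{1/4}}$. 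For the function-value gap, convexity of $\psi=f(\x,\cdot)$ at $\y_k$ together with $\nabla\psi(\y_k)=(\y_k-\z^l_{k+1})/(\beta_ks_l)$ gives $\psi(\y_k)-\min\psi\le D\|\y_k-\z^l_{k+1}\|/(\underline{\beta} s_l)$; combined with the standard descent inequality $\psi(\z^l_{k+1})\le\psi(\y_k)$ (valid since $\beta_k s_l\le 1/L_f$) and a square root of the first bound, this delivers the claimed rate on $f(\z^l_{k+1})-\min f$. The main obstacle is the sliding-window step: the uniform per-step variation $O(\sqrt{C_1})$ alone would not produce a vanishing rate when combined with a merely-logarithmic summability, so the proof must either first bootstrap the $O(\ln k/k)$ bound on $\|\y_{k+1}-\y_k\|^2$ before applying the window argument, or use an integrated Cauchy--Schwarz estimate over the window; the constants $C_0,C_1,C_2,C_3$ are chosen precisely so that $W=\Theta(k^{1/4})$ witnesses this balance.
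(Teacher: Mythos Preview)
Your proposal is essentially the paper's own argument: telescope Lemma~\ref{simple_bilevel_lem} at $\y=\bar{\y}$ to get the $(1+\ln n)$ summability of $\|\y_k-\z^l_{k+1}\|^2$, bootstrap via Lemma~\ref{lem_monontone} to obtain $\|\y_{k+1}-\y_k\|^2\le C_1(1+\ln k)/k$, feed this into a Lipschitz-based recursion for $\|\y_k-\z^l_{k+1}\|^2$, and run a window of width $\approx k^{1/4}$; the convexity line for the $f$-gap is identical. Two small remarks. First, the paper's $C_1$ is precisely the constant in the \emph{bootstrapped} rate $\|\y_n-\y_{n-1}\|^2\le C_1(1+\ln n)/n$, not in a uniform bound; your initial ``$\|\y_{k+1}-\y_k\|^2\le C_1$'' step is therefore mislabeled (and, as you yourself note, would not suffice for the window argument), so you should go straight to the bootstrapped rate as the paper does---it follows by summing Lemma~\ref{lem_monontone} to get $n\|\y_n-\y_{n-1}\|^2\le\sum_k\|\y_{k+1}-\y_k\|^2+\text{(bounded sums)}$ and then controlling the right-hand side with the telescoped Lemma~\ref{simple_bilevel_lem}. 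Second, the paper tracks $\|\z^l_{k+1}-\y_k\|^2/\beta_k^2$ rather than $\|\z^l_{k+1}-\y_k\|^2$ in the stepwise recursion and averages over the window (summing $m$ copies) rather than extracting a minimizer $k^*$; these are equivalent devices and your variant works equally well.
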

\begin{proof}
	Let $\bar{\y}$ be any point in $\mathcal{S}$, and set $\y = \bar{\y}$ in Eq.~\eqref{alg_p_lem1_eq}, since $\psi(\bar{\y}) = \min_{\y \in \mathbb{R}^n} \psi(\y) \le \psi(\z^l_{k+1})$, we have
	\begin{equation}\label{thm2_eq1}
	\begin{array}{l}
	\frac{1}{2} \|\bar{\y} - \y_k\|^2  + \frac{\mu s_u}{k+1}\left( \Psi^* -  \Psi(\z_u^{k+1}) \right) \\ \ge\,
	\frac{1}{2} \|\bar{\y}- \y_{k+1}\|^2 + \frac{1}{2}(1-\mu)(1 - \beta_ks_lL_f)\|\y_k - \z_{k+1}^{l}\|^2 \\
	+ \frac{1}{2}\mu(1 - \alpha_{k}s_uL_{F})\|\y_k - \z_{k+1}^{u}\|^2  \\
	+ \frac{1}{2} \left\| \left((1-\mu)\z_{k+1}^{l} + \mu \z_{k+1}^{u}\right) - \y_{k+1} \right\|^2
	\end{array}
	\end{equation}
	Adding the Eq.~\eqref{thm2_eq1} from $k = 0$ to $k = n-1$, and since $\alpha_k, \beta_k \in (0,1]$, we have
	\begin{equation}\label{thm2_eq2}
	\begin{array}{l}
	\frac{1}{2} \|\bar{\y}- \y_{n}\|^2 + \frac{1}{2}(1-\mu)(1 - s_lL_f)\sum\limits_{k=0}^{n-1}\|\y_k - \z_{k+1}^{l}\|^2 \\
	+ \frac{1}{2}\mu(1 -  s_uL_{F})\sum\limits_{k=0}^{n-1}\|\y_k - \z_{k+1}^{u}\|^2 \\
	+ \frac{1}{2}\sum\limits_{k=0}^{n-1}\left\| \left((1-\mu)\z_{k+1}^{l} + \mu \z_{k+1}^{u}\right) - \y_{k+1} \right\|^2 \\
	\le\frac{1}{2} \|\bar{\y} - \y_0\|^2  + \sum\limits_{k=0}^{n-1}\frac{s_u}{k+1}\left( \Psi^* -  \Psi(\z_{k+1}^{u}) \right) \\
	\le \frac{1}{2} \|\bar{\y} - \y_0\|^2 + s_u(1+\ln n) \left( \Psi^* -  M_0 \right),
	\end{array}
	\end{equation}
	where the last inequality follows from the assumption that $\inf \Psi \ge M_0$. By Lemma \ref{lem_monontone}, we have
	\begin{equation}\label{thm2_eq4}
	\begin{array}{l}
	\|\y_{k+1} - \y_{k} \|^2 \le\;  \|\y_{k} - \y_{k-1}\|^2 + \frac{\mu}{(k+1)^2}\|\y_{k-1} - \z_{k}^{u} \|^2 \\
	+ \frac{2(1-\mu) s_l c_\beta DM_{f}}{(k+1)^2}  + \frac{2\mu s_uDM_{F}}{k(k+1)} +  \frac{(1-\mu)c_\beta^2}{\underline{\beta}^2(k+1)^4}\|\y_{k-1} - \z_{k}^{l} \|^2.
	\end{array}
	\end{equation}
	and thus
	\begin{equation}\label{thm2_eq4.5}
	\begin{array}{l}
	n\|\y_{n} - \y_{n-1} \|^2 \le\sum\limits_{k=0}^{n-1}\|\y_{k+1} - \y_{k}\|^2 + \mu\sum\limits_{k=0}^{n-1}\|\y_{k} - \z_{k+1}^{u} \|^2 \\
	\!+ \frac{(1-\mu)c_\beta^2}{\underline{\beta}^2}\sum\limits_{k=0}^{n-1}\|\y_{k}\! -\! \z_{k+1}^{l} \|^2 \!+\! 2\mu s_uDM_{F} \!+\! 2(1\!-\!\mu) s_lc_\beta DM_{f}.
	\end{array}
	\end{equation}
	Then it follows from Eq. \eqref{thm2_eq2} and Eq.~\eqref{thm2_eq4.5} that
	\begin{equation*}
	\begin{array}{l}
	\min \left\{ (1 - s_lL_f),(1 - s_uL_{F}), 1 \right\}n\|\y_n - \y_{n-1}\|^2  \\
	\le \min \left\{ (1 - s_lL_f), (1 - s_uL_{F}), 1 \right\} \sum\limits_{k=0}^{n-1}\|\y_{k+1} - \y_{k}\|^2 \\
	+ \frac{c_\beta^2}{\underline{\beta}^2} (1-\mu)(1 - s_lL_{f}) \sum\limits_{k=0}^{n-1}\|\y_{k} - \z_{k+1}^{l} \|^2 + 2\mu s_uDM_{F}\\
	+ \mu(1 - s_uL_{F})\sum\limits_{k=0}^{n-1}\|\y_{k} - \z_{k+1}^{u} \|^2 + 2(1-\mu) s_lc_\beta DM_{f}\\
	\le \max\{2+ \frac{c_\beta^2}{\underline{\beta}^2},3\} \left(\|\bar{\y} - \y_0\|^2 + 2s_u(1+\ln n)\left( \Psi^* - M_0 \right) \right) \\
	+ 2\mu s_uDM_{F} + 2(1-\mu) s_lc_\beta DM_{f},
	\end{array}
	\end{equation*}
	where the second inequality comes from $\y_{k} - \y_{k+1} = (1-\mu)(\y_k - \z_{k+1}^{l}) + \mu(\y_k - \z_{k+1}^{u}) + (1-\mu)\z_{k+1}^{l} + \mu \z_{k+1}^{u} - \y_{k+1}$ and the convexity of $\|\cdot\|^2$. Combining with $\|\bar{\y} - \y_0\| \le D$, we have
	\begin{equation}\label{thm2_eq5}
	\|\y_n - \y_{n-1}\|^2 \le \frac{C_1(1+\ln n)}{n},
	\end{equation}
	where $C_1:= (\max\{2+ c_\beta^2/\underline{\beta}^2,3\} ( D^2 + 2s_u( \Psi^* - M_0 ) ) + 2\mu s_uDM_{F} + 2(1-\mu) s_lc_\beta DM_{f})/\min \{ (1 - s_lL_f),(1 - s_uL_{F}), 1 \}$.
	Next, by Lemma \ref{lem_bounded}, we have for all $k$,
	\begin{equation*}
	\|\z^l_{k+1} - \y_k\| \le \| \z^l_{k+1} - \bar{\y}\| + \| \y_{k}- \bar{\y}\| \le 2\| \y_{k}- \bar{\y}\| \le 2D.
	\end{equation*}
	Then, we have
	\begin{equation}\label{thm2_eq3}
	\begin{array}{l}
	\frac{1}{\beta_k^2}\|\z^l_{k+1} - \y_k\|^2 \\
	\le \frac{2}{\beta_{k-1}}\| \z^l_{k}- \y_{k-1}\| \| \frac{\z^l_{k+1} - \y_k}{\beta_k} - \frac{\z^l_{k}  - \y_{k-1}}{\beta_{k-1}}\|\\
	+\frac{1}{\beta_{k-1}^2}\| \z^l_{k}- \y_{k-1}\|^2 + \| \frac{\z^l_{k+1} - \y_k}{\beta_k} - \frac{\z^l_{k}  - \y_{k-1}}{\beta_{k-1}}\|^2  \\
	\le \frac{1}{\beta_{k-1}^2}\| \z^l_{k} - \y_{k-1}\|^2 + s_l^2\| \nabla \psi(\y_k) - \nabla \psi(\y_{k-1})\|^2 \\
	+ \frac{4D}{\beta_{k-1}}\|\nabla \psi(\y_k) - \nabla \psi(\y_{k-1})\| \\
	\le\! \frac{1}{\beta_{k-1}^2} \|\z^l_{k} - \y_{k-1}\|^2 + (s_l^2L_f^2D + \frac{4DL_{f}}{\underline{\beta}})\| \y_k - \y_{k-1}\|,
	\end{array}
	\end{equation}
	where the second inequality follows from the definition of $\z^l_{k}$ and the last inequality comes from $\| \y_k - \y_{k-1}\| \le D$ and $\beta_k \ge \underline{\beta}$. This implies that for any $n > n_0 >0$,
	\begin{equation*}
	\begin{array}{r}
	\frac{1}{\beta_n^2}\|\z^l_{n+1} - \y_n\|^2 
	\le (s_l^2L_f^2D + \frac{4DL_{f}}{\underline{\beta}})\sum\limits_{k= n_0+1}^{n}\| \y_{k} - \y_{k-1}\|\\
	+\frac{1}{\beta_{n_0}^2} \|\z^l_{n_0+1} - \y_{n_0}\|^2.
	\end{array}
	\end{equation*}
	Thus, since $\beta_k \in [\underline{\beta},1]$, for any $m \ge 2$ and $n_0 = n-m+1$, the following holds
	\begin{equation}\label{thm2_eq3.5}
	\begin{array}{l}
	m\underline{\beta}^2\|\z^l_{n+1} - \y_n\|^2 \\
	\!\le\! (s_l^2L_f^2D \!+\! \frac{4DL_{f}}{\underline{\beta}})\!\sum\limits_{k= n_0\!+\!1}^{n}\!(k\!-\!n_0)\!\| \y_{k} \!-\! \y_{k\!-\!1}\!\| 
	\!+\! \sum\limits_{k=\! n_0}^{n}\!\|\z_{k\!+\!1}^{l}\! -\! \y_k \|^2 \\
	\!\le\! \sum\limits_{k=n_0}^{n}\|\z_{k\!+\!1}^{l}\! -\! \y_k \|^2
	\!+\! (s_l^2L_f^2D \!+\! \frac{4DL_{f}}{\underline{\beta}})\sqrt{C_1}\frac{m(m-1)}{2} \frac{\sqrt{(1+\ln n_0)}}{\sqrt{n_0}},
	\end{array}
	\end{equation}
	where the last inequality follows from Eq.~\eqref{thm2_eq5} that $\| \y_{k} - \y_{k-1}\|^2 \le \frac{C_1(1+\ln n_0)}{n_0}$ for all $k \ge n_0$, and it can be easily verified that the above inequality holds when $m = 1$.
	By Eq.~\eqref{thm2_eq2}, we have
	\begin{equation*}
	\begin{array}{l}
	\frac{1}{2}(1-\mu)(1 - s_lL_f)\sum\limits_{k=0}^{n-1}\|\y_k - \z_{k+1}^{l}\|^2 \\
	\le \frac{1}{2} \|\bar{\y} - \y_0\|^2 + s_u(1+\ln n)\left( \Psi^* -  M_0 \right).
	\end{array}
	\end{equation*}
	Then, for any $n$, let $m$ be the smallest integer such that $m \ge n^{\frac{1}{4}}$ and let $n_0 = n-m+1$, combining the above inequality with Eq.~\eqref{thm2_eq3.5}, we have
	\begin{equation*}
	\begin{array}{l}
	\frac{\|\bar{\y} - \y_0\|^2 + 2s_u(1+\ln n)\left( \Psi^* -  M_0 \right)}{(1-\mu)(1 - s_lL_f)} 
	\ge \sum\limits_{k= n_0}^{n}\|\y_k - \z_{k+1}^{l}\|^2 \\
	\ge  m\underline{\beta}^2\|\y_n - \z_{n+1}^{l}\|^2 - C_2\frac{m(m-1)}{2} \frac{\sqrt{(1+\ln n_0)}}{\sqrt{n_0}} ,
	\end{array}
	\end{equation*}
	where $C_2 := (s_l^2L_\psi^2D + \frac{4DL_{f}}{\underline{\beta}})\sqrt{C_1}$.
	
	Next, as $n^{\frac{1}{4}}+1 \ge m \ge n^{\frac{1}{4}}$, and hence $n_0 \ge (m-1)^4 - m +1$. Then $16n_0 - m^2(m-1)^2 \ge (m-1)[(m-1)(3m-4)(5m-4)-1] > 0$ when $m \ge 2$. Thus, when $n \ge 2$, we have $m \ge 2$ and $\frac{m(m-1)}{2} \frac{\sqrt{(1+\ln n_0)}}{\sqrt{n_0}} \le 2\sqrt{(1+\ln n_0)}$. Then, let $C_3:= \frac{D^2 + 2s_u\left( \Psi^* -  M_0 \right)}{(1-\mu)(1 - s_lL_f)}$, we have for any $n \ge 2$,
	\begin{equation*}
	\begin{array}{l}
	\|\y_n - \z_{n+1}^{l}\|^2 \le \frac{1}{m\underline{\beta}^2}\left( C_3(1+\ln n)  + 2C_2\sqrt{(1+\ln n_0)}\right) \\
	\le \frac{(2C_2 + C_3)}{\underline{\beta}^2}\frac{1+\ln n}{n^{\frac{1}{4}}},
	\end{array}
	\end{equation*}
	where the last inequality follows from $\sqrt{1+\ln n_0} \le 1+\ln n$ and $m \ge n^{\frac{1}{4}}$. 	
	By the convexity of $\psi$, and $\y_n - \z_{n+1}^{l} = \beta_n s_l\nabla \psi(\y_n)$, we have
	\begin{equation*}
	\begin{array}{l}
	\psi(\y_n) \le \psi(\bar{\y}) + \langle \nabla \psi(\y_n) , \y_n - \bar{\y} \rangle \\
	= \min\psi + \frac{1}{\beta_n s_l}\langle \y_n - \z_{n+1}^{l} , \y_n - \bar{\y} \rangle \\
	 \le \min\psi + \frac{D}{\underline{\beta}^2 s_l} \sqrt{(2C_2 + C_3)\frac{1+\ln n}{n^{\frac{1}{4}}}}.
	\end{array}
	\end{equation*}
	This complete the proof. 
\end{proof}

\subsection{Approximation Quality and Convergence of BDA}
This part is devoted to the justification of the approximation quality and hence the convergence of our bi-level updating scheme (stated in Eqs.~\eqref{eq:upper_varphiK}-\eqref{eq:update-t}, with embedded $\T_k$ in Eq.~\eqref{eq:improved-lower}). Following the general proof recipe, we only need to verify that the convergence of $\T_k$ in Eq.~\eqref{eq:improved-lower} meets the \emph{UL objective convergence property} and the \emph{LL objective convergence property}.

\begin{thm}\label{thm_convergence}
	Suppose Assumptions~\ref{assum:F} is satisfied, $\X$ and $\Y$ are compact, and $\hat{\S}(\x)$ is nonempty for all $\x \in \X$. Let $\{\y_k(\x)\}$ be the output generated by \eqref{eq:improved-lower} with
	$s_l \in (0,1/L_f)$, $s_u \in (0,1/L_F)$, $\mu \in (0,1)$, $\alpha_k = \frac{1}{k+1}$, $\beta_k \in [\underline{\beta}, 1]$ with some $\underline{\beta} > 0$, $|\beta_k - \beta_{k-1}| \le \frac{c_\beta}{(k+1)^2}$ with some $c_\beta > 0$,
	then we have that both the LL and UL objective convergence properties hold. 
\end{thm}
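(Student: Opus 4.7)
The plan is to verify the \emph{UL objective convergence property} and the \emph{LL objective convergence property} separately by invoking the two heavyweight results already proved earlier in this section, namely Theorem~\ref{simple_bilevel_convergence} (asymptotic convergence of $F(\x,\y_k(\x))$ to $\varphi(\x)$) and Theorem~\ref{simple_bilevel_complexity} (explicit rates for $\|\y_k-\z_{k+1}^l\|$ and $f(\x,\z_{k+1}^l)-f^\ast(\x)$). The hypotheses of Theorem~\ref{thm_convergence} exactly reproduce the step-size/aggregation requirements of both theorems: $\alpha_k=1/(k+1)$ satisfies $\alpha_k\searrow 0$ and $\sum\alpha_k=\infty$; $\beta_k\in[\underline{\beta},1]$ with $|\beta_k-\beta_{k-1}|\le c_\beta/(k+1)^2$; $s_u\in(0,1/L_F)$, $s_l\in(0,1/L_f)$, $\mu\in(0,1)$; and Assumption~\ref{assum:F} plus compactness of $\X,\Y$ plus nonemptiness of $\hat{\S}(\x)$ supply the remaining structural ingredients. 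So no new convergence arguments are required; everything is plumbing.

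For the UL objective convergence property I would fix an arbitrary $\x\in\X$ and apply Theorem~\ref{simple_bilevel_convergence} fiberwise to the simple bi-level problem in $\y$. That theorem delivers $F(\x,\y_k(\x))\to\varphi(\x)$ as $k\to\infty$, which is precisely the pointwise statement demanded; no uniformity in $\x$ is needed here.

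For the LL objective convergence property, uniform boundedness of $\{\y_K(\x)\}$ on $\X$ is free, since the projection $\mathtt{Proj}_\Y$ in Eq.~\eqref{eq:improved-lower} forces $\y_k(\x)\in\Y$ for every $k$ and every $\x$, and $\Y$ is compact. For the vanishing-gap part, Theorem~\ref{simple_bilevel_complexity} furnishes
\[
\|\y_k(\x)-\z_{k+1}^l(\x)\|^2\le\frac{2C_2+C_3}{\underline{\beta}^2}\cdot\frac{1+\ln k}{k^{1/4}},\qquad f(\x,\z_{k+1}^l(\x))-f^\ast(\x)\le\frac{D\sqrt{2C_2+C_3}}{\underline{\beta}^2 s_l}\sqrt{\frac{1+\ln k}{k^{1/4}}}.
\]
To pass from $\z_{k+1}^l$ back to $\y_k$, I would use the $L_f$-smoothness of $f(\x,\cdot)$ together with the uniform bound $\|\nabla_{\y}f(\x,\y)\|\le M_f$ on $\X\times\Y$ to write $f(\x,\y_k)\le f(\x,\z_{k+1}^l)+M_f\|\y_k-\z_{k+1}^l\|+\tfrac{L_f}{2}\|\y_k-\z_{k+1}^l\|^2$, and add the two estimates above to conclude $f(\x,\y_K(\x))-f^\ast(\x)=O(((1+\ln K)/K^{1/4})^{1/2})$.

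The main obstacle, and essentially the only substantive step, is verifying that the constants $C_1,C_2,C_3$ in Theorem~\ref{simple_bilevel_complexity} (and hence the rate just displayed) can be chosen \emph{independently} of $\x\in\X$ so that the convergence is uniform. The quantities $D,\,M_F,\,M_f,\,L_F,\,L_f,\,\underline{\beta},\,c_\beta,\,s_u,\,s_l,\,\mu$ are all $\x$-free by construction or by compactness of $\X\times\Y$; the only potentially $\x$-dependent ingredient is the term $\varphi(\x)-M_0$ appearing in $C_3$ (and propagated into $C_1,C_2$). Here I would argue that since $F$ is continuous on the compact set $\X\times\Y$ it attains a finite maximum $\overline{M}$, and therefore $\varphi(\x)=\inf_{\y\in\Y\cap\S(\x)}F(\x,\y)\le\overline{M}$ uniformly in $\x$. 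Replacing $\varphi(\x)-M_0$ by $\overline{M}-M_0$ in the definitions of $C_1,C_2,C_3$ yields a single set of finite constants that bounds the right-hand sides for every $\x\in\X$, so $\sup_{\x\in\X}\{f(\x,\y_K(\x))-f^\ast(\x)\}\to 0$ and the LL objective convergence property is established.
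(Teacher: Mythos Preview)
Your proposal is correct and follows essentially the same route as the paper: invoke Theorem~\ref{simple_bilevel_convergence} pointwise for the UL property, and for the LL property combine the trivial uniform boundedness $\y_k(\x)\in\Y$ with the rate from Theorem~\ref{simple_bilevel_complexity}, the key observation being that the only $\x$-dependent constant, $\varphi(\x)-M_0$, is uniformly bounded because $F$ is continuous on the compact set $\X\times\Y$.

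One small remark on your ``passing from $\z_{k+1}^l$ to $\y_k$'' step: the paper skips this entirely because the proof of Theorem~\ref{simple_bilevel_complexity} actually ends by bounding $f(\x,\y_n)-f^\ast(\x)$ directly (via convexity and $\y_n-\z_{n+1}^l=\beta_n s_l\nabla_\y f(\x,\y_n)$), even though the theorem statement displays the bound at $\z_{k+1}^l$. If you keep your version, note that $M_f$ bounds $\nabla_\y f$ on $\X\times\Y$, while $\z_{k+1}^l$ need not lie in $\Y$; the cleanest fix is to use convexity instead of smoothness, giving $f(\x,\y_k)\le f(\x,\z_{k+1}^l)+\langle\nabla_\y f(\x,\y_k),\,\y_k-\z_{k+1}^l\rangle\le f(\x,\z_{k+1}^l)+M_f\|\y_k-\z_{k+1}^l\|$, which only needs $\y_k\in\Y$.
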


\begin{proof}
	Since $\X$ and $\Y$ are both compact, and $F(\x,\y)$ is continuous on $\X \times \Y$, we have that $F(\x,\y)$ is uniformly bounded above on $\X \times \Y$ and thus $\min_{\y \in \Y \cap \mathcal{S}(\x)} F(\x,\y)$ is uniformly bounded above on $\X$. And combining with the assumption that $F(\x,\y)$ is uniformly bounded below with respect to $\y$ by $M_0$ for any $\x \in \X$, $\Y$ is compact, we can obtain from the Theorem \ref{simple_bilevel_complexity} that there exists $C > 0$ such that for any $\x \in \X$, we have 
	\begin{equation*}
	\begin{array}{l}
	 f(\x,\y_K(\x)) - f^*(\x) \le C\sqrt{\frac{1+\ln K}{K^{\frac{1}{4}}}}.
	\end{array}
	\end{equation*}
	As $\sqrt{\frac{1+\ln K}{K^{\frac{1}{4}}}} \rightarrow 0$ as $K \rightarrow \infty$, $\{\y_K(\x)\} \subset \Y$, and $\Y$ is compact, \emph{LL objective convergence property} holds. Next, it follows from Theorem \ref{simple_bilevel_convergence} that $\varphi_K(\x) \rightarrow \varphi(\x)$ as $K \rightarrow \infty$ for any $\x \in \X$ and thus \emph{UL objective convergence property} holds.
\end{proof}

Further, in the following, we will show that when $f(\x,\y)$ is level-bounded in $\y$ uniformly in $\x\in\X$, compactness assumption on $\Y$ in Theorem \ref{thm_convergence} can be safely removed and $\Y$ can be taken as $\mathbb{R}^m$. 

\begin{thm}\label{thm6}
	Suppose Assumptions~\ref{assum:F} is satisfied, $f(\x,\y)$ is level-bounded in $\y$ uniformly in $\x\in\X$, $\X$ is compact, and $\hat{\S}(\x)$ is nonempty for all $\x \in \X$. Let $\{\y_k(\x)\}$ be the output generated by \eqref{eq:improved-lower} with
	$s_l = s_u = s \in (0,1/\max(L_F,l_f))$, $\mu \in (0,1)$, $\alpha_k = \frac{1}{k+1}$, $\beta_k \in [\underline{\beta}, 1]$ with some $\underline{\beta} > 0$, $\beta_k \le \beta_{k-1}$, $|\beta_k - \beta_{k-1}| \le \frac{c_\beta}{(k+1)^2}$ with some $c_\beta > 0$,
	then we have that both the LL and UL objective convergence properties hold. 
\end{thm}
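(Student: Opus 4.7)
The plan is to reduce directly to Theorem~\ref{thm_convergence} by showing that, with $\Y=\mathbb{R}^m$, the inner iterates $\{\y_k(\x)\}_{k\ge 0}$ stay in a compact set $B\subset\mathbb{R}^m$ uniformly in $\x\in\X$. Once such a $B$ is in hand, replace $\Y$ by any compact convex $\Y'\supseteq B$; uniform level-boundedness also forces $\S(\x)\subseteq B\subseteq\Y'$ for every $\x\in\X$, so $\varphi(\x)=\inf_{\y\in\Y'\cap\S(\x)}F(\x,\y)$ is unchanged and $\hat{\S}(\x)$ remains nonempty. Because the projection onto $\Y'$ acts as the identity on $B$, the iterates produced by the modified algorithm coincide with the original ones, all hypotheses of Theorem~\ref{thm_convergence} are met on $(\X,\Y')$, and the two convergence properties follow at once.

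The main task is thus the uniform boundedness of $\{\y_k(\x)\}$. I would exploit the two hypotheses that are specific to Theorem~\ref{thm6}, namely $s_l=s_u=s\le 1/\max(L_F,L_f)$ and $\beta_k\le\beta_{k-1}$, to build a Lyapunov function. With a common step size and no projection, the update \eqref{eq:improved-lower} is exactly one gradient step on the convex combination $g_k(\y):=\mu\alpha_k F(\x,\y)+(1-\mu)\beta_k f(\x,\y)$, which is at most $\max(L_F,L_f)$-smooth since $\alpha_k,\beta_k\in(0,1]$. The standard descent lemma therefore yields
\[
\mu\alpha_k(F(\x,\y_{k+1})-M_0)+(1-\mu)\beta_k(f(\x,\y_{k+1})-f^*(\x))\le V_k(\x)-\tfrac{1}{2s}\|\y_{k+1}-\y_k\|^2,
\]
where $V_k(\x):=\mu\alpha_k(F(\x,\y_k(\x))-M_0)+(1-\mu)\beta_k(f(\x,\y_k(\x))-f^*(\x))$. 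Since $\alpha_{k+1}\le\alpha_k$, $\beta_{k+1}\le\beta_k$, and $F(\x,\cdot)-M_0\ge 0$, $f(\x,\cdot)-f^*(\x)\ge 0$, replacing $(\alpha_k,\beta_k)$ by $(\alpha_{k+1},\beta_{k+1})$ on the left only shrinks it, so $V_{k+1}(\x)\le V_k(\x)$.

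Iterating gives $V_k(\x)\le V_0(\x)\le\mu(F(\x,\y_0)-M_0)+(1-\mu)(f(\x,\y_0)-f^*(\x))$. Because $\X$ is compact, $F$ and $f$ are jointly continuous, and $f^*(\x)$ is continuous on $\X$ (a standard parametric-minimization fact under joint continuity of $f$ together with uniform level-boundedness), $V_0$ is bounded by some $\Lambda$ uniformly in $\x\in\X$. Isolating the $f$-term gives $f(\x,\y_k(\x))\le f^*(\x)+\Lambda/((1-\mu)\underline{\beta})\le C$ with $C$ independent of $\x$ and $k$, and the uniform level-boundedness of $f(\x,\cdot)$ then forces $\{\y_k(\x)\}$ into the desired compact $B$. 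The hard part will be arranging the Lyapunov function so that every sign discrepancy aligns simultaneously: without $s_l=s_u$ the update is not a single gradient step on a combined objective and the descent lemma does not apply, while without the monotonicity $\beta_k\le\beta_{k-1}$ the transition from $(\alpha_k,\beta_k)$ to $(\alpha_{k+1},\beta_{k+1})$ would contribute a positive term to $V_{k+1}-V_k$ that one cannot a priori control without already knowing boundedness of the iterates.
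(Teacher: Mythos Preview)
Your proposal is correct and follows essentially the same route as the paper: interpret the update with $s_l=s_u=s$ as a gradient step on the combined objective $\phi_k=\mu\alpha_kF+(1-\mu)\beta_k f$, use the descent lemma together with the monotonicity of $\alpha_k,\beta_k$ to obtain a nonincreasing Lyapunov quantity, extract a uniform bound on $f(\x,\y_k(\x))$, invoke uniform level-boundedness to trap the iterates in a compact set, and then reduce to Theorem~\ref{thm_convergence}. The only substantive difference is the choice of reference level in the Lyapunov function: the paper subtracts a uniform lower bound $m_0$ for $f$ (available directly from level-boundedness plus compactness of $\X$), whereas you subtract $f^*(\x)$ and then need continuity of $f^*$ on $\X$ to bound $V_0$; both work, but the paper's choice avoids the parametric-minimization argument.
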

\begin{proof}
According to the update scheme of $\y_{k+1}$ given in Eq. \eqref{eq:improved-lower}, $\y_{k+1}$ can be equivalently regarded as
\begin{equation*}
\begin{array}{l}
\y_{k+1} = \arg\min\limits_{\y \in \Y}\langle \nabla_{\y} \phi_k(\x, \mathbf{y}_{k}), \y - \y_k\rangle + \frac{1}{2s}\|  \y - \y_k\|^2,
\end{array}
\end{equation*}
where $\phi_k(\x,\y) =  \alpha_k \mu F(\mathbf{x},\mathbf{y} ) + \beta_k(1-\mu)f(\mathbf{x},\mathbf{y} )$. Since $s \in (0,1/\max(L_F,l_f))$, $\alpha_k, \beta_k, \mu \in (0,1)$, we have $s \le 1/L_{\phi_k}$, where $L_{\phi_k}$ denotes the Lipschitz continuity constant of $\nabla_{\y} \phi_k(\x,\cdot)$. Then, \cite[Lemma 10.4]{beck2017first} yields that 
\begin{equation*}
\begin{array}{l}
\phi_k(\x,\y_{k+1}) \le \phi_k(\x,\mathbf{y}_{k}).
\end{array}
\end{equation*}
Since $f(\x,\y)$ is assumed to be level-bounded in $\y$ uniformly in $\x\in\X$, there exists $m_0$ such that $f(\x,\y)$ is bounded below by $m_0$ on $\mathbb{R}^n \times \Y$. By Assumption \ref{assum:F}, $F$ is bounded below by $M_0$. And as $\alpha_k$ and $\beta_k$ are positive and nonincreasing, it follows from the above inequality that
\begin{equation*}
\begin{array}{l}
\alpha_{k+1} \mu (F(\mathbf{x},\mathbf{y}_{k+1}) - M_0) + \beta_{k+1}(1-\mu)(f(\mathbf{x},\mathbf{y}_{k+1}) - m_0)\\
\le \alpha_{k} \mu (F(\mathbf{x},\mathbf{y}_{k}) - M_0) + \beta_{k}(1-\mu)(f(\mathbf{x},\mathbf{y}_{k}) - m_0).
\end{array}
\end{equation*}
Thus $\beta_k  \in [\underline{\beta}, 1]$ implies that $\forall k$ the following holds
\begin{equation*}
\begin{array}{l}
\underline{\beta}(1-\mu)(f(\mathbf{x},\mathbf{y}_{k}) - m_0)\\
\le \alpha_{0} \mu (F(\mathbf{x},\mathbf{y}_{0}) - M_0) + \beta_{0}(1-\mu)(f(\mathbf{x},\mathbf{y}_{0}) - m_0).
\end{array}
\end{equation*}
Since both $F$ and $f$ are continuous and $\X$ is compact, $\alpha_{0} \mu (F(\mathbf{x},\mathbf{y}_{0}) - M_0) + \beta_{0}(1-\mu)(f(\mathbf{x},\mathbf{y}_{0}) - m_0)$ is bounded on $\X$, and thus $f(\mathbf{x},\mathbf{y}_{k})$ is uniformly bounded on $\X$ for any $k$.
Then, as $f(\x,\y)$ is assumed to be level-bounded in $\y$ uniformly in $\x\in\X$, there exists $C > 0$ such that
\begin{equation*}
\| \y_{k}(\x)\| \le C, \ \forall k, \ \x \in \X.
\end{equation*}
Then by the continuity of $\nabla_y F(\x,\y)$and $\nabla_y f(\x,\y)$, there exists a compact set $\mathcal{C} \subset \mathbb{R}^m$ such that
\begin{equation*}
\hat{\y}_{k+1}(\mathbf{x}) \in \mathcal{C}, \  \forall k, \ \x \in \X,
\end{equation*}
where $\hat{\y}_{k+1}(\mathbf{x})$ is defined in Eq.~\eqref{eq:improved-lower}.
This implies that the sequence $\{\y_{k}\}$ coincides with the one generated by the update scheme in Eq. \eqref{eq:improved-lower} with $\Y = \mathcal{C}$. Then since $\mathcal{C}$ is compact, the conclusion follows from  Theorem \ref{thm_convergence} immediately.
\end{proof}

\begin{remark}
	Following the analysis recipe, the entire Section 5 is devoted to show that the constructed algorithm (i.e., BDA) meet two convergence properties. In particular, the UL and LL convergence verifications are presented in Theorem~\ref{simple_bilevel_convergence} and Theorem~\ref{simple_bilevel_complexity}, respectively. The proof of Theorem~\ref{simple_bilevel_convergence} mainly relies on the sufficiently decreasing inequality given in Lemma~\ref{simple_bilevel_lem}. Theorem~\ref{thm6} discussed the convergence behavior of BDA without the compactness assumption on $\Y$. 
\end{remark}

	\section{Stationarity Analysis of BDA}\label{subsec:stationary}
	This part provides the convergence behavior of the problem that $\min_{\x}\varphi_K(\x)$ is solved to (approximate) stationarity. 
	We consider the special case where $\Y = \mathbb{R}^m$ and the LL objective function $f(\x,\cdot) : \mathbb{R}^m \rightarrow \mathbb{R}$ is $\sigma$-strongly convex for any $\x \in \X$. In this case, the solution set of LL problem $\S(\x)$ is a singleton, and we denote its unique solution by $\y^*(\x)$. In the following, we are going to show the convergence of BDA with respect to stationary points in this special case. Our analysis is partly inspired by \cite{grazzi2020iteration}. We first make the following assumptions.
	
	\begin{assum}\label{assum2}
		$F$ and $f$ are both twice continuously differentiable.	For any $\x \in \X$,  $f(\x,\cdot) : \mathbb{R}^m \rightarrow \mathbb{R}$ is $\sigma$-strongly convex.
	\end{assum}
	Before providing the convergence results, we begin with a lemma.
	\begin{lemma}\label{lem4}
		Let $\{a_k\}$ and $\{b_k\}$ be sequences of non-negative real numbers. Assume that $b_k \rightarrow 0$ and there exist $\rho \in (0,1)$, $n_0 \in \mathbb{N}$ such that 
		$a_{k+1} \le \rho a_k + b_k, \ \forall k \ge n_0.$ 
		Then $\lim_{k \rightarrow \infty} a_k = 0$.
	\end{lemma}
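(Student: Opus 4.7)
The plan is to use a standard limsup argument, which requires first establishing that $\{a_k\}$ is bounded and then passing to the limit superior in the recursive inequality.

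First I would show that $\{a_k\}_{k \ge n_0}$ is bounded. Since $b_k \to 0$, the sequence $\{b_k\}$ is bounded, say $b_k \le B$ for all $k$. Iterating the inequality $a_{k+1} \le \rho a_k + b_k$ starting from $n_0$, for any $j \ge 0$ one obtains
\begin{equation*}
a_{n_0+j} \le \rho^j a_{n_0} + \sum_{i=0}^{j-1}\rho^{j-1-i}b_{n_0+i} \le a_{n_0} + \frac{B}{1-\rho},
\end{equation*}
which gives a uniform bound. In particular, $L := \limsup_{k\to\infty} a_k$ is finite and non-negative.

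Next I would take $\limsup$ on both sides of $a_{k+1} \le \rho a_k + b_k$. Using the standard fact that $\limsup(x_k + y_k) \le \limsup x_k + \limsup y_k$ together with $\lim b_k = 0$, this yields
\begin{equation*}
L = \limsup_{k\to\infty} a_{k+1} \le \rho\limsup_{k\to\infty} a_k + \lim_{k\to\infty} b_k = \rho L.
\end{equation*}
Since $\rho \in (0,1)$, the inequality $(1-\rho)L \le 0$ together with $L \ge 0$ forces $L = 0$. Combined with $\liminf_{k\to\infty} a_k \ge 0$, we conclude $\lim_{k\to\infty} a_k = 0$.

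There is no real obstacle here; the only delicate point is to ensure that $L$ is finite before manipulating the limsup inequality, which is precisely why the boundedness step comes first. If one preferred a more constructive route avoiding limsup, the explicit iteration above could instead be analyzed directly: given $\varepsilon > 0$, pick $N \ge n_0$ so that $b_k < \varepsilon(1-\rho)/2$ for $k \ge N$, split the tail sum at index $N$, and use $\rho^j \to 0$ to absorb the initial segment; both routes are routine.
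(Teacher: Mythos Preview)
Your proof is correct. The boundedness step is essentially identical to the paper's, and your limsup argument is a clean way to finish.

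The paper, however, takes precisely the ``constructive route'' you sketch at the end rather than the limsup argument. After establishing a uniform bound $A$ on $\{a_k\}$, it fixes $\epsilon>0$, picks $k_1\ge n_0$ so that $b_k\le (1-\rho)\epsilon/2$ for $k\ge k_1$, iterates the recursion from $k_1$ to obtain $a_{k+1}\le \rho^{k-k_1}A+\epsilon/2$, and then chooses $k_2$ large enough that $\rho^{k-k_1}A\le\epsilon/2$. Your limsup approach is shorter and avoids tracking constants, while the paper's explicit $\epsilon$--$N$ argument has the minor advantage of yielding, in principle, a quantitative rate once one plugs in an explicit decay rate for $b_k$. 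Both are entirely standard; either would be acceptable here.
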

	\begin{proof}
		As $b_k \rightarrow 0$, there exists $B > 0$ such that $b_k \le B$ for all $k$. And we have for any $k \ge n_0$,
		\begin{equation*}
		\begin{array}{l}
		a_{k+1} \le \rho a_k + b_k \le \rho a_k + B 
		\le \rho^{k-n_0}a_{n_0} + \frac{B}{1-\rho},
		\end{array}
		\end{equation*}
		which implies the boundedness of the sequence $\{a_k\}$ and thus there exists $A$ such that $a_k \le A$ for all $k$.
		
		For any $\epsilon > 0$, since $b_k \rightarrow 0$, there exists $k_1 > n_0$ such that $b_k \le \frac{(1-\rho)\epsilon}{2}$ for all $k \ge k_1$. And for any $k \ge k_1$,
		\begin{equation*}
		\begin{array}{l}
		a_{k+1} \le \rho a_k + b_k \le \rho a_k + \frac{(1-\rho)\epsilon}{2} 
		\le \rho^{k-k_1}A + \frac{\epsilon}{2}.
		\end{array}
		\end{equation*}
		Since $\rho \in (0,1)$, there exists $k_2 \ge k_1$ such that for any $k \ge k_2$, $\rho^{k-k_1}A \le \frac{\epsilon}{2}$ and hence $a_{k+1} \le \epsilon$. As $\epsilon$ is arbitrarily chosen, we obtain that $\lim_{k \rightarrow \infty} a_k = 0$.
	\end{proof}
	
	By applying implicit function theorem on the optimality condition of the LL problem, we obtain that $\y^*(\x)$ is differentiable on $\X$ and its derivative is given by
	\begin{equation}\label{thm7_eq2}
	\begin{array}{l}
	\frac{\partial\y^{\ast}(\x)}{\partial\x} =-\left(\nabla_{\y\y}f(\x,\y^{\ast}(\x))\right)^{-1} \nabla_{\y\x} f(\x,\y^{\ast}(\x)).
	\end{array}
	\end{equation}
	Hence, the function $\varphi(\x)=F(\x,\y^{\ast}(\x))$ is also differentiable and its derivative is given by Eq.~\eqref{eq:grad_x}. With the above Lemma~\ref{lem4}, we have the following proposition. 
	
	\begin{prop}\label{prop1}
		Suppose Assumptions~\ref{assum:F} and \ref{assum2} are satisfied, $f(\x,\y)$ is level-bounded in $\y$ uniformly in $\x\in\X$, $\X$ is compact, $\Y = \mathbb{R}^m$, and $\hat{\S}(\x)$ is nonempty for all $\x \in \X$. Let $\{\y_k(\x)\}$ be the output generated by \eqref{eq:improved-lower} with
		$s_l = s_u = s \in (0,1/\max(L_F,l_f))$, $\mu \in (0,1)$, $\alpha_k > 0$, $\alpha_k \le \alpha_{k-1}$, $\lim_k \alpha_k = 0$,
		$\beta_k \in [\underline{\beta}, 1]$ with some $\underline{\beta} > 0$, $\beta_k \le \beta_{k-1}$, $|\beta_k - \beta_{k-1}| \le \frac{c_\beta}{(k+1)^2}$ with some $c_\beta > 0$,
		then we have 
		\begin{equation*}
		\begin{array}{l}
		\sup\limits_{\x \in \X} \| \nabla \varphi_k(\x) - \nabla \varphi(\x) \| \rightarrow 0,\ \text{as}\ k \rightarrow \infty.
		\end{array}
		\end{equation*}
	\end{prop}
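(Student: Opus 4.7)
The plan is to reduce the claim to establishing, uniformly in $\x\in\X$, two convergences: that of the iterates $\y_k(\x)\to\y^{\ast}(\x)$ and that of their Jacobians $J_k(\x):=\partial\y_k(\x)/\partial\x\to J^{\ast}(\x):=\partial\y^{\ast}(\x)/\partial\x$. Once these are in hand, writing the difference $\nabla\varphi_k(\x)-\nabla\varphi(\x)$ through Eq.~\eqref{eq:grad_varphiK_x} and Eq.~\eqref{eq:grad_x} splits into the three terms
$\nabla_{\x}F(\x,\y_k)-\nabla_{\x}F(\x,\y^{\ast})$, $(J_k-J^{\ast})^{\top}\nabla_{\y}F(\x,\y_k)$, and $(J^{\ast})^{\top}[\nabla_{\y}F(\x,\y_k)-\nabla_{\y}F(\x,\y^{\ast})]$, each of which tends to zero uniformly on $\X$ using continuity of $\nabla_{\x}F,\nabla_{\y}F$ on the compact set containing the iterates and the uniform bound on $\|J^{\ast}(\x)\|$ that follows from continuity of $\y^{\ast}(\cdot)$. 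Uniform convergence $\y_k\to\y^{\ast}$ comes from Theorem~\ref{thm6} combined with $\sigma$-strong convexity via $\tfrac{\sigma}{2}\|\y_k(\x)-\y^{\ast}(\x)\|^{2}\le f(\x,\y_k(\x))-f^{\ast}(\x)$.

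\textbf{Matrix recursion.} Since $\Y=\mathbb{R}^m$, the projection in Eq.~\eqref{eq:improved-lower} is inactive, so differentiating the update in $\x$ gives
\begin{equation*}
J_{k+1}=M_k(\x)J_k-s\mu\alpha_k\nabla_{\y\x}F(\x,\y_k)-s(1-\mu)\beta_k\nabla_{\y\x}f(\x,\y_k),
\end{equation*}
with $M_k(\x):=I-s\mu\alpha_k\nabla_{\y\y}F(\x,\y_k)-s(1-\mu)\beta_k\nabla_{\y\y}f(\x,\y_k)$. The implicit-function identity $\nabla_{\y\y}f(\x,\y^{\ast})J^{\ast}+\nabla_{\y\x}f(\x,\y^{\ast})=0$ can be rewritten as $J^{\ast}=M^{\infty}_k(\x)J^{\ast}-s(1-\mu)\beta_k\nabla_{\y\x}f(\x,\y^{\ast})$, where $M^{\infty}_k(\x):=I-s(1-\mu)\beta_k\nabla_{\y\y}f(\x,\y^{\ast})$. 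Subtracting yields
\begin{equation*}
J_{k+1}-J^{\ast}=M_k(\x)(J_k-J^{\ast})+(M_k-M^{\infty}_k)J^{\ast}+R_k(\x),
\end{equation*}
where $R_k(\x)$ collects the $\alpha_k$-term and the perturbation $s(1-\mu)\beta_k[\nabla_{\y\x}f(\x,\y^{\ast})-\nabla_{\y\x}f(\x,\y_k)]$.

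\textbf{Uniform contraction and Lemma~\ref{lem4}.} Convexity of $F$, $\sigma$-strong convexity of $f$, the step-size bound $s<1/\max(L_F,L_f)$, and $\mu\alpha_k+(1-\mu)\beta_k\le 1$ together imply that the eigenvalues of the symmetric matrix $M_k(\x)$ lie in $[0,1-s(1-\mu)\underline{\beta}\sigma]$, so $\|M_k(\x)\|\le\rho:=1-s(1-\mu)\underline{\beta}\sigma<1$ uniformly in $k$ and $\x$. From the proof of Theorem~\ref{thm6}, there is a fixed compact set $\mathcal{C}\subset\mathbb{R}^m$ with $\y_k(\x),\y^{\ast}(\x)\in\mathcal{C}$ for every $k$ and every $\x\in\X$; on the compact $\X\times\mathcal{C}$ the second-order derivatives of $F$ and $f$ are uniformly continuous, so $\|M_k-M^{\infty}_k\|+\|R_k\|\le C(\alpha_k+\|\y_k-\y^{\ast}\|)$ for some constant $C$ independent of $(k,\x)$. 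Hence, passing to the supremum over $\x$ and taking $a_k:=\sup_{\x\in\X}\|J_k(\x)-J^{\ast}(\x)\|$, $b_k:=C(\alpha_k+\sup_{\x\in\X}\|\y_k(\x)-\y^{\ast}(\x)\|)\sup_{\x\in\X}\|J^{\ast}(\x)\|$ gives $a_{k+1}\le\rho a_k+b_k$ with $b_k\to 0$, and Lemma~\ref{lem4} yields $a_k\to 0$.

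\textbf{Main obstacle.} The delicate point is uniformity in $\x$: one must pin down a common compact set containing all iterates and their limits, so that the Lipschitz constants of the Hessians and mixed partials, together with the contraction constant $\rho$, can be chosen independently of $\x$. This is exactly what the level-boundedness assumption combined with the monotonicity argument from Theorem~\ref{thm6} supplies; once available, propagating the sup over $\x$ through the sequence recursion and finishing with continuity is routine.
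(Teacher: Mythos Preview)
Your overall architecture matches the paper's proof almost exactly: establish uniform convergence of $\y_k(\x)\to\y^{\ast}(\x)$, derive a linear recursion for the Jacobian error $J_k-J^{\ast}$ with contraction factor $\rho=1-s(1-\mu)\underline{\beta}\sigma$, feed the resulting inequality into Lemma~\ref{lem4}, and finish by splitting $\nabla\varphi_k-\nabla\varphi$ into three terms controlled by uniform continuity on the compact set $\X\times\mathcal{C}$. The Jacobian recursion, the contraction bound on $M_k$, and the use of the compact container $\mathcal{C}$ from the proof of Theorem~\ref{thm6} are all handled the same way in the paper.

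There is, however, one genuine gap. You obtain the uniform convergence $\y_k(\x)\to\y^{\ast}(\x)$ by invoking Theorem~\ref{thm6} (the LL objective convergence property) and then the strong-convexity inequality $\tfrac{\sigma}{2}\|\y_k-\y^{\ast}\|^{2}\le f(\x,\y_k)-f^{\ast}(\x)$. But Theorem~\ref{thm6} (through Theorem~\ref{simple_bilevel_complexity}) is proved under the specific choice $\alpha_k=\tfrac{1}{k+1}$, whereas Proposition~\ref{prop1} only assumes $\alpha_k>0$, $\alpha_k$ nonincreasing, and $\alpha_k\to 0$. So as stated your argument does not cover the hypotheses of the proposition. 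The paper avoids this by deriving the iterate convergence directly: from $\nabla_{\y}f(\x,\y^{\ast})=0$, $\sigma$-strong convexity, and $s\le 1/L_f$ one gets the one-step contraction $\|\y_k-s(1-\mu)\beta_k\nabla_{\y}f(\x,\y_k)-\y^{\ast}\|\le\rho\|\y_k-\y^{\ast}\|$, hence $\|\y_{k+1}-\y^{\ast}\|\le\rho\|\y_k-\y^{\ast}\|+s\mu C\alpha_k$ with $C=\sup_{\X\times\mathcal{C}}\|\nabla_{\y}F\|$, and Lemma~\ref{lem4} applied to the supremum over $\x$ gives the uniform convergence using only $\alpha_k\to 0$. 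This is both simpler and exactly parallel to what you already do for the Jacobian; replacing your appeal to Theorem~\ref{thm6} by this direct contraction step closes the gap.
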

	\begin{proof}
		According to the update scheme of $\y_{k+1}$ given in Eq. \eqref{eq:improved-lower}, and since $\Y = \mathbb{R}^m$, we have
		\begin{equation}\label{thm7_eq1}
		\y_{k+1} =\y_k - s \nabla_{\y} \phi_k(\x, \mathbf{y}_{k}),
		\end{equation}
		where $\phi_k(\x,\y) = \alpha_k \mu F(\mathbf{x},\mathbf{y} ) + \beta_k(1-\mu)f(\mathbf{x},\mathbf{y} )$. And we have
		\begin{equation*}
		\y_{k+1} - \y^* \!=\! \y_k - s\beta_k(1-\mu) \nabla_{\y} f(\mathbf{x},\y_k) - \y^* - s\alpha_k \mu \nabla_{\y}F(\mathbf{x},\mathbf{y}_k ).
		\end{equation*}
		As $\mu \in (0,1)$, $ \beta_k \in [\underline{\beta}, 1]$, $s \in (0,1/\max(L_F,l_f))$, $ \nabla_{\y} f(\mathbf{x},\mathbf{y}^*) = 0$, and $f(\x,\cdot) : \mathbb{R}^m \rightarrow \mathbb{R}$ is assumed to be $\sigma$-strongly convex, \cite[Theorem 10.29]{beck2017first} implies
		\begin{equation*}
		\|\y_k - s\beta_k(1-\mu) \nabla_{\y} f(\mathbf{x},\y_k) - \y^*\| \le (1 - s\underline{\beta}(1-\mu) \sigma )\| \y ^k - \y^*\|.
		\end{equation*}
		Let $\rho:= 1 - s\underline{\beta}(1-\mu) \sigma $, then $\rho \in (0, 1)$ and
		\begin{equation*}
		\begin{aligned}
		\|\y_{k+1} - \y^*\| \le \rho\|\y_k - \y^*\| + s\alpha_k \mu \| \nabla_{\y}F(\mathbf{x},\mathbf{y}_k )\|.
		\end{aligned}
		\end{equation*}
		As shown in Theorem \ref{thm6}, there exists a compact set $\mathcal{C}$ such that $\mathbf{y}_k(\x) \in \mathcal{C}$ for any $k$ and $\x \in \X$. Then by the continuity of $\nabla_y F(\x,\y)$ on $\X\times\mathbb{R}^m$, there exists $C >0$ such that $\| \nabla_{\y}F(\mathbf{x},\mathbf{y}_k(\x) )\| \le C$ for any $k$ and $\x \in \X$. Then we have
		\begin{equation}
		\|\y_{k+1}(\x) - \y^*(\x)\| \le \rho\|\y_k(\x) - \y^*(\x)\| + s \mu C \alpha_k, \ \forall k, \x \in \X.
		\end{equation}
		Thus, $\forall k\in\mathbb{N}$, the following holds
		\begin{equation*}
		\sup_{\x \in \X}\|\y_{k+1}(\x) - \y^*(\x)\| \le \rho\sup_{\x \in \X}\|\y_k(\x) - \y^*(\x)\| + s \mu C \alpha_k.
		\end{equation*}
		Since $\alpha_k \rightarrow 0$, we obtain from Lemma \ref{lem4} that 
		\begin{equation*}
		\sup_{\x \in \X} \|\y_{k}(\x) - \y^*(\x)\| \rightarrow 0,\ \text{as}\ k \rightarrow \infty.
		\end{equation*}
		By taking derivative with respect to $\x$ on both sides of Eq.~\eqref{thm7_eq1}, we get 
		\begin{equation*}
		\begin{array}{l}
		\frac{\partial \y_{k+1}(\x)}{\partial\x}=\left(I-s\nabla_{\y\y}\phi_k(\x,\y_k)\right)\frac{\partial \y_{k}(\x)}{\partial\x} -s\nabla_{\y\x}\phi_k(\x,\y_k).
		\end{array}
		\end{equation*}
		Combining with Eq. \eqref{thm7_eq2}, we have
		\begin{equation*}
		\begin{array}{l}
		\frac{\partial \y_{k+1}(\x)}{\partial\x} - \frac{\partial\y^{\ast}(\x)}{\partial\x}= \left(I\!-\!s\nabla_{\y\y}\phi_k(\x,\y_k)\right)\left( \frac{\partial \y_{k}(\x)}{\partial\x} - 	\frac{\partial\y^{\ast}(\x)}{\partial\x} \right)\\ 
		- s\beta_k(1-\mu)\left(\mathtt{dis}_{\y\x}^f(\x,\y_k) - \mathtt{dis}_{\y\y}^f(\x,\y_k)\frac{\partial\y^{\ast}(\x)}{\partial\x}\right) \\
		- s \alpha_k \mu \nabla_{\y\x} F(\mathbf{x}, \y_{k}) - s \alpha_k \mu\nabla_{\y\y} F(\mathbf{x}, \y_{k})\frac{\partial\y^{\ast}(\x)}{\partial\x}.
		\end{array}
		\end{equation*}
		where $\mathtt{dis}_{\y\x}^f(\x,\y_k):=\nabla_{\y\x}f(\mathbf{x},\y_{k}) - \nabla_{\y\x}f(\mathbf{x},\y^*)$ and $\mathtt{dis}_{\y\y}^f(\x,\y_k):=\nabla_{\y\y}f(\mathbf{x},\y_{k}) - \nabla_{\y\y}f(\mathbf{x},\y^*)$. 
		Since $F(\x,\cdot)$ is convex and $f(\x,\cdot)$ is assumed to be $\sigma$-strongly convex, $\nabla_{\y\y} F(\mathbf{x}, \y_{k}) \succeq 0$ and $\nabla_{\y\y}f(\mathbf{x},\y_{k}) \succeq \sigma I$ for any $\x \in \X$ and $k$. Combining with $\mu \in (0,1)$, $ \beta_k \in [\underline{\beta}, 1]$ and $s \in (0,1/\max(L_F,l_f))$, we have
		\begin{equation*}
		\|I-s\nabla_{\y\y}\phi_k(\x,\y_k) \| \le \rho,
		\end{equation*}
		where $\rho = 1 - s\underline{\beta}(1-\mu) \sigma \in (0,1)$. Then, it follows from the above inequality that
		\begin{equation*}
		\begin{array}{l}
		\sup\limits_{\x \in \X}\left\| \frac{\partial \y_{k+1}(\x)}{\partial\x} - \frac{\partial\y^{\ast}(\x)}{\partial\x} \right\|
		\le\rho \sup\limits_{\x \in \X}\left\| \frac{\partial \y_{k}(\x)}{\partial\x} -\frac{\partial\y^{\ast}(\x)}{\partial\x} \right\| \\ 
		+ s\beta_k(1-\mu)\sup\limits_{\x \in \X}\left\| \mathtt{dis}_{\y\x}^f(\x,\y_k) \right\| +s \alpha_k \mu \sup\limits_{\x\in\X}\Gamma(\x)\\
		+s\beta_k(1-\mu)\sup\limits_{\x \in \X}\left\|\mathtt{dis}_{\y\y}^f(\x,\y_k)\right\| \left\| \frac{\partial\y^{\ast}(\x)}{\partial\x} \right\|.
		\end{array}
		\end{equation*}
		where $\Gamma_k(\x)\!=\!\left\|\! \nabla_{\y\x}\! F(\mathbf{x}, \y_{k}(\x))\right\| \!+\! \left\|\! \nabla_{\y\y}\! F(\x, \y_{k}(\x))\right\|\! \left\|\! \frac{\partial\y^{\ast}(\x)}{\partial\x}\! \right\|\!$. 
		Next, we are going to show that the last three terms in the right hand side of the above inequality converge to $0$ as $k \rightarrow \infty$.
		
		First, as discussed above that there exists a compact set $\mathcal{C}$ such that $\mathbf{y}_k(\x) \in \mathcal{C}$ for any $k$ and $\x \in \X$. Since $\nabla_{\y\x}f$ and $ \nabla_{\y\y}f$ are both continuous on $\X\times\mathbb{R}^m$ and $\X, \mathcal{C}$ are compact, $\nabla_{\y\x}f$ and $ \nabla_{\y\y}f$ are both uniformly continuous on $\X\times\mathcal{C}$, then, the fact that $\sup_{\x \in \X} \|\y_{k}(\x) - \y^*(\x)\| \rightarrow 0$ as $k \rightarrow \infty$ implies that $\sup_{\x \in \X}\left\|\mathtt{dis}_{\y\x}^f(\x,\y_k) \right\| \rightarrow 0$ and $\sup_{\x \in \X}\left\|\mathtt{dis}_{\y\y}^f(\x,\y_k)\right\| \rightarrow 0$ as $k \rightarrow \infty$. $\sigma$-strong convexity of $f(\x,\cdot)$ yields the continuity of $\frac{\partial\y^{\ast}(\x)}{\partial\x}$ on $\X$ and thus $\sup_{\x \in \X} \left\| \frac{\partial\y^{\ast}(\x)}{\partial\x} \right\| < + \infty$. Then, we have $\sup_{\x \in \X}\left\| \mathtt{dis}_{\y\x}^f(\x,\y_k)\right\| \left\| \frac{\partial\y^{\ast}(\x)}{\partial\x} \right\| \rightarrow 0$ as $k \rightarrow \infty$. 
		Next, as $\nabla_{\y\x}F$ and $ \nabla_{\y\y}F$ are both continuous on $\X\times\mathbb{R}^m$, $\X, \mathcal{C}$ are compact, and $\mathbf{y}_k(\x) \in \mathcal{C}$ for any $k$ and $\x \in \X$, then 
		\begin{equation*}
		\begin{array}{l}
		\sup\limits_{\x \in \X}\left( \left\| \nabla_{\x\y} F(\mathbf{x}, \y_{k}(\x))\right\| + \left\| \nabla_{\y\y} F(\mathbf{x}, \y_{k}(\x))\right\|  \right) < + \infty.
		\end{array}
		\end{equation*}
		Combining with the fact that $\sup_{\x \in \X} \left\| \frac{\partial\y^{\ast}(\x)}{\partial\x} \right\| < + \infty$, we have $\sup\limits_{\x \in \X}\Gamma_k(\x)< +\infty$. 
		Because $\alpha_k \rightarrow 0$ as $k \rightarrow \infty$, we have $\alpha_k \mu \sup_{\x \in \X}\Gamma_k(\x)\rightarrow 0$ as $k \rightarrow \infty$. 
		According to Lemma \ref{lem4}, we have 
		\begin{equation}\label{thm7_eq3}
		\sup_{\x \in \X}\left\| \frac{\partial \y_{k}(\x)}{\partial\x} - 	\frac{\partial\y^{\ast}(\x)}{\partial\x} \right\|  \rightarrow 0, \ \text{as}\ k \rightarrow \infty.
		\end{equation}
		
		Recalling Eq.~\eqref{eq:grad_x} and
		\begin{equation*}
		\nabla \varphi_k(\x) = \nabla_{\x} F(\x,\y_k(\x)) + \left(\frac{\partial\y_k(\x)}{\partial\x}\right)^{\top} \nabla_{\y}F(\x,\y_k(\x)),
		\end{equation*}
		we have the following estimate
		\begin{equation*}
		\begin{array}{l}
		\sup\limits_{\x \in \X} \| \nabla \varphi_k(\x) \!-\! \nabla \varphi(\x) \| \le \sup\limits_{\x \in \X} \| \mathtt{dis}_{\y}^{F}(\x,\y_k)\|  \left\| \frac{\partial\y^{\ast}(\x)}{\partial\x} \right\|+\\
	 \sup\limits_{\x \in \X}\! \left\| \frac{\partial \y_{k}(\x)}{\partial\x}\! -\! 	\frac{\partial\y^{\ast}(\x)}{\partial\x}\! \right\| \| \nabla_{\y}F(\x,\y_k(\x)) \| \!+\!\sup\limits_{\x \in \X}\! \| \mathtt{dis}_{\x}^{F}(\x,\y_k) \|. 
		\end{array}
		\end{equation*}
		where $\mathtt{dis}_{\y}^{F}(\x,\y_k)=\nabla_{\y}F(\x,\y_k(\x)) - \nabla_{\y}F(\x,\y^{\ast}(\x))$ and $\mathtt{dis}_{\x}^{F}(\x,\y_k)=\nabla_{\x} F(\x,\y_k(\x)) \!-\! \nabla_{\x} F(\x,\y^{\ast}(\x))$. 
		Since $\nabla_{\x} F$ and $\nabla_{\y} F$ are continuous on $\X\times\mathbb{R}^m$ and thus uniformly continuous on compact set $\X\times\mathcal{C}$. Then the fact that $\sup_{\x \in \X} \|\y_{k}(\x) - \y^*(\x)\| \rightarrow 0$ as $k \rightarrow \infty$ implies that $\sup_{\x \in \X} \| \mathtt{dis}_{\x}^{F}(\x,\y_k) \| \rightarrow 0$ and $\sup_{\x \in \X} \| \mathtt{dis}_{\y}^{F}(\x,\y_k)\| \rightarrow 0$ as $k \rightarrow \infty$.
		The continuity of $\frac{\partial\y^{\ast}(\x)}{\partial\x}$ on $\X$ yields that $\sup_{\x \in \X} \left\| \frac{\partial\y^{\ast}(\x)}{\partial\x} \right\| < + \infty$ and thus $ \sup_{\x \in \X} \| \mathtt{dis}_{\y}^{F}(\x,\y_k)\|  \left\| \frac{\partial\y^{\ast}(\x)}{\partial\x} \right\| \rightarrow 0$ as $k \rightarrow \infty$.
		Next, as $\nabla_{\y}F$ is continuous on $\X\times\mathbb{R}^m$ and $\y_k(\x)$ belongs to a compact set for any $k$ and $\x \in \X$, it holds that $\sup_{\x \in \X}\| \nabla_{\y}F(\x,\y_k(\x)) \| < + \infty$ for any $k$. Then Eq. \eqref{thm7_eq3} implies that $ \sup_{\x \in \X} \left\| \frac{\partial \y_{k}(\x)}{\partial\x} - 	\frac{\partial\y^{\ast}(\x)}{\partial\x} \right\| \cdot \sup_{\x \in \X}\| \nabla_{\y}F(\x,\y_k(\x)) \| \rightarrow 0$ as $k \rightarrow \infty$. Then we get the conclusion directly from Lemma \ref{lem4}.
	\end{proof}
	
	\begin{thm}
		Suppose Assumptions~\ref{assum:F} and \ref{assum2} are satisfied, $f(\x,\y)$ is level-bounded in $\y$ uniformly in $\x\in\X$, $\X$ is compact, $\Y = \mathbb{R}^m$, and $\hat{\S}(\x)$ is nonempty for all $\x \in \X$. Let $\{\y_k(\x)\}$ be the output generated by \eqref{eq:improved-lower} with
		$s_l = s_u = s \in (0,1/\max(L_F,l_f))$, $\mu \in (0,1)$, $\alpha_k \le \alpha_{k-1}$, $\lim_k \alpha_k = 0$,
		$\beta_k \in [\underline{\beta}, 1]$ with some $\underline{\beta} > 0$, $\beta_k \le \beta_{k-1}$, $|\beta_k - \beta_{k-1}| \le \frac{c_\beta}{(k+1)^2}$ with some $c_\beta > 0$, and let $\x_K$ be a $\varepsilon_K$-stationary point of $\varphi_{K}(\x)$, i.e.,
		\begin{equation*}
		\varepsilon_K = \nabla \varphi_K(\x_K).
		\end{equation*}
		Then if $\varepsilon_K \rightarrow 0$, we have that any limit point $\bar{\x}$ of the sequence $\{\x_K\}$ is a stationary point of $\varphi$, i.e., 
		\begin{equation*}
		0 = \nabla \varphi(\bar{\x}).
		\end{equation*}
	\end{thm}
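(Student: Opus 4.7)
The plan is to leverage Proposition~\ref{prop1}, which already delivers the uniform convergence $\sup_{\x\in\X}\|\nabla\varphi_K(\x)-\nabla\varphi(\x)\|\to 0$ under the stated hypotheses. Once this uniform convergence is in hand, the theorem reduces to a straightforward continuity-plus-limit argument along the convergent subsequence.

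First I would fix a limit point $\bar{\x}\in\X$ of $\{\x_K\}$ and extract a subsequence (still indexed by $K$ for brevity) such that $\x_K\to\bar{\x}$. Since the problem assumes Assumption~\ref{assum2} (in particular $f(\x,\cdot)$ is $\sigma$-strongly convex and $F,f$ are twice continuously differentiable), the implicit function theorem applied to the optimality condition $\nabla_{\y}f(\x,\y^*(\x))=0$ yields Eq.~\eqref{thm7_eq2}, and hence $\varphi(\x)=F(\x,\y^*(\x))$ is continuously differentiable on $\X$ with $\nabla\varphi$ given by Eq.~\eqref{eq:grad_x}. In particular, $\nabla\varphi$ is continuous on $\X$, so $\nabla\varphi(\x_K)\to\nabla\varphi(\bar{\x})$.

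Next, I would apply Proposition~\ref{prop1} along the sequence $\{\x_K\}\subset\X$ to obtain
\begin{equation*}
\|\nabla\varphi_K(\x_K)-\nabla\varphi(\x_K)\| \le \sup_{\x\in\X}\|\nabla\varphi_K(\x)-\nabla\varphi(\x)\| \longrightarrow 0.
\end{equation*}
Combining this with the triangle inequality
\begin{equation*}
\|\nabla\varphi_K(\x_K)-\nabla\varphi(\bar{\x})\| \le \|\nabla\varphi_K(\x_K)-\nabla\varphi(\x_K)\|+\|\nabla\varphi(\x_K)-\nabla\varphi(\bar{\x})\|
\end{equation*}
and the continuity of $\nabla\varphi$ established above, I conclude that $\nabla\varphi_K(\x_K)\to\nabla\varphi(\bar{\x})$ as $K\to\infty$.

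Finally, the $\varepsilon_K$-stationarity condition $\nabla\varphi_K(\x_K)=\varepsilon_K$ together with the assumption $\varepsilon_K\to 0$ forces $\nabla\varphi_K(\x_K)\to 0$. Matching the two limits yields $\nabla\varphi(\bar{\x})=0$, which is the desired stationarity. The main (and essentially only) obstacle is ensuring that the hypotheses of Proposition~\ref{prop1} are indeed satisfied by the parameter choices in this theorem; a quick inspection shows they are (identical parameter restrictions on $s,\mu,\alpha_k,\beta_k$), so no further work is needed beyond citing Proposition~\ref{prop1} and invoking continuity of $\nabla\varphi$.
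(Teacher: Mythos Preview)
Your proposal is correct and follows essentially the same approach as the paper: both arguments invoke Proposition~\ref{prop1} for the uniform convergence $\sup_{\x\in\X}\|\nabla\varphi_K(\x)-\nabla\varphi(\x)\|\to 0$, combine it with the continuity of $\nabla\varphi$ (obtained via the implicit function theorem under Assumption~\ref{assum2}) and a triangle inequality, and then use $\varepsilon_K\to 0$ to conclude. The only cosmetic difference is that the paper bounds $\|\nabla\varphi(\x_l)\|$ directly by $\epsilon$ and passes to the limit, whereas you show $\nabla\varphi_K(\x_K)\to\nabla\varphi(\bar{\x})$ and match limits; these are the same argument rearranged.
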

	\begin{proof}
		For any limit point $\bar{\x}$ of the sequence $\{\x_K\}$, let $\{\x_{l}\}$ be a subsequence of $\{\x_K\}$ such that $\x_{l} \rightarrow \bar{\x} \in \X$. For any $\epsilon > 0$, as shown in Proposition \ref{prop1}, there exists $k_1$ such that 
		\begin{equation*}
		\sup_{\x \in \X} \| \nabla \varphi_k(\x) - \nabla \varphi(\x) \| \le \epsilon/2, \quad \forall k \ge k_1.
		\end{equation*}
		Since $\varepsilon_k \rightarrow 0$, there exists $k_2 > 0$ such that $\varepsilon_k \le  \epsilon/2$ for any $k \ge k_2$. Then, for any $l \ge \max(k_1,k_2)$, we have
		\begin{equation*}
		\|\nabla \varphi(\x_l) \| \le \|\nabla \varphi(\x_l) - \nabla \varphi_l(\x_l) \| + \| \nabla \varphi_l(\x_l)  \| \le  \epsilon.
		\end{equation*}
		Taking $l \rightarrow \infty$ in the above inequality, and by the continuity of $\nabla \varphi$, we get $\|\nabla \varphi(\bar{\x}) \| \le \epsilon$. 
		Since $\epsilon$ is arbitrarily chosen, we get $0 = \nabla \varphi(\bar{\x})$.
	\end{proof}

\section{Discussions}\label{sec:revisit}

This section first provides a comparison with the existing LLS scheme by a high dimension counter-example in Section~\ref{subsec:comp_LLS}. Then, Section~\ref{subsec:comp_conference} lists the improvements of this work. Finally, we develops an one-stage extension scheme in Section~\ref{sec:extension}.

\subsection{Comparison with Existing LLS Theories}\label{subsec:comp_LLS}

As aforementioned, a number of gradient-based methods have been proposed to solve BLO in Eq.~\eqref{eq:blp}. However, these existing methods all rely on the uniqueness of $\S(\x)$ (i.e., LLS assumption). That is, rather than considering the original BLO in Eq.~\eqref{eq:blp}, they actually solve the simplification in Eq.~\eqref{eq:blp_lls}. By considering $\y$ as a function of $\x$, the idea behind these approaches is to take a gradient-based first-order scheme (e.g, gradient descent, stochastic gradient descent, or their variations) on the LL subproblem. Therefore, with the initialization point $\y_0$, a sequence $\{\mathbf{y}_{k}(\x)\}_{k=0}^K$ parameterized by $\x$ can be generated, e.g.,
\begin{equation}
\begin{array}{l}
\mathbf{y}_{k+1}(\x)=\mathbf{y}_{k}(\x)-s_l\nabla_{\mathbf{y}} f(\mathbf{x},\mathbf{y}_{k}(\x)),\ k=0,\cdots,K-1,\label{eq:gradient_f}
\end{array}
\end{equation}
where $s_l>0$ is an appropriately chosen step size. Then by considering $\y_K(\x)$ (i.e., the output of Eq.~\eqref{eq:gradient_f} for a given $\x$) as an approximated optimal solution to the LL subproblem, we can incorporate $\y_K(\x)$ into the UL objective and obtain a single-level approximation model, i.e., $\min_{\mathbf{x}\in\mathcal{X}} F(\mathbf{x},\y_K(\mathbf{x}))$. Next, by unrolling the iterative update scheme in Eq.~\eqref{eq:gradient_f}, we can calculate the derivative of $F(\mathbf{x},\y_K(\mathbf{x}))$ (w.r.t. $\x$) to optimize Eq.~\eqref{eq:blp_lls} by automatic differentiation techniques~\cite{franceschi2017forward,baydin2017automatic}. 

\begin{figure}[t]
	\centering 
	\begin{tabular}{c@{\extracolsep{0.2em}}c}
		\includegraphics[width=0.228\textwidth]{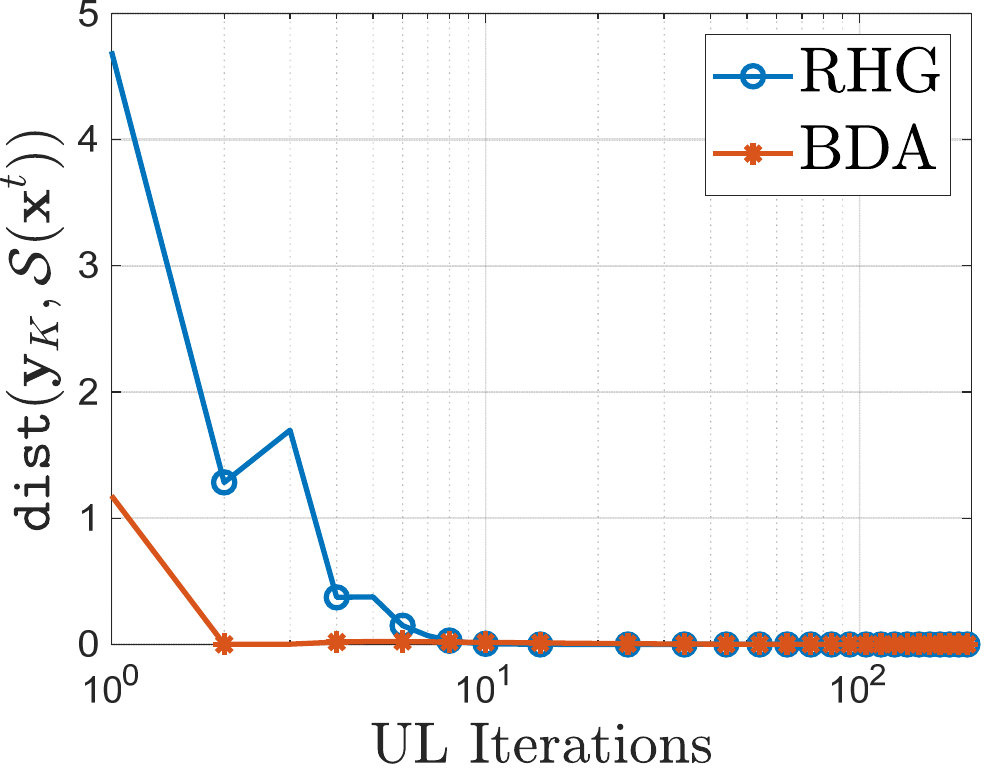}
		&\includegraphics[width=0.228\textwidth]{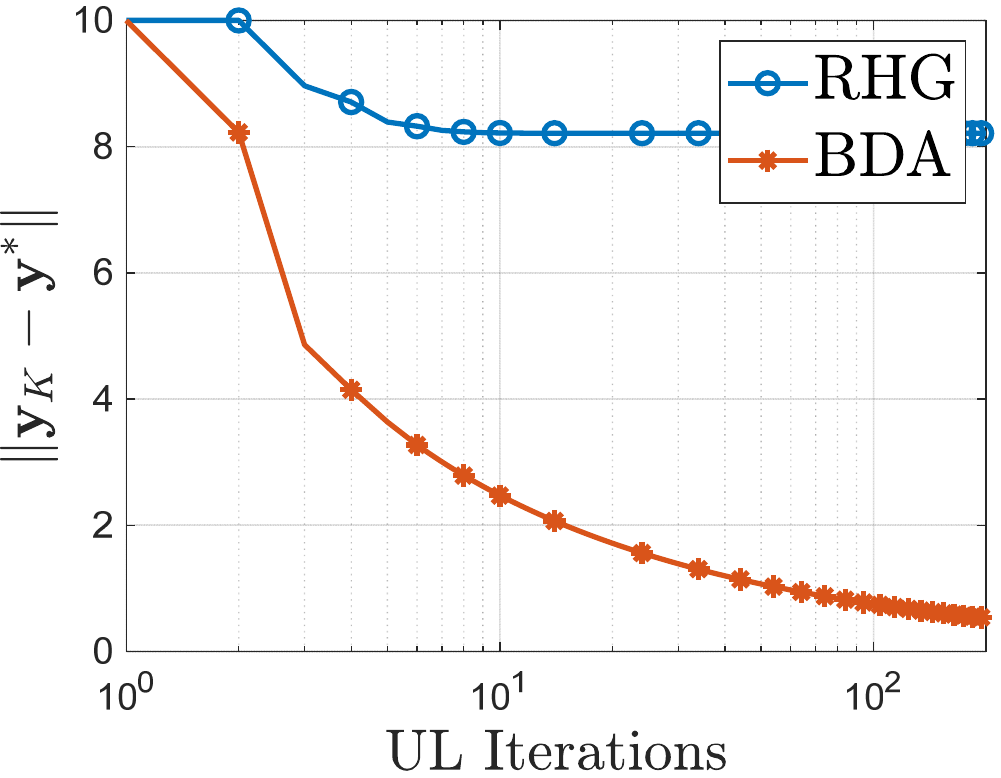}\\
	\end{tabular}
	\caption{An evaluation of the convergence behavior about the LL variable $\mathbf{y}$. We compare our BDA with gradient-based BLO algorithm (i.e., RHG). We set the initial points ($\x$, $\y$) = (0, 0), $n=50$ and $K=20$. $\x^t$ denotes the UL variable at the $t$-th UL iterations.} \label{fig:conver_y}
\end{figure}

\begin{figure*}[t]
	\centering 
	\begin{tabular}{c@{\extracolsep{0.2em}}c@{\extracolsep{0.2em}}c@{\extracolsep{0.2em}}c}
		\includegraphics[width=0.238\textwidth]{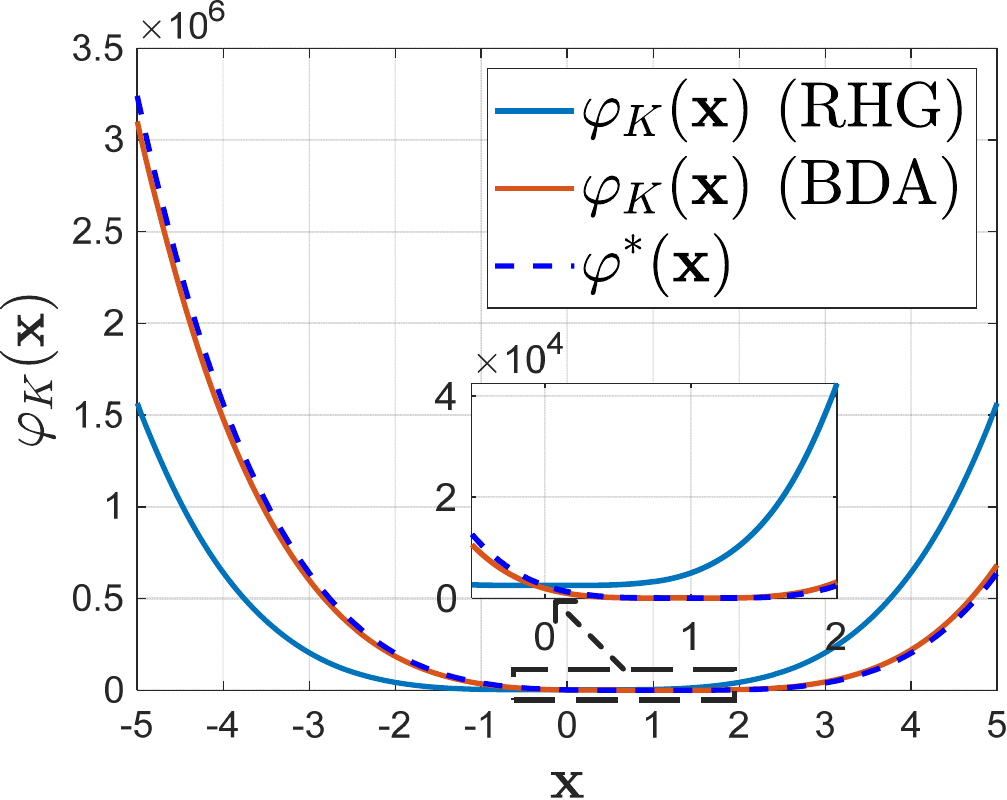}
		&\includegraphics[width=0.244\textwidth]{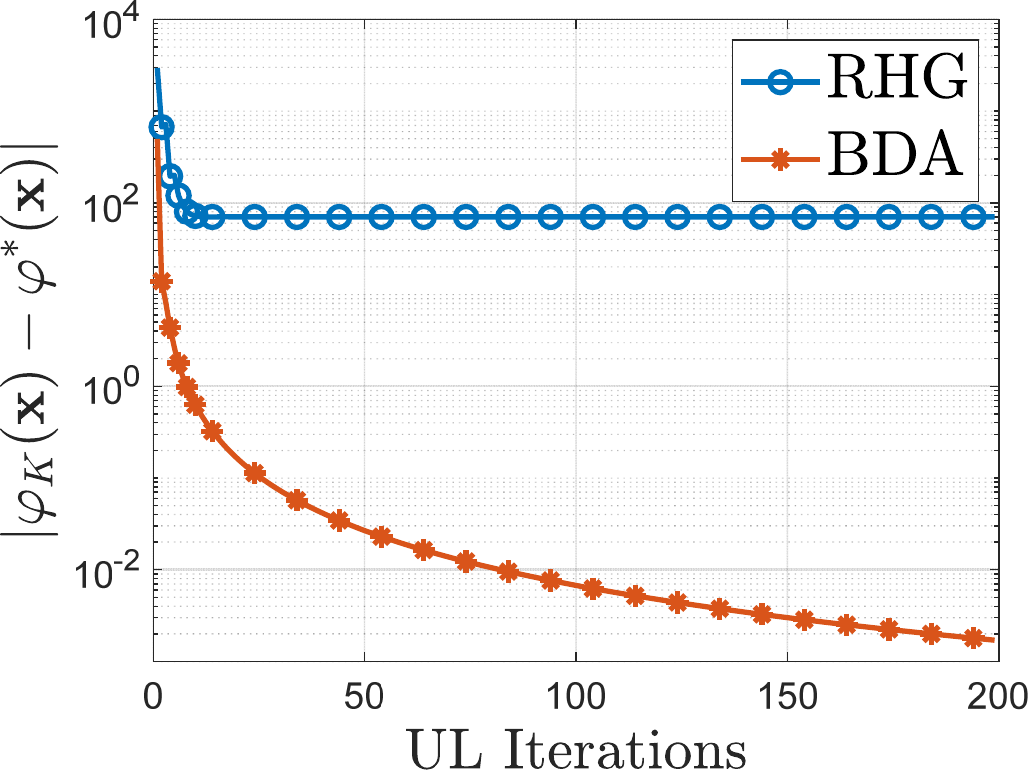}
		&\includegraphics[width=0.236\textwidth]{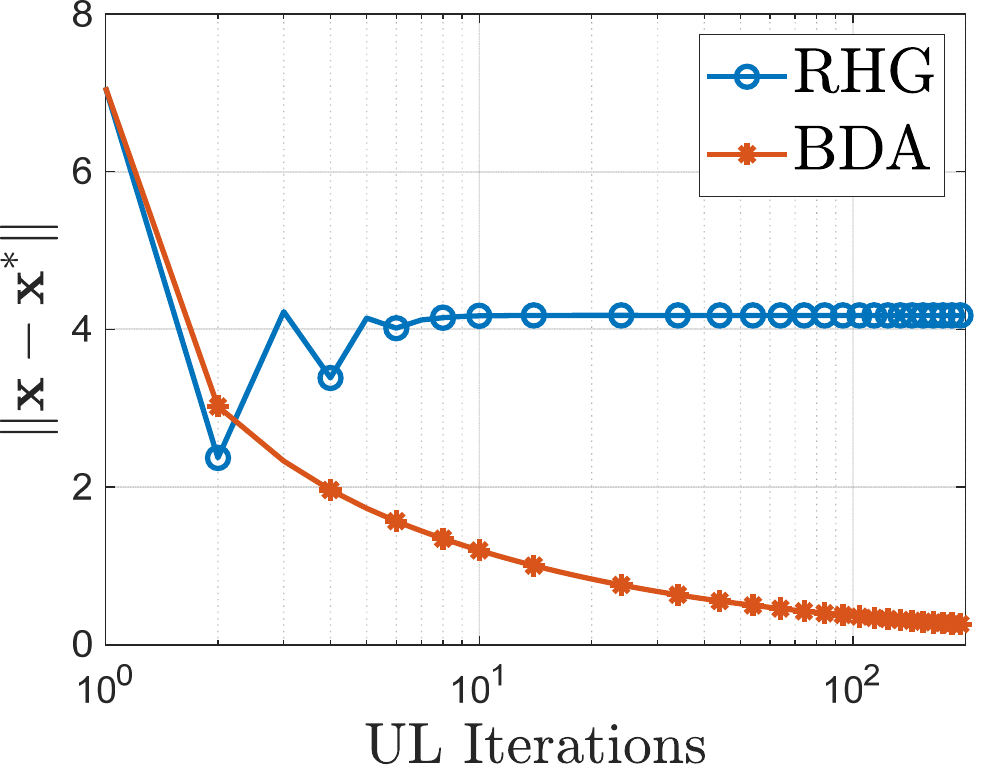}
		&\includegraphics[width=0.24\textwidth]{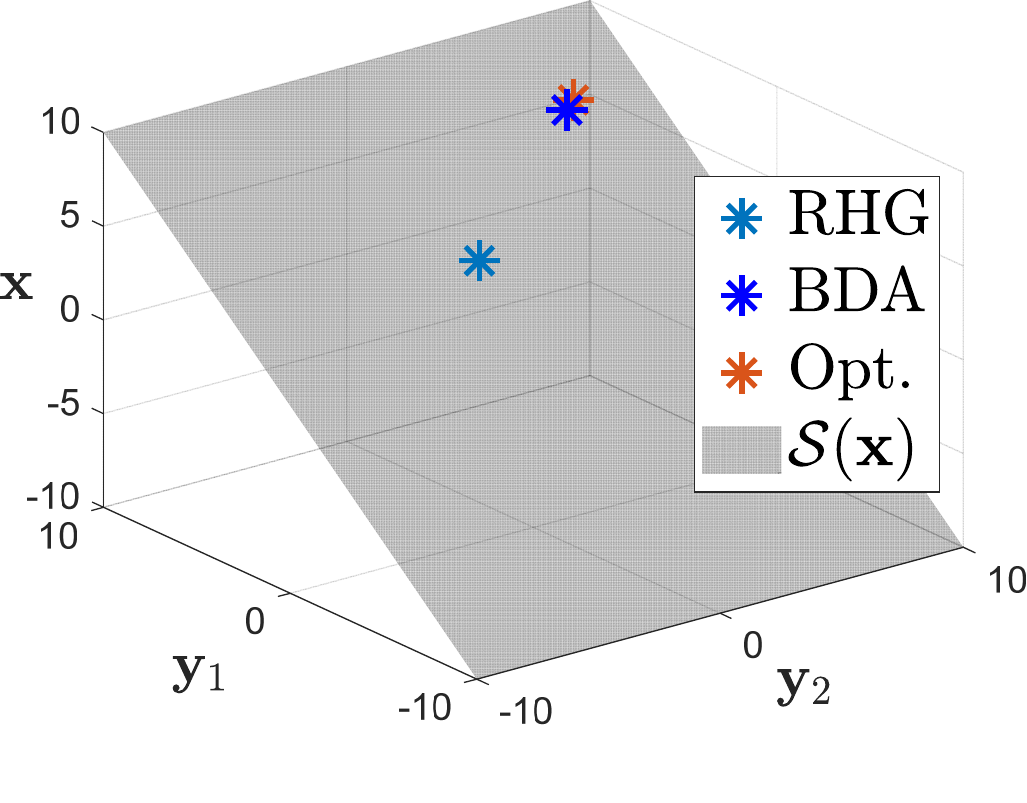}\\
	\end{tabular}
	\caption{Illustrating the convergence behavior of gradient-based BLO algorithms about the UL variable $\x$. We set the initial points ($\x$, $\y$) = (0, 0), $n=50$ and $K=20$. In the first subfigure, $\varphi_K(\x)$ and $\varphi^*(\x)$ denote the UL objective with LL computational solution $\y_K$ and the optimal LL solution $\y^*$ respectively. The second and third subfigures respectively show the errors of UL objective (i.e., $|\varphi_K(\x)-\varphi^*(\x)|$) and UL variable (i.e., $\|\x-\x^*\|$). The last subfigure illustrates the relationship among Optimal solution (short for ``Opt.'', the red star) and the iteration solutions of RHG and BDA. } \label{fig:conver_x}
\end{figure*}

The UL objective $F$ is indeed a function of both the UL variable $\x$ and the LL variable $\y$. Conventional gradient-based bi-level methods (Eq.~\eqref{eq:gradient_f}) only use the gradient information of the LL subproblem to update $\mathbf{y}$. Thanks to the LLS assumption, for fixed UL variable $\x$, the LL solution $\y$ can be uniquely determined. Thus the sequence $\{\mathbf{y}_{k}\}_{k=0}^K$ could converge to the true optimal solution, that minimizes both the LL and UL objectives. However, when LLS is absent, $\{\mathbf{y}_{k}\}_{k=0}^K$ may easily fail to converge to the true solution. Therefore, $\x_K^*$ may tend to be incorrect limiting points.

\begin{example}(Counter-Example)\label{CExam}
	With $\x\in\mathbb{R}^{n}$, $\y\in\mathbb{R}^{n}$ and $\z\in\mathbb{R}^{n}$, we consider the following BLO problem:
	\begin{equation}
	\begin{array}{l}
	\min\limits_{\x\in \X}\| \x - \z \|^4+\|\y-\mathbf{e}\|^4,\\
	s.t. \ (\y,\z) \in\arg\min\limits_{\y\in\mathbb{R}^{n},\z\in\mathbb{R}^{n}}\frac{1}{2}\|\y\|^2 - \x^{\top}\y,
	\end{array}\label{eq:ce-2}
	\end{equation}
	where $\X = [-100,100]\times\cdots [-100,100] \subset \mathbb{R}^n$, $\mathbf{e}$ denotes the vector whose elements are all equal to $1$. By simple calculation, we know that the unique optimal solution of Eq.~\eqref{eq:ce-2} is $\x^* = \y^* =\z^* =\mathbf{e}$. However, if adopting the existing gradient-based scheme in Eq.~\eqref{eq:gradient_f} with initialization $(\y_0,\z_0)=(0,0)$ and varying step size $s_l^k \in (0,1)$, we have that $\y_K = (1-\prod_{k=0}^{K-1}(1-s_l^k))\x$ and $\z_K=0$. Then the approximated problem of Eq.~\eqref{eq:ce-2} amounts to \begin{equation*}
	\begin{array}{l}
		\min\limits_{\mathbf{x}\in \X} F(\x,\y_K,\z_K)= \|\x\|^4 + \| (1-\prod_{k=0}^{K-1}(1-s_l^k)) \x- \mathbf{e}\|^4.
	\end{array}
	\end{equation*}
	Consider sequence  
	$\x_K^{*}=\arg\min_{\x\in \X} F(\x,\y_K,\z_K),$  
	it follows from the first-order optimality condition that, 
	\begin{equation}\label{eq_counterex}
	0 = 4\|\x_K^{*}\|^2\x_K^{*} +  4a_K\| a_K \x_K^{*}- \mathbf{e} \|^2(a_K\x_K^{*}- \mathbf{e}),
	\end{equation} 
	where $a_K = (1-\prod_{k=0}^{K-1}(1-s_l^k))$. 
	Then, if sequence $\{\x_K^*\}$ converge to a limit point $\mathbf{e}$, and since $\{a_K\}$ is bounded, there exist subsequences $\{\x_{K_\ell}^*\} \subset \{\x_{K}^*\} $ and $ \{ a_{K_\ell} \} \subset \{ a_K \} $ such that $\{\x_{K_\ell}^*\} \rightarrow \mathbf{e}$ and $\{ a_{K_\ell} \} \rightarrow \bar{a}$. By considering subsequences $\{\x_{K_\ell}^*\}$ and $\{ a_{K_\ell} \}$ in Eq.~\eqref{eq_counterex} and taking $K_\ell \rightarrow \infty$, we should have
	\begin{equation*}
		\begin{array}{l}
		0 = \|\mathbf{e}\|^2\mathbf{e} +  \bar{a}\| \bar{a} \mathbf{e}- \mathbf{e} \|^2(\bar{a}\mathbf{e}- \mathbf{e})\\
		\quad=  [1+(\bar{a}-1)^3\bar{a}] \|\mathbf{e}\|^2\mathbf{e},
		\end{array}
	\end{equation*}
	and thus $ 0 = 1+(\bar{a}-1)^3\bar{a}$. However, since $a_K = (1-\prod_{k=0}^{K-1}(1-s_l^k)) \in [0,1]$, then $\bar{a} \in [0,1]$ and 
	\begin{equation*}
	1+(\bar{a}-1)^3\bar{a} \ge 1 - |(\bar{a}-1)\bar{a}| \ge \frac{3}{4} > 0,
	\end{equation*} 
	which is a contradiction to $0 = 1+(\bar{a}-1)^3\bar{a}$.
	Therefore, any subseuqnce of $\{\x_K^*\}$ cannot converge to the true solution (i.e., $\x^* = \mathbf{e}$). 
\end{example}

\begin{remark}\label{remark1}
	Actually, even with strongly convex UL objective w.r.t. LL variable $\y$, the existing bi-level based methods still may fail to reach an optimal solution. For example, with $\x\in[-100,100]$ and $\y\in\mathbb{R}^2$, we consider the following BLO problem:
		\begin{equation*}
		\begin{array}{c}
		\min\limits_{\x\in[-100,100]}\frac{1}{2}\|\x-\y_2\|^2+\frac{1}{2}\|\y_{1}-1\|^2,\\
		s.t. \ \y\in\arg\min\limits_{\y\in\mathbb{R}^2}\frac{1}{2}\|\y_1\|^2 - \x^{\top}\y_1.
		\end{array}\label{eq:ce}
		\end{equation*}
		By simple calculation, we know that the unique optimal solution of Eq.~\eqref{eq:ce} is $\x^*=1, \y^* = (1,1)$. However, if adopting the existing gradient-based scheme in Eq.~\eqref{eq:gradient_f} with initialization $\y_0=(0,0)$ and varying step size $s_l^k  \in (0,1)$, we have that $[\y_K]_1 = (1-\prod_{k=0}^{K-1}(1-s_l^k))\x$ and $[\y_K]_2=0$. By defining $\varphi_K(\x) = F(\x,\y_K)$, we have
		$
		\x_K^{*}=\frac{(1- \prod_{k=0}^{K-1}(1-s_l^k) )}{1+(1- \prod_{k=0}^{K-1}(1-s_l^k) )^2}.
		$ 
		It is easy to check that 
		$\x_K^{*}\le \frac{1}{2}$.  
		So $\x_K^*$  cannot converge to the true solution (i.e., $\x^* = 1$).
\end{remark}

\begin{remark}
	In applications, to achieve the LLS, people sometimes add a strongly convex regularization term to the LL subproblem. We must clarify that this strategy is only heuristic, which usually causes unpredictable large deviation from the true solution.

	 Indeed, even the strongly convex regularization is set to be vanishing, such an approximation procedure cannot guarantee any convergence to the true solution. We will take the counter-example in Remark \ref{remark1} again for illustration. Specifically, we introduce a quadratic term $1/2 \varepsilon \|\y_2\|^2$ to the LL subproblem
	\begin{equation*}
	\min\limits_{\y \in\mathbb{R}^{2}}\frac{1}{2}\|\y_1\|^2 + \frac{1}{2} \varepsilon \|\y_2\|^2- \x^{\top}\y_1.
	\end{equation*}
	Apparently, the LL objective becomes strongly convex. But it can be checked that the optimal solution to such bilevel problem with regularized LL
	\begin{equation*}
	\begin{array}{l}
	\min\limits_{\x\in [-100,100]}\frac{1}{2}\| \x - \y_2 \|^2+\frac{1}{2}\|\y_1-1\|^2,\\
	s.t. \ \y \in\arg\min\limits_{\y\in\mathbb{R}^{2}}\frac{1}{2}\|\y_1\|^2 + \frac{1}{2} \varepsilon \|\y_2\|^2 - \x^{\top}\y_1,
	\end{array}
	\end{equation*}
	becomes $\x^*(\varepsilon)=\frac{1}{2}$,  $\y_1^*(\varepsilon)=\frac{1}{2}$, $\y_2^*(\varepsilon)=0$ which is obviously no longer the true solution to the original bilevel model. Moreover, even with $\varepsilon$ tending $0$, unfortunately, $\x^*(\varepsilon)$, $\y_1^*(\epsilon)$ and $\y_2^*(\epsilon)$ still fail to converge to the true solution $(1, 1, 1)$. 	
\end{remark}

To demonstrate the convergence behavior of our BDA and the most popular bi-level method (i.e., RHG~\cite{franceschi2017forward,franceschi2018bilevel}), we first illustrate the optimization procedure of LL variable (i.e., $\y_{K}$) in Figure~\ref{fig:conver_y}. As can be observed that the LL variable $\y_{K}$ can converge to the LL solution set $\S(\x^t)$ for both RHG and our BDA in the left subfigure. But, the LL variable of our method can find the optimal point, i.e., $\y^*$, while RHG cannot. Note that we set the dimension $n=50$.

\begin{figure}[t]
	\centering \begin{tabular}{c@{\extracolsep{0.2em}}c@{\extracolsep{0.2em}}c}
		\footnotesize $\x_0$ & \footnotesize $\x_{5}$ & \footnotesize $\x_{20}$\\
		\includegraphics[width=0.16\textwidth]{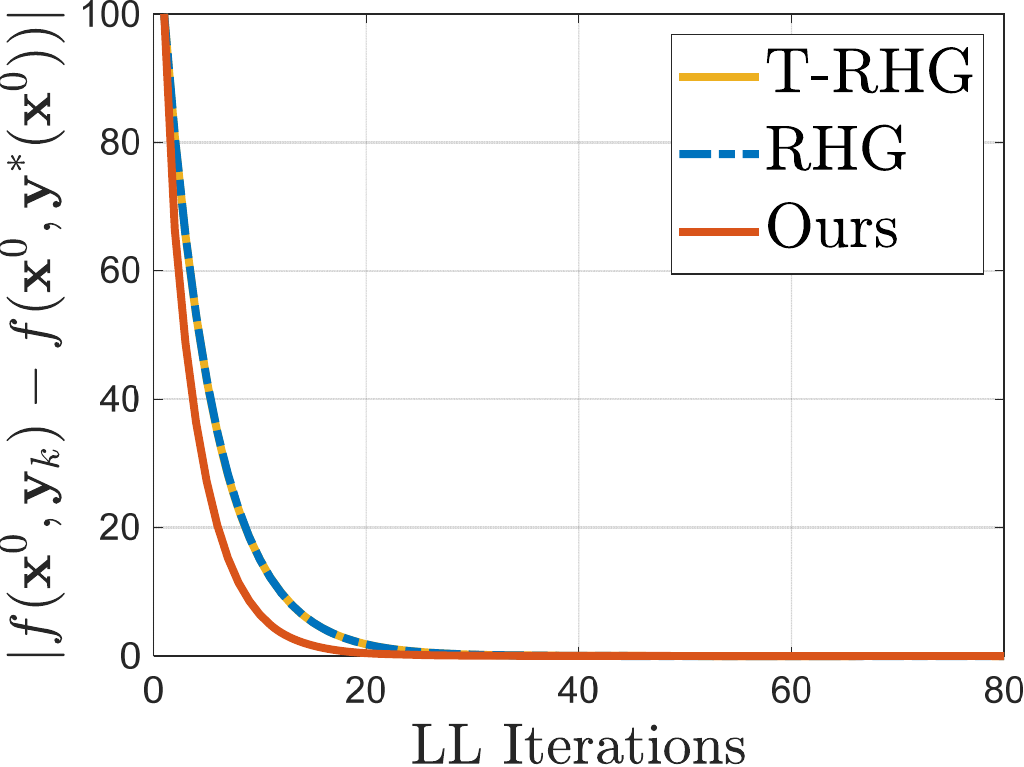}
		&\includegraphics[width=0.16\textwidth]{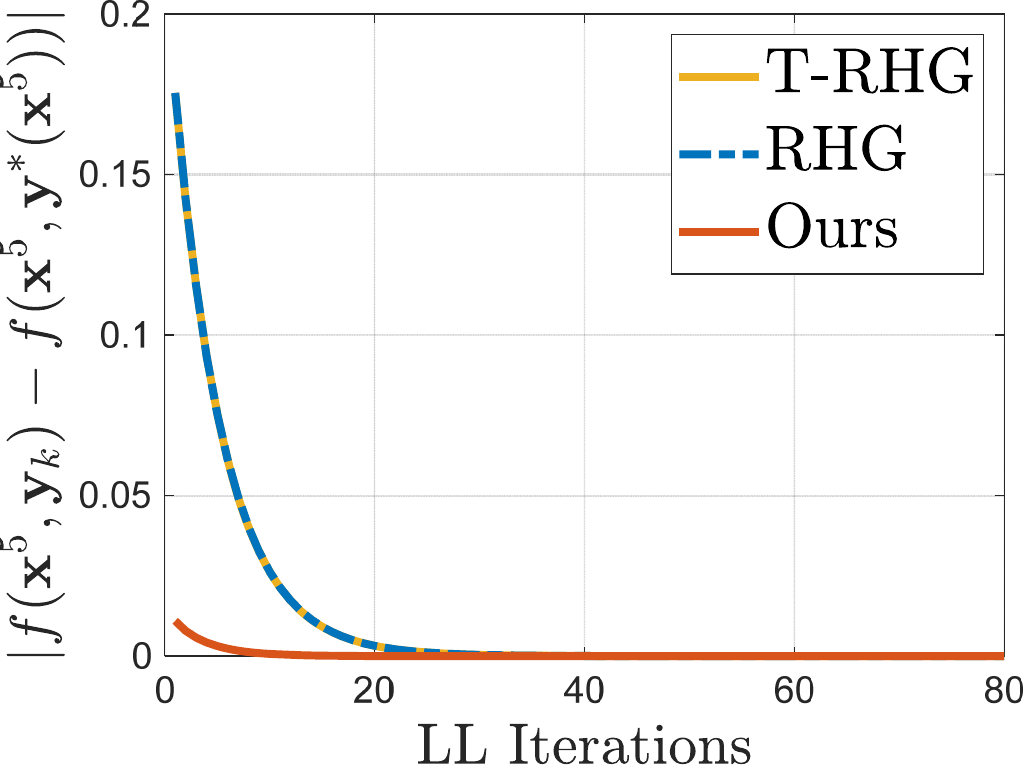}
		&\includegraphics[width=0.16\textwidth]{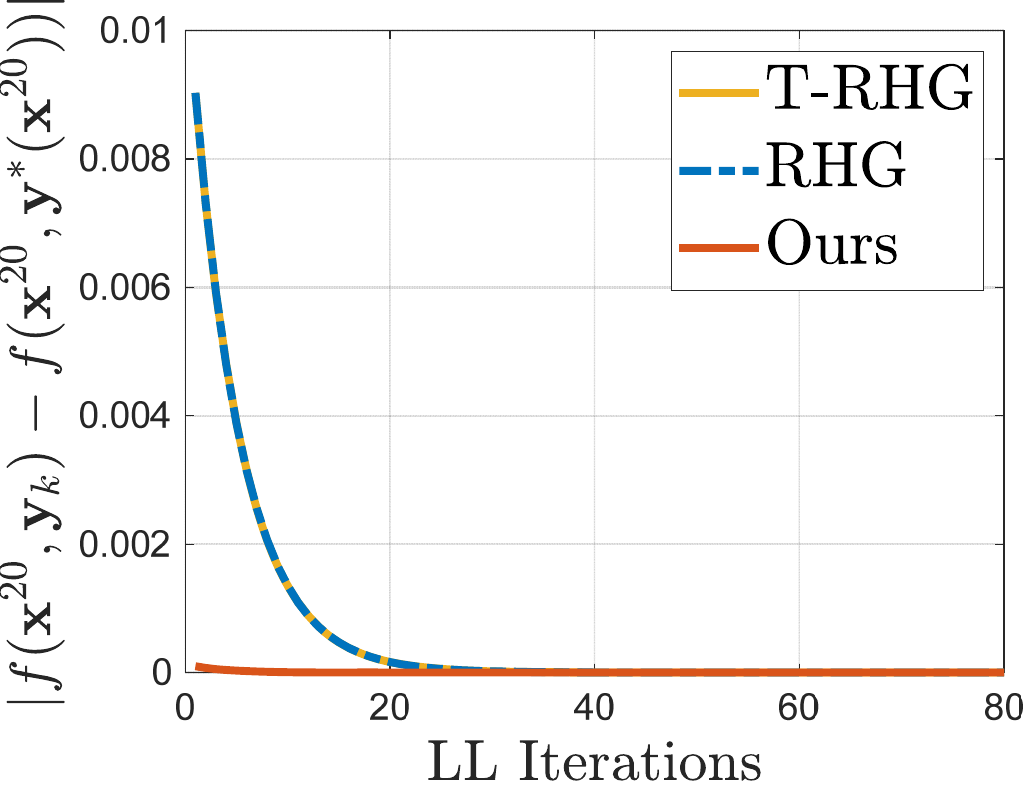}\\
		\includegraphics[width=0.16\textwidth]{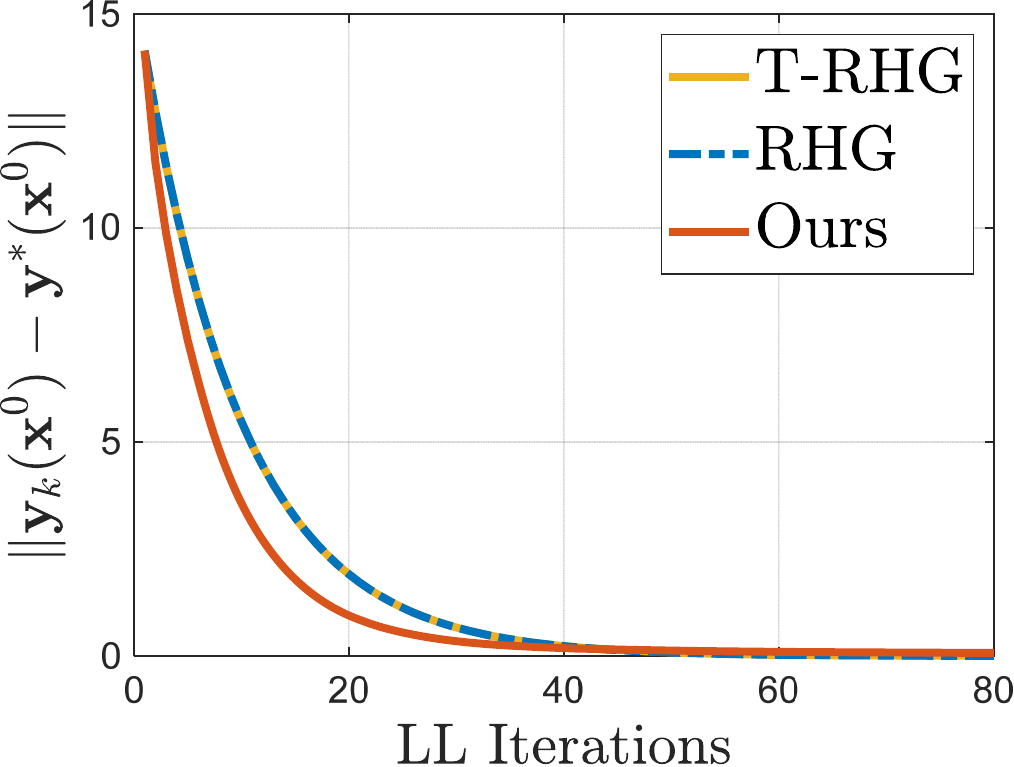}
		&\includegraphics[width=0.16\textwidth]{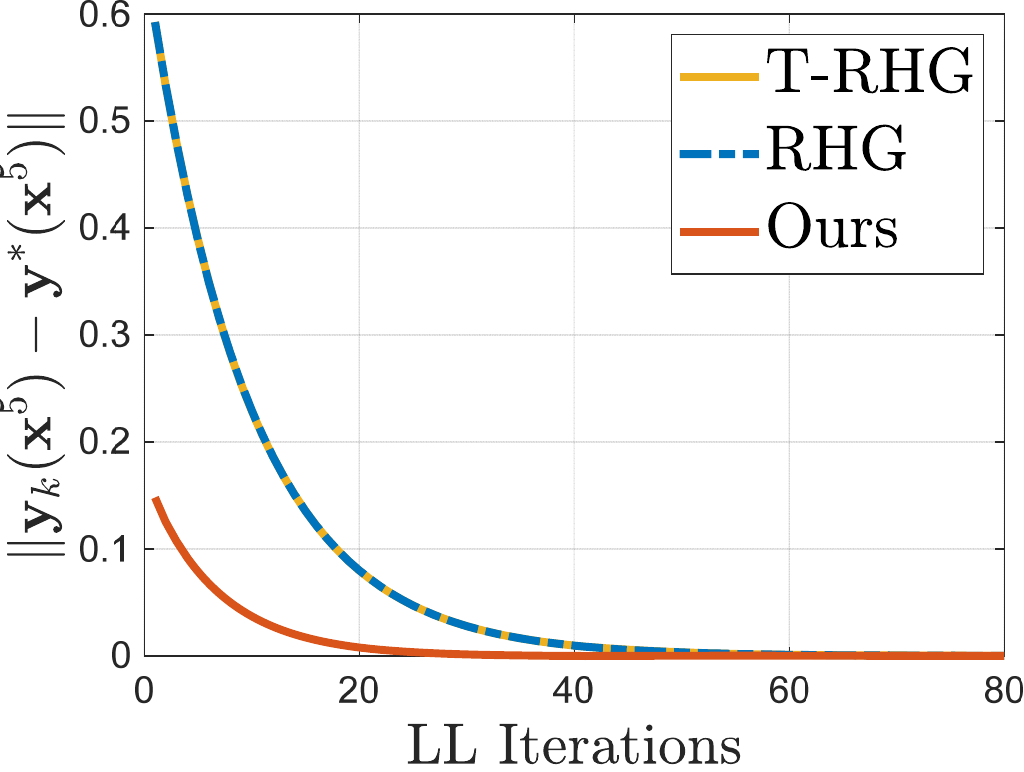}
		&\includegraphics[width=0.16\textwidth]{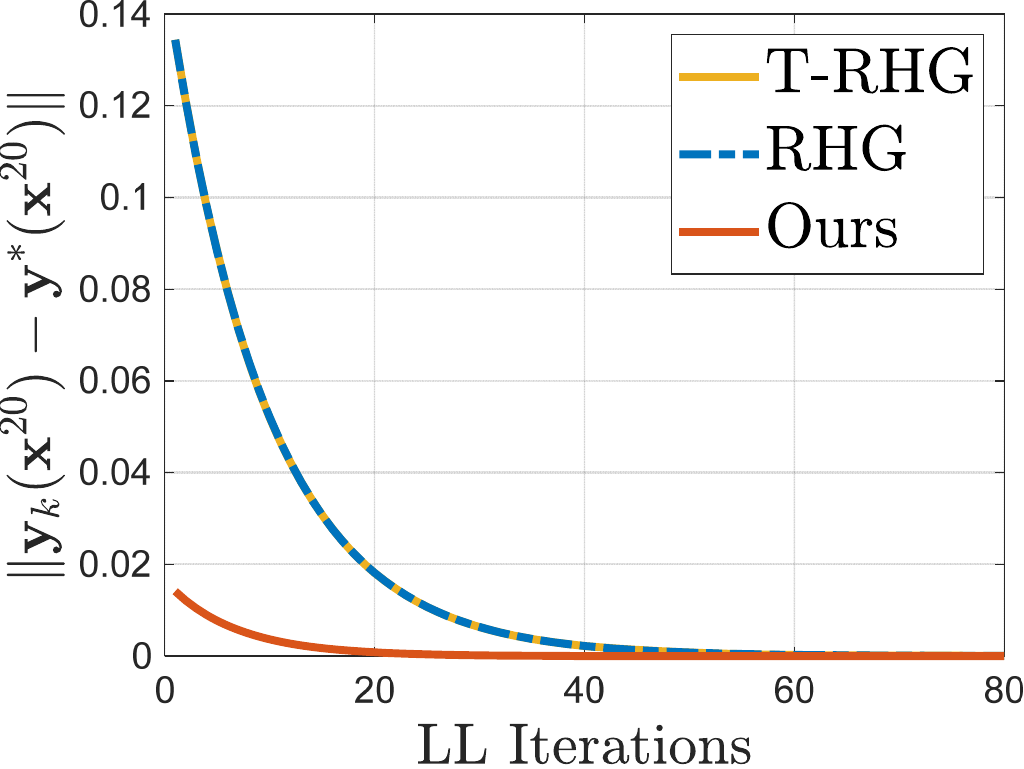}\\
		\includegraphics[width=0.16\textwidth]{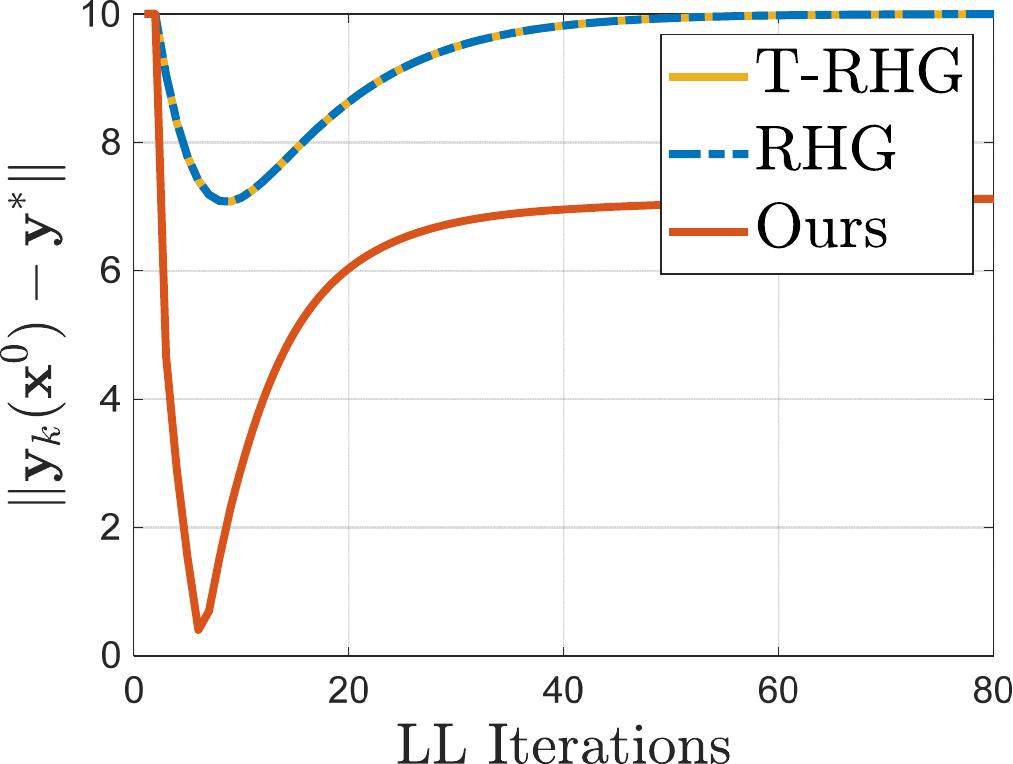}
		&\includegraphics[width=0.16\textwidth]{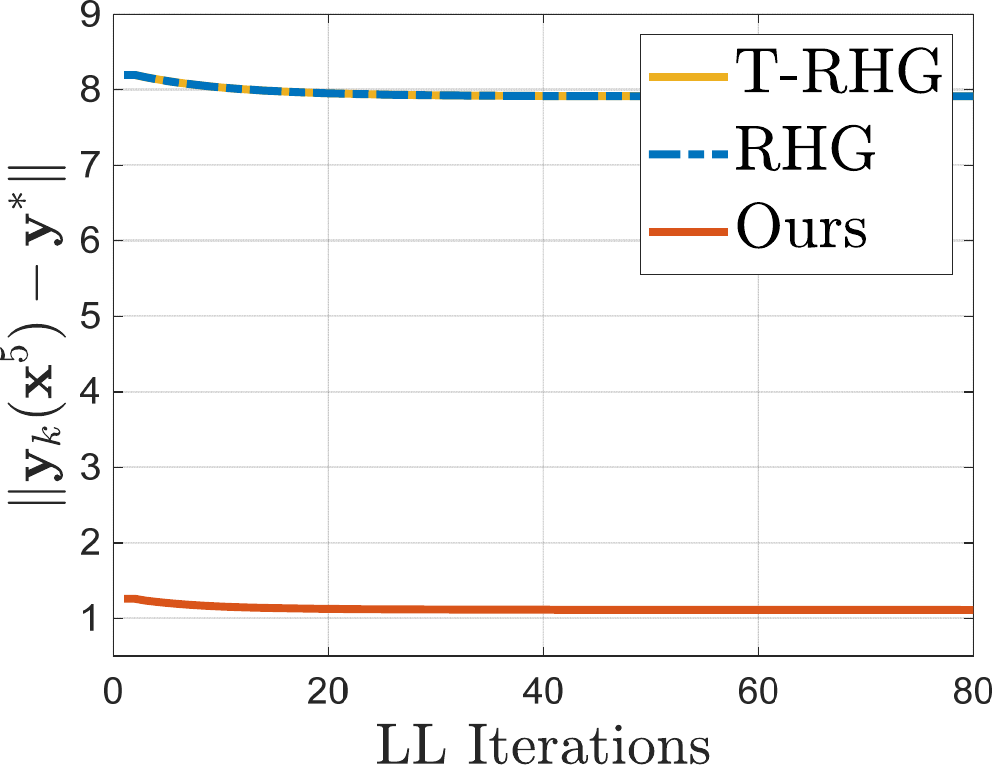}
		&\includegraphics[width=0.16\textwidth]{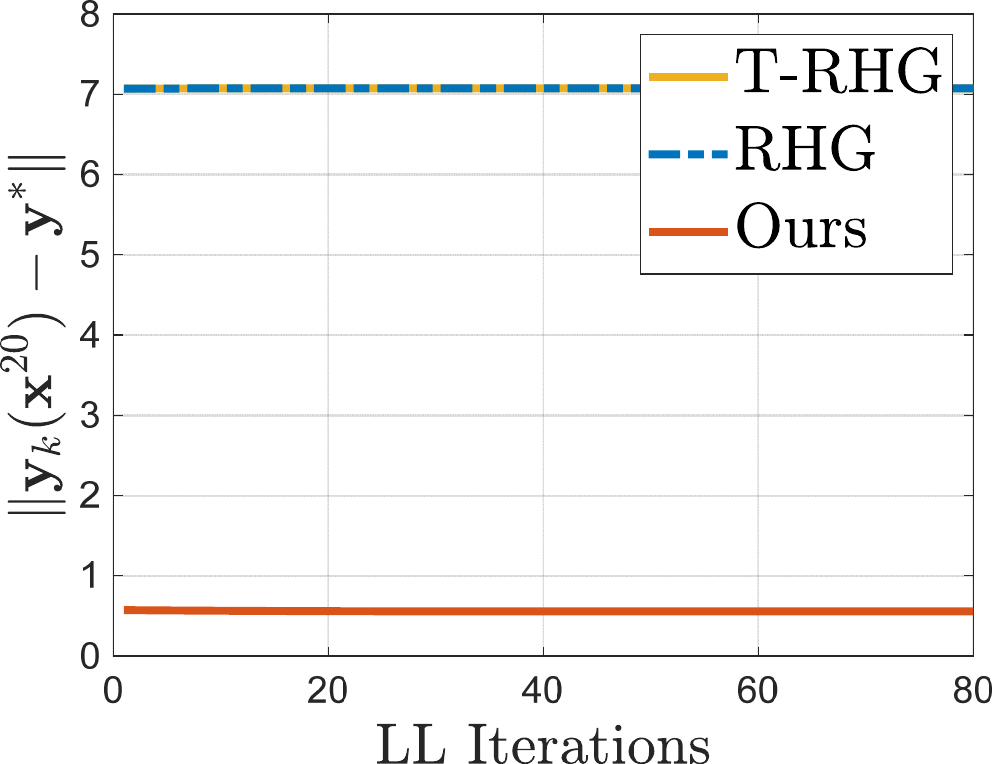}\\
	\end{tabular}
	\caption{LL iteration curves of gradient-based BLO algorithms (T-RHG, RHG and Ours) under three fixed $\x$ (i.e., $\x^0$, $\x^5$, $\x^{20}$). 	$\y^*(\x)$ and $\y^*$ denote the optimal solution with and without relationship about $\x$. } \label{fig:toy_diff_K}
\end{figure}

In Figure~\ref{fig:conver_x}, comparing with RHG, we then demonstrate the optimization procedure of UL variable (i.e., $\x$). In the first subfigure, under fixed LL iterative solution $\y_K$, the UL objective $\varphi_{K}(\x)$ illustrates that our BDA can efficiently fit the optimal objective function (i.e., $\varphi^*(\x)$) for any UL variable. To further demonstrate the convergence behavior, we plotted the errors of the UL objective (i.e., $|\varphi_K(\x)-\varphi(\x)|$) and variable (i.e., $\|\x-\x^*\|$) in the second and third subfigures. With the above illustration, we summarize the relationship of Optimal solution (short for ``Opt.'', the red one in the last subfigure) with the iterative solutions of RHG and BDA in the last subfigure. Thus, we conclude that our BDA can find the optimal point, while RHG converge to a non-optimal point in $\S(\x)$. 

\subsection{Comparison with the Work in~\cite{liu2020generic}}\label{subsec:comp_conference}	
	First of all, this work significantly improves the assumptions required by our convergence analysis. That is, we successfully remove the strong convexity property on the UL objective, the level-bounded in $\y$ and locally uniformly in $\x\in\X$ property on the LL objective. Furthermore, we replace the essential condition ``LL solution set property'' by ``LL objective convergence property'', which is much weaker and easily verifiable. In this way, we actually obtain a more general and feasible proof recipe for challenging real-world applications.
	
	We also extend our convergence results to other optimization scenarios, such as local and stationary results. Specifically, we obtain convergence results for the case that there are only local solutions to the UL approximation (i.e., ``$\min_{\x}\varphi_K(\x)$''). Moreover, we provide new methodology to analyze the convergence behaviors of our BDA in the scenario that we can only obtain the stationary points for the UL approximation. Therefore, this work has comprehensively analyzed the convergence behaviors for our BDA in various (i.e., global, local and stationary) optimization scenarios. 
			
	Algorithmically, this work establishes a more general framework, in which we introduce the projection-based operations to handle set constrains in BLOs and design more flexible strategy to set the aggregation parameters during iterations. We also established a one-stage fast approximation to BDA for solving large-scale real-world problems. 
	
	We design a high-dimensional counter-example (see Eq.~(40)) and conduct various experiments to verify our new theoretical findings, i.e., the efficiency of BDA for BLOs without the LLS condition and both the UL and LL objectives are convex but not strongly convex. We further do new experiments to more clearly analyze the components of BDA and report more results on real applications (e.g., with new evaluation metrics, on more challenging benchmarks and compared with more state-of-the-art approaches).

\subsection{One-stage BDA: A Fast Implementation}\label{sec:extension}

Multi-step of the LL iteration modules $\T_k$ will cause a lot of memory consumption that may be an obstacle in modern massive-scale deep learning applications. Thus it would be useful to simplify iteration steps. This part provides an extension scheme leveraging a one-stage simplification to reduce complicated gradient-based calculation steps~\cite{liu2018darts}. By setting $K = 1$ in Eq.~\eqref{eq:improved-lower}, the algorithm reads as 
\begin{equation}\label{eq:one-stage}
\begin{aligned}
\y_1(\x) = \T_{1}\left(\x,\y_0\right)&=\mathtt{Proj}_{\Y}\left( \y_0 -s\partial_{\y}\phi(\x,\y_0) \right),
\end{aligned}
\end{equation}
where $\phi(\x,\y_0)= \alpha F(\mathbf{x},\mathbf{y}_{0})+\beta f(\mathbf{x},\mathbf{y}_{0})$ and $\alpha,\beta\in(0,1]$ denote the aggregation parameters. Indeed, if $(\y_0-s\partial_{\y}\phi(\x,\y_0))\in\Y$, with this one-stage simplification, we can simplify the back-propagation calculation with the following finite difference approximation
\begin{equation*}
\begin{array}{l}
\frac{d\varphi_{1}(\x)}{d\x} 
= \frac{\partial F(\x,\y_1)}{\partial \x} + \frac{\partial F(\x,\y_1)}{\partial \y_1}\frac{d\y_1}{d \x} \\
\quad\quad\quad \approx \frac{\partial F(\x,\y_1)}{\partial \x} -s \frac{\partial_{\x}\phi(\x,\mathbf{h}_0^{+}) -\partial_{\x}\phi(\x,\mathbf{h}_0^{-}) }{2\epsilon},
\end{array}
\end{equation*}
where $\mathbf{h}_0^{\pm}=\y_0\pm\epsilon\partial F(\x,\y_1)/\partial \y_1$ and $\partial_{\x}\phi(\x,\y)= \alpha\partial_{\x}F(\x,\y)+\beta\partial_{\x} f(\x,\y)$. Since $\Y$ can be a big interval, this case (i.e., $(\y_0-s\partial_{\y}\phi(\x,\y_0))\in\Y$) is often satisfied in general. If $(\y_0-s\partial_{\y}\phi(\x,\y_0))\notin\Y$, the above back-propagation can be calculated by the following form
\begin{equation*}
\begin{array}{l}
\frac{d\varphi_{1}(\x)}{d\x}
\approx\frac{\partial F(\x,\y_1)}{\partial \x}\\ \quad\quad\quad -\frac{\partial_{\x}\phi(\x,\mathbf{h}_0^{++})-\partial_{\x}\phi(\x,\mathbf{h}_0^{-+})-\left(\partial_{\x}\phi(\x,\mathbf{h}_0^{+-})-\partial_{\x}\phi(\x,\mathbf{h}_0^{--})\right)}{4\epsilon^{1+\frac{1}{2}}},
\end{array}
\end{equation*}
where $\mathbf{h}_{0}^{\pm+}=\y_0\pm \epsilon\mathtt{Proj}_{\Y}\left( \z_0+\epsilon^{1/2}\partial F(\x,\y_1)/\partial\y \right) $ and $\mathbf{h}_0^{\pm-}=\y_0\pm \epsilon\mathtt{Proj}_{\Y}\left( \z_0-\epsilon^{1/2}\partial F(\x,\y_1)/\partial\y \right)$ with $\z_0=\y_0-s\partial_{\y}\phi(\x,\y_0)$.
                                   
\begin{figure}[t]
	\centering 
	\begin{tabular}{c@{\extracolsep{0.6em}}c}
		\includegraphics[width=0.23\textwidth]{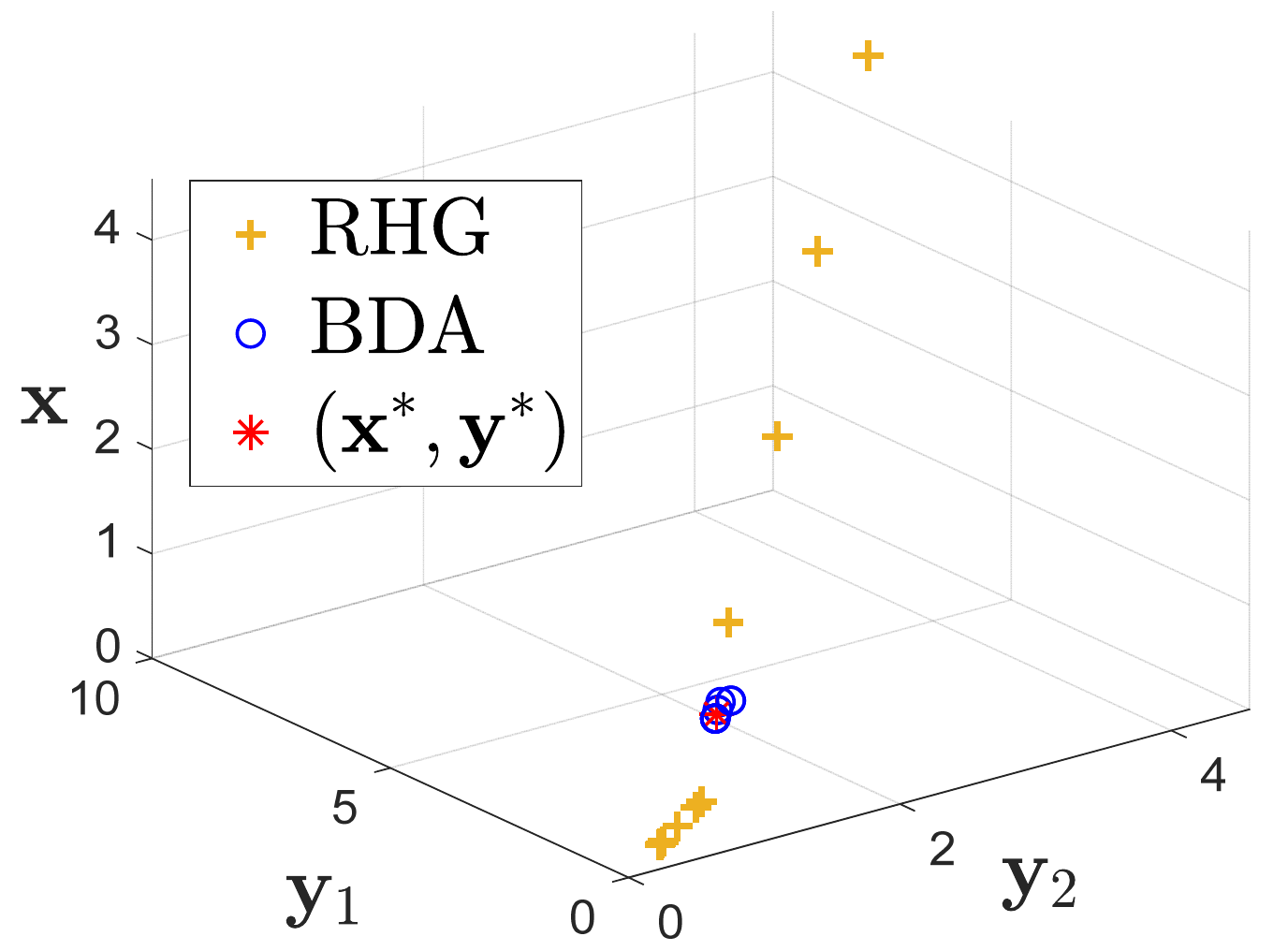}
		&\includegraphics[width=0.23\textwidth]{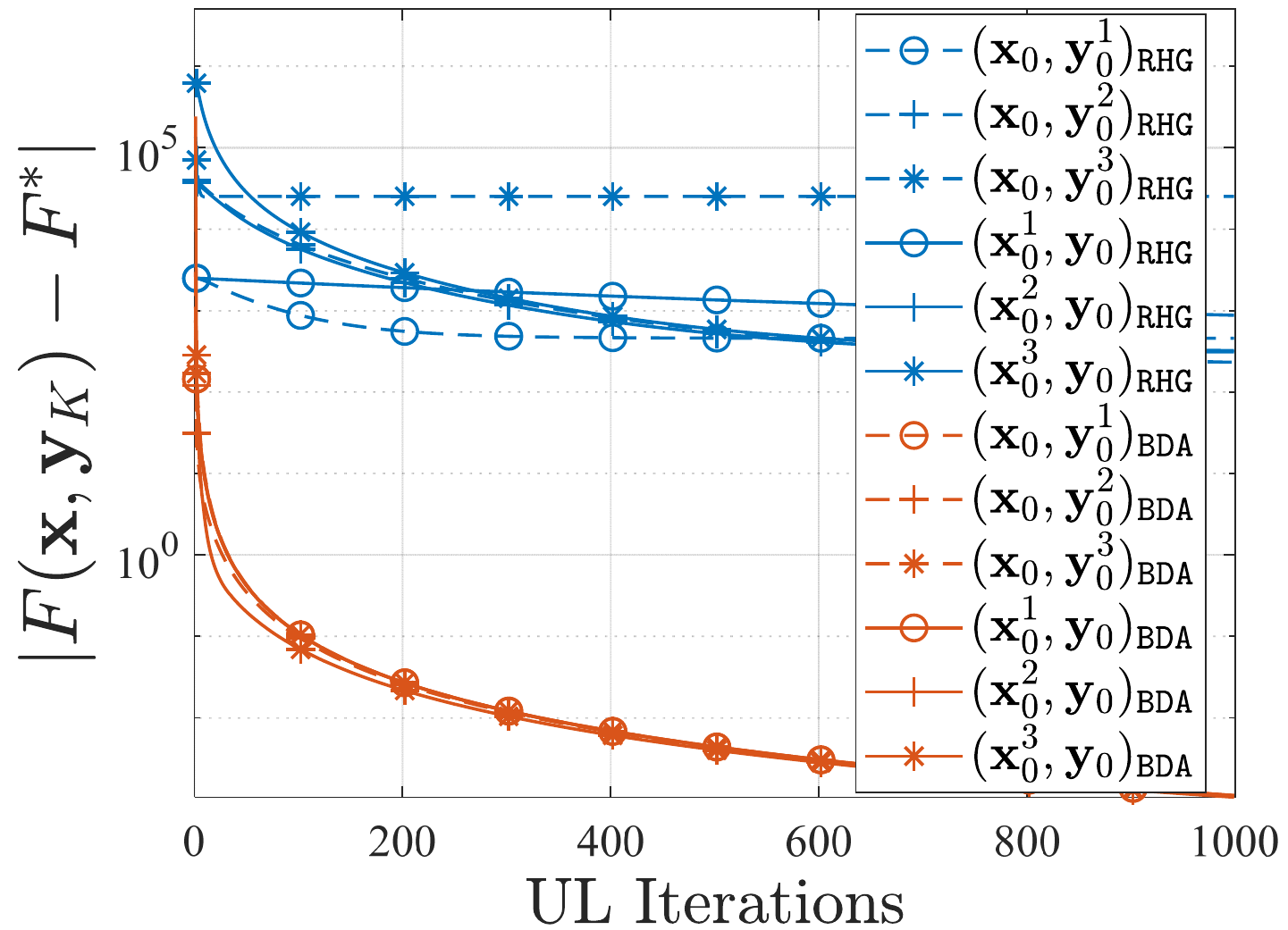}\\
	\end{tabular}
	\caption{Comparisons of BDA with RHG on ten different initial points. We set the dimensional $n=50$ and $K=20$. The left subfigure show the iteration solution of different initial points. We select five different initial points and show the UL objective behavior on the right subfigure. } \label{fig:toy_diff_initial}
\end{figure}

\begin{figure}[t]
	\centering 
	\begin{tabular}{c@{\extracolsep{0.2em}}c}
		\includegraphics[width=0.226\textwidth]{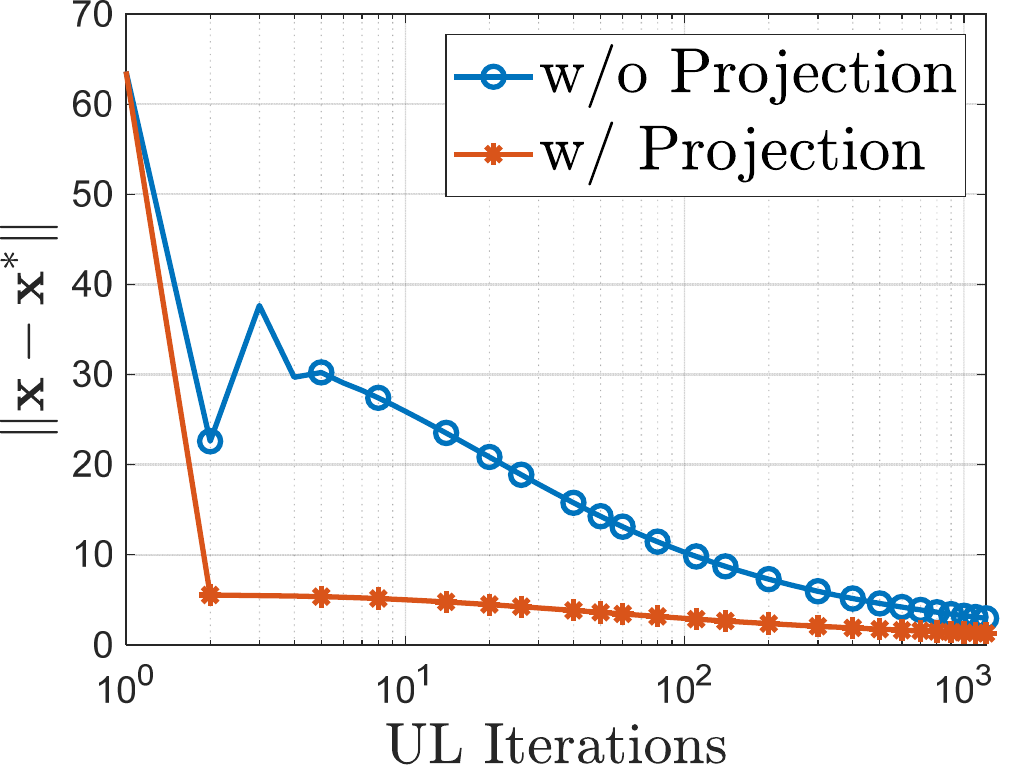}
		&\includegraphics[width=0.23\textwidth]{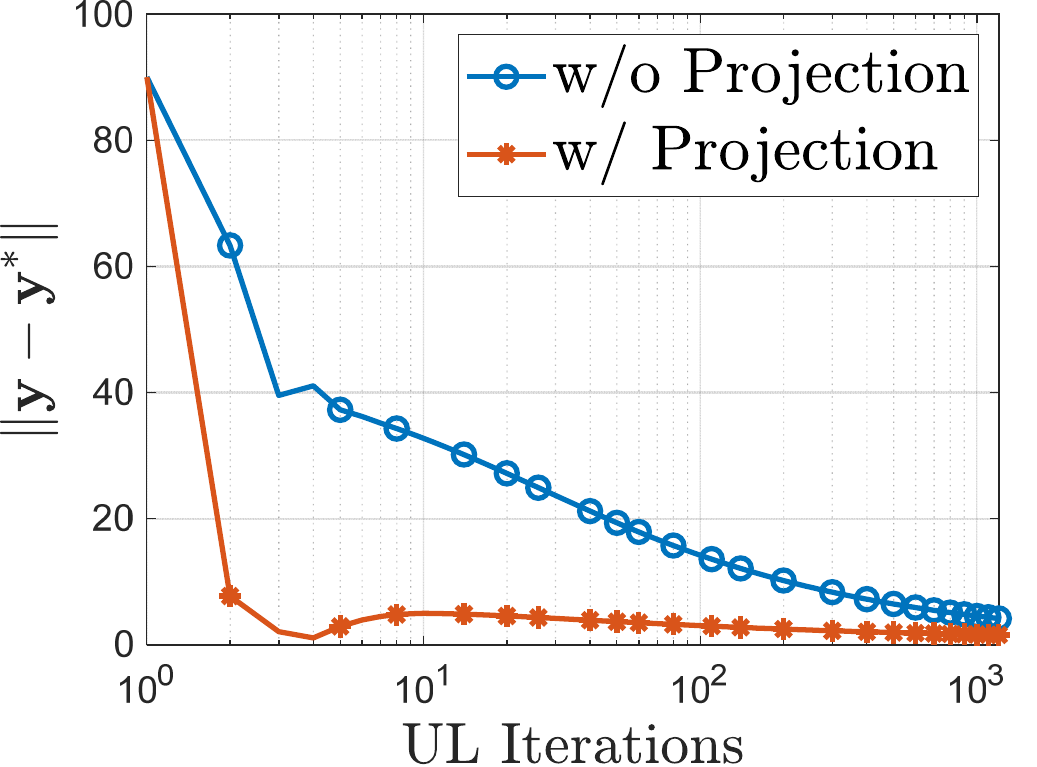}\\
	\end{tabular}
	\caption{Comparing of our BDA under different settings, i.e., with and without projection operator (namely w/ Projection and w/o Projection). We set $n=50$ and $K=20$.} \label{fig:proj}
\end{figure}

\section{Experimental Results}\label{sec:experiments}

This section first verify the numerical results and then evaluate the performance of our proposed method on different problems. 

\subsection{Numerical Evaluations}\label{subsec:numerical_results}

Our numerical results are investigated based on the synthetic BLO described in Section~\ref{subsec:comp_LLS}, i.e., Counter Example in Eq.~\eqref{eq:ce-2}. As stated in Section~\ref{subsec:comp_LLS}, this deterministic bi-level formulation satisfies all the assumptions required in Section~\ref{sec:conv_recipes}, but it cannot meet the LLS condition considered in~\cite{finn2017model,franceschi2017forward,franceschi2018bilevel,shaban2018truncated,rajeswaran2019meta}.

To show the influence of the LL iterations (i.e., $K$) on different methods, we first plotted the convergence behaviors (i.e., $|f(\x^t,\y_k)-f(\x^t,\y^*(\x^t))|$, $\|\y_k(\x^t)-\y^*(\x^t)\|$ and $\|\y_k(\x^t)-\y^*\|$ with $t = 0,5,20$) under different given $\x$ (i.e., $\x^0$, $\x^5$, $\x^{20}$) in Figure~\ref{fig:toy_diff_K}. This figure compare our BDA with the most popular bi-level based methods (i.e., T-RHG and RHG). Note that $t = 0,5,20$ are the UL iteration steps during the operation process. From the first and second row of Figure~\ref{fig:toy_diff_K}, we observed that with fixed UL variable $\x$, the results of RHG and BDA converge to the optimal solution with corresponding given $\x^t$. The third row of Figure~\ref{fig:toy_diff_K} plotted the distance between the current iteration step and the optimal solution $\y^*$. As can be seen, after a few UL iteration steps (i.e., $t\geq 5$), BDA is close to the optimal solution $\y^*$ while RHG and T-RHG cannot. In the above figures, we set $\alpha_k=0.5/k$, $k= 1,\cdots, K$, $s_u=s_l=0.1$, $\beta_k=1$, $\mu=0.1$.

\begin{figure}[t]
	\centering 
	\begin{tabular}{c@{\extracolsep{0.2em}}c}
		\includegraphics[width=0.226\textwidth]{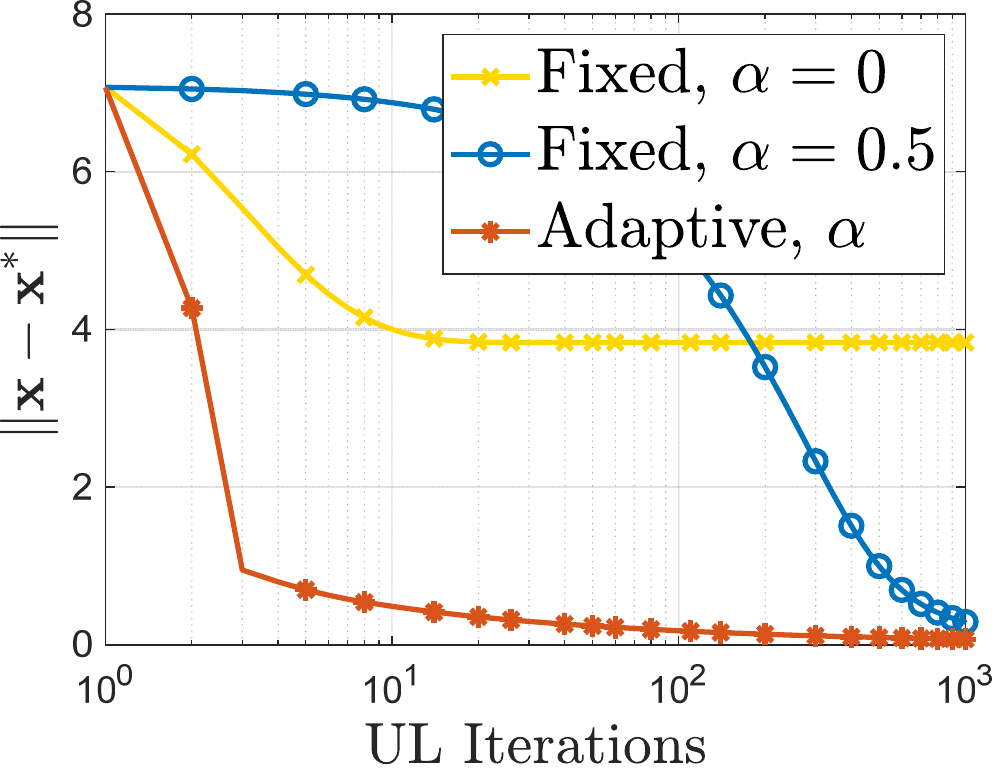}
		&\includegraphics[width=0.23\textwidth]{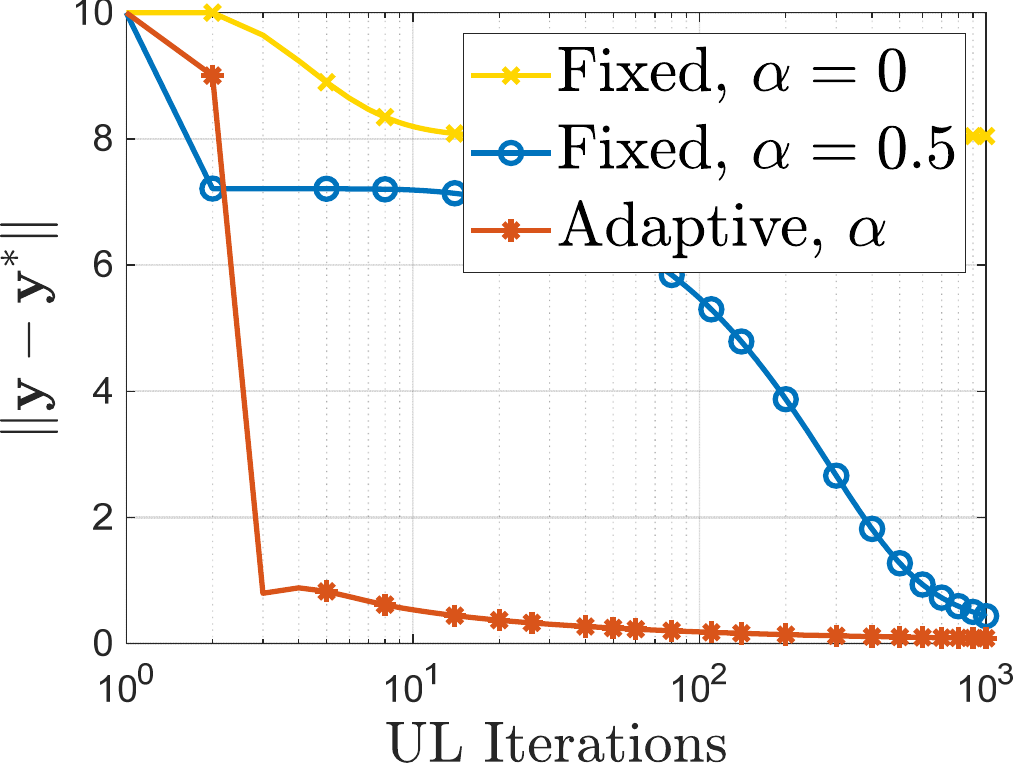}\\
	\end{tabular}
	\caption{The iteration curves of the developed BDA with different $\alpha$ settings (i.e., with Fixed $\alpha=0$, $\alpha=0.5$ and Adaptive $\alpha=0.5/k$). We set $n=50$ and $K=20$.} \label{fig:diff_alpha}
\end{figure}

\begin{table*}[t]
	\caption{Data hyper-cleaning accuracy of the compared methods on two different datasets, i.e., MNIST~\cite{lecun1998gradient} and Fashion MNIST~\cite{xiao2017fashion}.} 
	\label{tab:HyperClearning_2}
	\centering
	\renewcommand\arraystretch{1.1}
	\setlength{\tabcolsep}{3mm}{
		\begin{tabular}{ | c | c | c | c | c | c | c | c | c |}
			\hline
			\multirow{2}{*}{Methods} & \multicolumn{4}{c|}{MNIST} & \multicolumn{4}{c|}{Fashion MNIST} \\
			\cline{2-9}
			& Val. Acc. & Test Acc. & F1-Score & Time(s) & Val. Acc. & Test Acc. & F1-Score & Time(s)\\
			\hline\hline
			IHG & 86.98 &  87.69 & 87.62 & 12.14 $\pm\ 0.73$ & 82.66  &  83.82	& 83.63 & 10.92 $\pm$ 0.75 \\
			\hline
			RHG  & 88.08 &  88.30 & 88.20  & 3.72 $\pm\ $0.01 & 85.12  & 86.14 & 86.04  & 1.09 $\pm$ 0.01  \\
			\hline
			T-RHG  & 88.30 &  86.16 & 88.10  & 2.49 $\pm\ $0.07 & 85.12 &  86.06 & 86.07  & 0.63 $\pm$ 0.01 \\
			\hline		
			O-BDA & \textbf{ 88.84} &  88.45 & 88.37  & 2.89 $\pm\ $0.01 & \textbf{86.34} & 86.16 & 86.05 & 0.66 $\pm$ 0.01\\
			\hline
			BDA &88.26 &\textbf{ 88.47}	& \textbf{88.42}  & 7.82 $\pm\ $0.12 & 85.28 & \textbf{86.26} & \textbf{86.17} & 1.91 $\pm$ 0.01 \\
			\hline
		\end{tabular}
	}
\end{table*}

Figure~\ref{fig:toy_diff_initial} plotted numerical results of the proposed BDA and RHG~\cite{franceschi2017forward,franceschi2018bilevel} with ten different initialization points. We considered different numerical metrics, such as the relationship of $(\x,\y)$ with optimal solution $(\x^*,\y^*)$ and the distance between $F(\x,\y_K)$ and $F^*$ (i.e., $|F(\x,\y_K)-F^*|$), for evaluations. It needs to be noted that we select five different initial points to show the performance of $|F(\x,\y_K)-F^*|$. As can be observed that RHG is always hard to obtain the correct solution, even start from different initialization points. It is mainly because that the solution set of the LL subproblem in Eq.~\eqref{eq:ce-2} is not a singleton, which does not satisfy the fundamental assumption of RHG. In contrast, the proposed method can obtain a truly optimal solution in all these scenarios. The initialization only slightly affects the convergence speed of our iterative sequences.

To explore the performance under projection operator denoted in Eq.~\eqref{eq:improved-lower}, we report in Figure \ref{fig:proj} the results (i.e., $\|\x-\x^*\|$ and $\|\y-\y^*\|$) of comparing the performance with and without projection (i.e., w/ Projection, w/o Projection). In this experiment, we set the initial value far away from the optimal point with relatively close projection interval $\Y$. As can be seen, with the projection operator, the iteration sequences reach convergence with fewer steps.

Figure~\ref{fig:diff_alpha} evaluated the convergence behaviors of BDA with different choices of $\alpha_k$. We set $\beta_k=1$, $\mu=0.5$, $s_u=0.1$ and $s_l=0.1$. By setting $\alpha_k=0$, we were unable to use the UL information guiding the LL updating. Thus it is hard to obtain proper feasible solutions for the UL approximation subproblem. When choosing a fixed $\alpha_k$ in $(0,1)$ (e.g., $\alpha_k=0.5$), the numerical performance can be improved but the convergence speed was still slow. Fortunately, we followed our theoretical findings and introduced an adaptive strategy to incorporate UL information into LL iterations, leading to nice convergence behaviors for both UL and LL variables.

\begin{table*}[t]
	\caption{Averaged accuracy scores $\pm$ standard deviation of various methods (model-based methods and gradient-based bi-level methods) on few-shot classification classification problems (1-shot and 5-shot, i.e., $M=1,5$, $N=5,20,30,40$) on Omniglot. }
	\label{tab:omniglot}
	\centering 	
	\renewcommand\arraystretch{1.2}
	\setlength{\tabcolsep}{1mm}{
		\begin{tabular}{|c|c|c|c|c|c|c|c|c|}
			\hline
			 \multirow{2}{*}{Method}& \multicolumn{2}{c|}{ $5$-way}&\multicolumn{2}{c|}{$20$-way}&\multicolumn{2}{c|}{$30$-way} &\multicolumn{2}{c|}{$40$-way}\\
			\cline{2-9}
			  &$1$-shot & $5$-shot &  $1$-shot &  $5$-shot & $1$-shot &  $5$-shot & $1$-shot &  $5$-shot\\
			\hline\hline
			MAML &98.70 $\pm$ $0.40\%$&\textbf{99.91} $\pm$ $0.10\%$ &95.80 $\pm$ $0.30\%$ &98.90 $\pm$ $0.20\%$ & 86.86 $\pm\ 0.49\%$ & 96.86 $\pm\ 0.19\%$ & 85.98 $\pm\ 0.45\%$& 94.46 $\pm\ 0.13\%$  \\
			\hline
			Meta-SGD  & 97.97 $\pm\ 0.70\%$ & 98.96 $\pm\ 0.20\%$ & 93.98 $\pm\ 0.43\%$& 98.42 $\pm\ 0.11\%$ & 89.91 $\pm\ 0.04\%$ & 96.21 $\pm 0.15\%$ & 87.39 $\pm\ 0.43\%$ & 95.10 $\pm\ 0.15\%$\\
			\hline
			Reptile & 97.68 $\pm\ 0.04\%$ & 99.48 $\pm\ 0.06\%$ & 89.43 $\pm\ 0.14\%$ & 97.12 $\pm\ 0.32\%$ & 85.40 $\pm\ 0.30\%$ & 95.28 $\pm\ 0.30\%$ & 82.50 $\pm\ 0.30\%$ & 92.79 $\pm\ 0.33\%$\\
			\hline
			iMAML,GD  & \textbf{99.16} $\pm$ $ 0.35\%$ & 99.67 $\pm$  0.12$\% $ & 94.46 $\pm$ 0.42$\%$ & 98.69 $\pm \ 0.10\%$ & 89.52 $\pm\ 0.20\%$ & 96.51 $\pm\ 0.08\%$ & 87.28 $\pm\ 0.21\%$ & 95.27 $\pm\ 0.08\%$ \\
			\hline\hline
			RHG & 98.64 $\pm$ $ 0.21\%$ & 99.58 $\pm$ $ 0.12\%$ & 96.13 $\pm$ $ 0.20\%$ & 99.09 $\pm$ $ 0.08\%$  & 93.92 $\pm\ 0.18\%$ & 98.43 $\pm\ 0.08\%$  & 90.78 $\pm\ 0.20\%$ & 96.79 $\pm\ 0.10\%$\\
			\hline
			T-RHG  & 98.74 $\pm$ $ 0.21\%$& 99.71 $\pm$ $ 0.07\%$ & 95.82 $\pm$ $ 0.20\%$ & 98.95 $\pm$ $ 0.07\%$& 94.02 $\pm\ 0.18\%$ & 98.39 $\pm\ 0.07\%$ & 90.73 $\pm\ 0.20\%$  & 96.79 $\pm\ 0.10\%$ \\
			\hline
			BDA & 99.04 $\pm$ $ 0.18\%$ & 99.74 $\pm$ $ 0.05\%$ & \textbf{96.50} $\pm$ $ 0.16\%$ & \textbf{99.19} $\pm$ $ 0.07\%$ & \textbf{94.37} $\pm\ 0.18\%$ & \textbf{98.53} $\pm\ 0.07\%$ & \textbf{92.49} $\pm\ 0.18\%$ & \textbf{97.12} $\pm$ 0.09 $\%$\\
			\hline
		\end{tabular}
	}
\end{table*}

\subsection{Hyper-parameter Optimization}\label{subsec:hyper_parameter}

For the hyper-parameter optimization problem, the key idea is to choose a set of optimal hyper-parameters for a given machine learning task. In this experiment, we consider a specific hyper-parameter optimization example (i.e., data hyper-cleaning~\cite{franceschi2017forward,shaban2018truncated}) to evaluate the developed bi-level algorithm. This task aims to train a linear classifier on a given image set, but part of the training labels are corrupted. Here we consider soft-max regression (with parameters $\y$) as our classifier and introduce hyper-parameters $\x$ to weight samples for training. We define the LL objective as the following weighted training loss:  
\begin{equation*}
f(\x,\y)=\sum_{(\mathbf{u}_i,\mathbf{v}_i)\in\mathcal{D}_{\mathtt{tr}}}[\sigma(\x)]_i\ell(\y;\mathbf{u}_i,\mathbf{v}_i),\end{equation*}
where $\x$ is the hyper-parameter vector to penalize the objective for different training samples, $\ell(\y;\mathbf{u}_i,\mathbf{v}_i)$ means the cross-entropy function with the classification parameter $\y$, and data pairs $(\mathbf{u}_i,\mathbf{v}_i)$ and denote $\mathcal{D}_{\mathtt{tr}}$ and $\mathcal{D}_{\mathtt{val}}$ as the training and validation sets, respectively. Here $\sigma(\x)$ denotes the element-wise sigmoid function on $\x$ and is used to constrain the weights in the range $[0,1]$. For the UL subproblem, we define the objective as the cross-entropy loss with $\ell_2$ regularization on the validation set, i.e., 
\begin{equation*}
F(\x,\y)=\sum_{(\mathbf{u}_i,\mathbf{v}_i)\in\mathcal{D}_{\mathtt{val}}}\ell(\y(\x);\mathbf{u}_i,\mathbf{v}_i).
\end{equation*}
In particular, the UL and LL objective $F$ and $f$ w.r.t. $\y$ is required to be convex. To satisfy this requirement, we design the classifier with a fully connected layer.

We applied our BDA and One-stage BDA (O-BDA) together with the bi-level based methods, i.e., Implicit HG (IHG) \cite{pedregosa2016hyperparameter}, RHG and Truncated RHG (T-RHG) \cite{shaban2018truncated}. We first conduct the experiment on two datasets (MNIST dataset \cite{lecun1998gradient} and Fashion MNIST dataset \cite{xiao2017fashion}) that each with 5000 training examples (i.e., $\mathcal{D}_{\mathtt{tr}}$), 5000 validation examples (i.e., $\mathcal{D}_{\mathtt{val}}$) and a test set with the remaining 60000 samples. We randomly chose 2500 training samples from $\mathcal{D}_{\mathtt{tr}}$ and pollute the labels. 

We use validation accuracy (i.e., Val. Acc.), test accuracy (i.e., Test Acc.), F1-score and running times as the metrics of our developed algorithm. As shown in Table \ref{tab:HyperClearning_2}, the developed method perform the best both on MNIST and Fashion MNIST dataset. The LL iterations are $K=200$ and $K=50$ on MNIST and Fashion MNIST, respectively. For T-RHG, we chose 100-step and 25-step truncated back-propagation respectively from $K=200$ and $K=50$ to guarantee its convergence. Besides, the developed O-BDA still perform better when comparing with the existing bi-level based methods.

\begin{table}[htb]
	\caption{The few-shot classification performances on MiniImageNet ($N=5$ and $M=1$). The second column reported the averaged accuracy after converged. The rightmost two columns compared the UL Iterations (denoted as ``UL Iter.''), when achieving almost the same accuracy ($\approx 44\%$). Here  ``Ave. $\pm$ Var. (Acc.)'' denotes the averaged accuracy and the corresponding variance.}
	\label{tab:mini}
	\renewcommand\arraystretch{1.1}
	\centering 
	\setlength{\tabcolsep}{4.5mm}{
		\begin{tabular}{|c | c ||c| c |}
			\hline
			Method & Acc. & Ave. $\pm$ Var. (Acc.)  & UL Iter. \\
			\hline\hline
			RHG & $48.89$  & 44.46 $\pm$ 0.78$\%$& 3300\\
			\hline
			T-RHG & $47.67$ & 44.21 $\pm$ 0.78$\%$ & 3700\\
			\hline
			PBDA & \textbf{49.08} & 44.24 $\pm$ 0.79$\%$ & \textbf{2500} \\
			\hline
		\end{tabular}
	}
\end{table} 

\subsection{Meta-Learning}\label{subsec:meta_learning}

Meta-learning aims to leverage a large number of similar few-shot tasks to learn an algorithm that should work well on novel tasks in which only a few labeled samples are available. In particular, we consider the few-shot learning problem \cite{vinyals2016matching,qiao2018few}, where each task is to discriminate $N$ separate classes and it is to learn the hyper-parameter $\x$ such that each task can be solved only with $M$ training samples (i.e., $N$-way $M$-shot). Following the experimental protocol used in recent works that the network architecture is with four-layer CNNs followed by fully connected layer, we separate the network architecture into two parts: the cross-task intermediate representation layers (parameterized by $\x$) outputs the meta features and the multinomial logistic regression layer (parameterized by $\y^j$) as our ground classifier for the $j$-th task. We also collect a meta training data set $\mathcal{D}=\{\mathcal{D}^j\}$, where $\mathcal{D}^j=\mathcal{D}_{\mathtt{tr}}^j\cup\mathcal{D}_{\mathtt{val}}^j$ is linked to the $j$-th task. Then for the $j$-th task, we consider the cross-entropy function $\ell(\x,\y^j;\mathcal{D}_{\mathtt{tr}}^j)$ as the task-specific loss and thus the LL objective can be defined as 
\begin{equation*}
\begin{array}{c}
f(\x,\{\y^j\})=\sum\limits_{j}\ell(\x,\y^j;\mathcal{D}_{\mathtt{tr}}^j).
\end{array}
\end{equation*}
As for the UL objective, we also utilize cross-entropy function but define it based on $\{\mathcal{D}_{\mathtt{val}}^j\}$ as 
\begin{equation*}
\begin{array}{c}
F(\x,\{\y^j\})=\sum\limits_{j}\ell(\x,\y^j;\mathcal{D}_{\mathtt{val}}^j).
\end{array}
\end{equation*}

\begin{figure}[t]
	\centering 
	\begin{tabular}{c@{\extracolsep{0.1em}}c}
		\footnotesize Omniglot dataset & \footnotesize MiniImageNet dataset\\
		\includegraphics[width=0.235\textwidth]{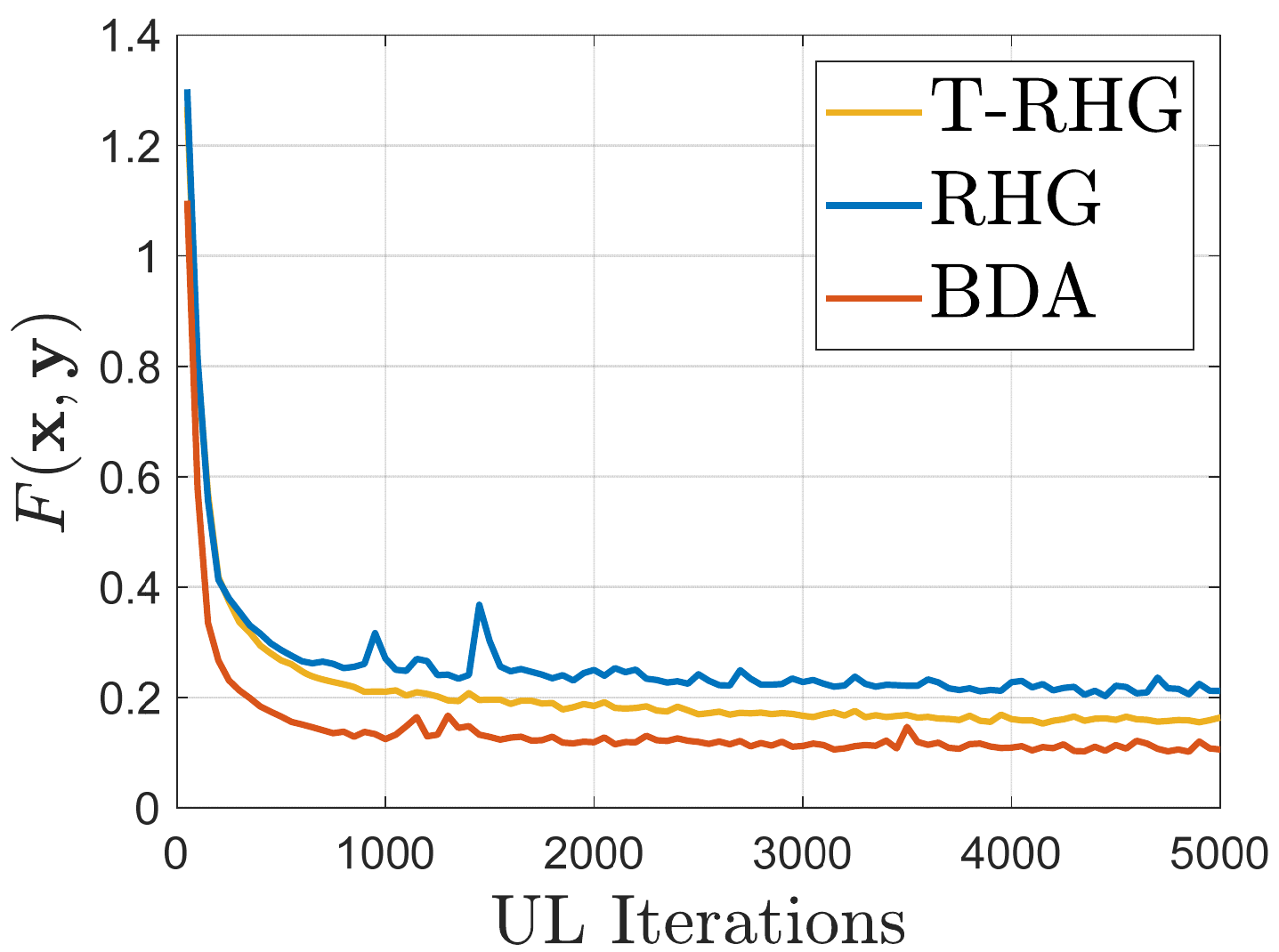}
		&\includegraphics[width=0.235\textwidth]{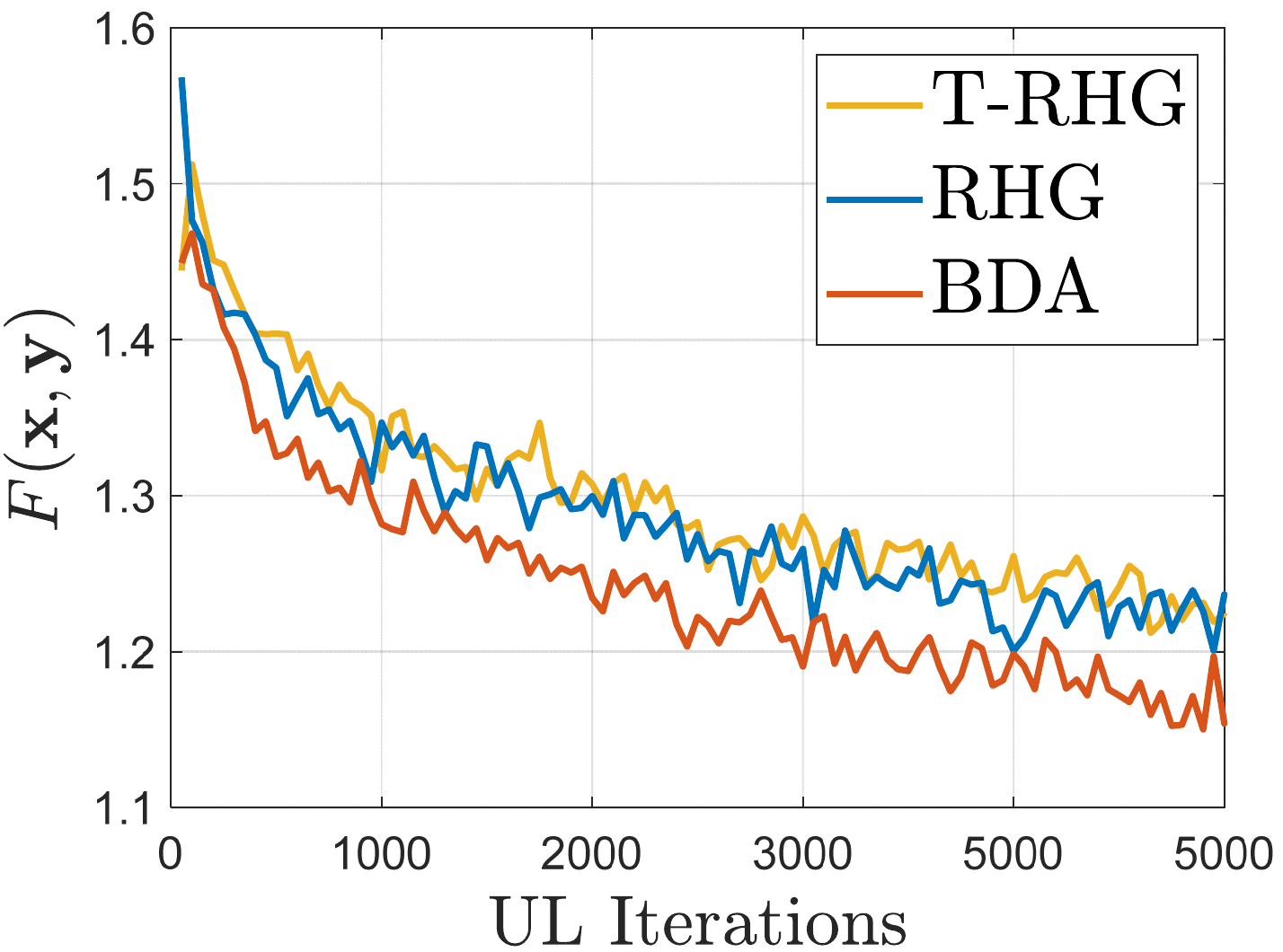}\\
	\end{tabular}
	\caption{Illustrating the validation loss (i.e., UL objectives  $F(\x,\y)$) for three bi-level based methods on few-shot classification task.} \label{fig:dataset_loss}
\end{figure}

\begin{figure}[t]
	\centering 
	\begin{tabular}{c@{\extracolsep{0.1em}}c}
		\includegraphics[width=0.235\textwidth]{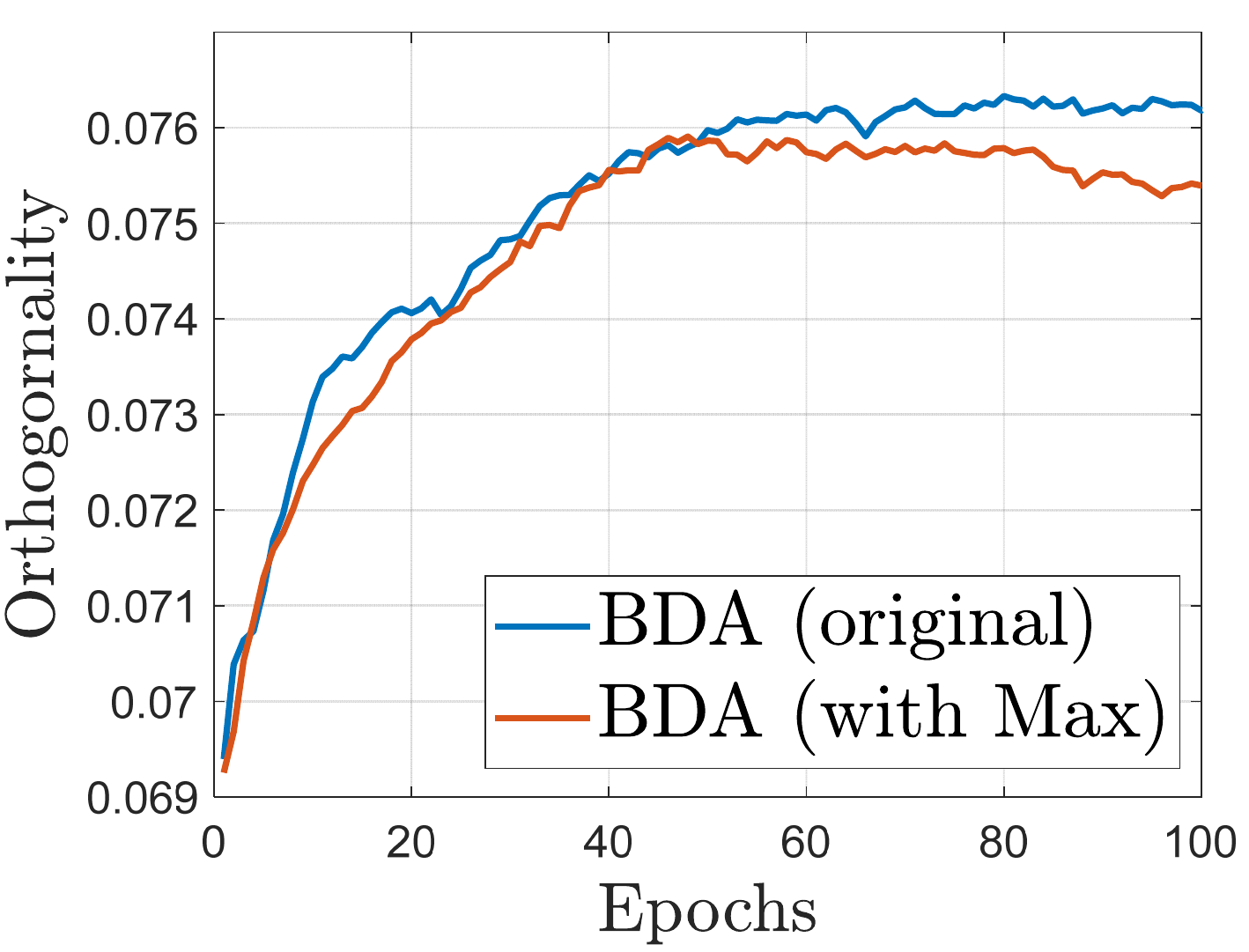}
		&\includegraphics[width=0.235\textwidth]{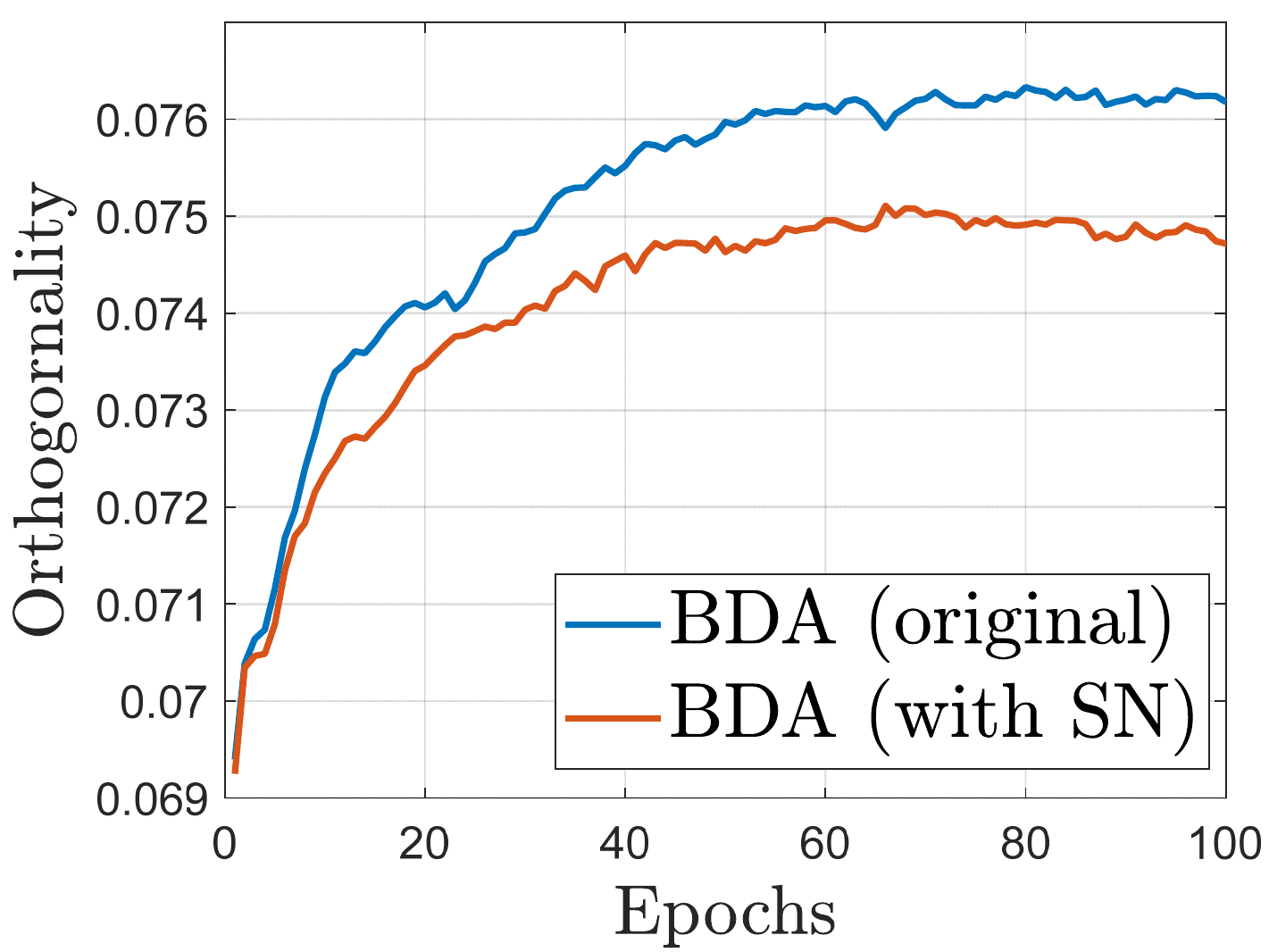}\\
	\end{tabular}
	\caption{Evaluating the orthogonality under different constraints (i.e., with Max-norm regularization and Spectral Normalization (SN for short)) on few-shot application.} \label{fig:about_sn}
\end{figure}

Our experiments are conducted on Ominglot~\cite{lake2015human} and MiniImageNet~\cite{vinyals2016matching} benchmarks. We compared our BDA to several state-of-the-art approaches, such as MAML~\cite{finn2017model}, Meta-SGD~\cite{li2017meta}, Reptile~\cite{nichol2018first}, iMAML~\cite{rajeswaran2019meta}, RHG, and T-RHG. As shown in Table \ref{tab:omniglot}, BDA compared well to these methods and achieved the highest classification accuracy except in the 5-way task. Further, with more complex problems (such as 20-way, 30-way and 40-way), BDA shows significant advantages over other methods. Besides, we evaluate the performance of BDA and bi-level based methods (i.e., RHG and T-RHG) on the more challenging MiniImageNet data set and the corresponding results are listed in Table~\ref{tab:mini}. As shown in the second column of Table \ref{tab:mini} that the developed BDA perform better than RHG and T-RHG. The rightmost two columns demonstrate that BDA needed the fewest iterations to achieve almost the same accuracy ($\approx 44\%$). The corresponding validation loss on Omniglot and MiniImageNet about 5-way 1-shot are shown in Figure \ref{fig:dataset_loss}.

Moreover, to evaluate the effectiveness of the projection operator, we conduct an experiment evaluating orthogonal features of the network with two different strategies (i.e., max-norm regularization and spectral normalization). Note that we compute the orthogonality following \cite{prakash2019repr}. As shown in Figure \ref{fig:about_sn}, BDA with both Max and SN training schemes show the lower orthogonal sum. This experiment implies that the projection operator can help obtain a better network.

\section{Conclusions}

This work established a flexible descent aggregation framework with task-tailored iteration dynamics modules to solve bi-level tasks by formulating BLO in Eq.~\eqref{eq:oblp} from the viewpoint of optimistic bi-level. We provided a new algorithmic framework to handle the LLS issue. Then, this work strictly proved the convergence of the developed framework without the LLS assumption and the strong convexity in the UL objective. Focusing on different solution qualities (namely, global, local, and stationarity), this work elaborated the convergence results respectively. Finally, extensive experiments justified our theoretical results and demonstrated the superiority of the proposed algorithm for hyper-parameter optimization and meta-learning.

\section*{Acknowledgements}

This work is partially supported by the National Key R\&D Program of China (2020YFB1313503), the National Natural Science Foundation of China (Nos. 61922019 and 11971220), the Fundamental Research Funds for the Central Universities, the Shenzhen Science and Technology Program (No. RCYX20200714114700072), the Stable Support Plan Program of Shenzhen Natural Science Fund (No. 20200925152128002), the Guangdong Basic and Applied Basic Research Foundation and the Pacific Institute for the Mathematical Sciences (PIMS).

\nocite{langley00}

\bibliographystyle{IEEEtran}
\bibliography{reference}

%
%
%
%
%

\vspace{-1cm}
\begin{IEEEbiography}[{\includegraphics[width=1in,height=1.25in,clip,keepaspectratio]{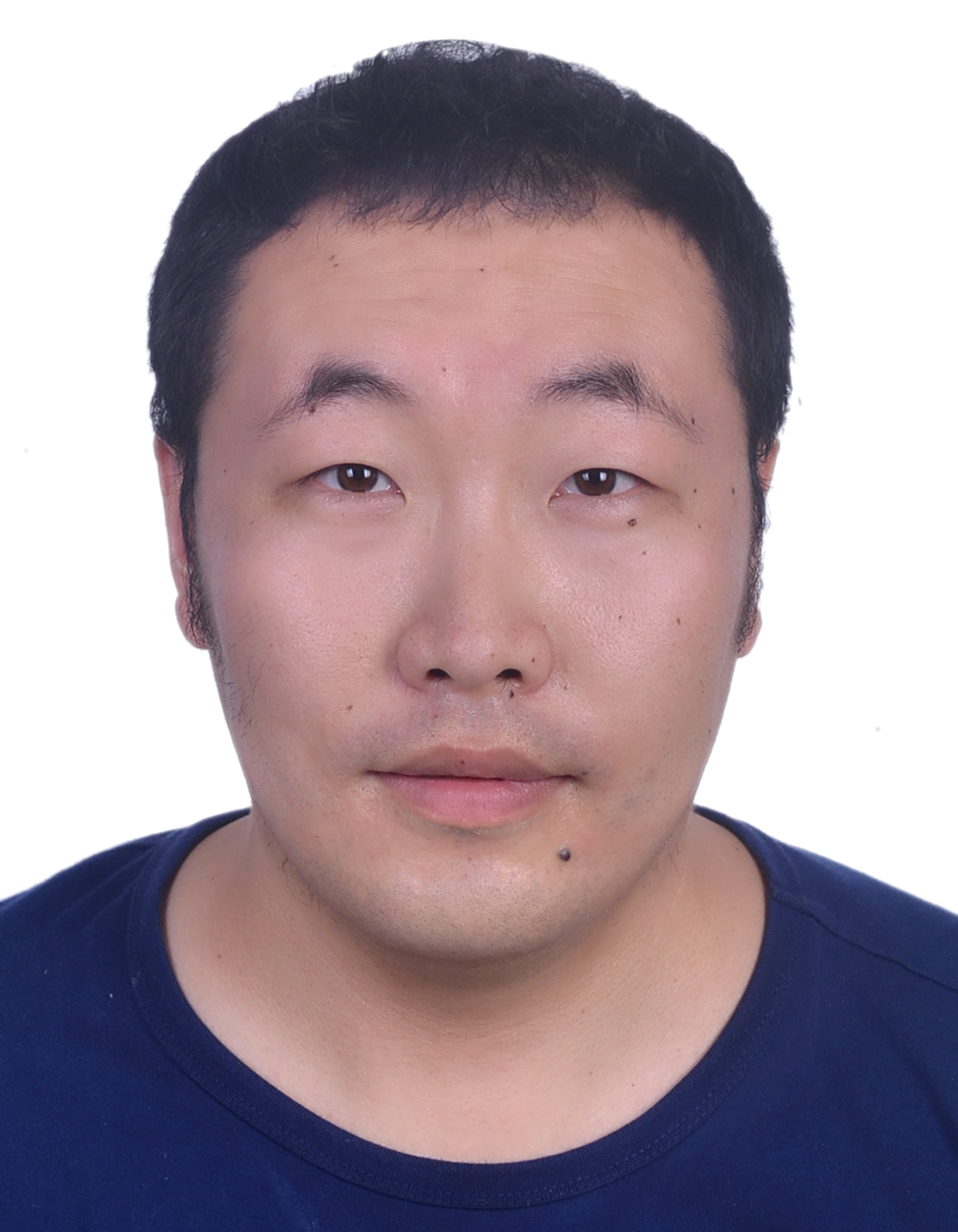}}]{Risheng Liu} received the B.S. and Ph.D. degrees both in mathematics from the Dalian University of Technology in 2007 and 2012, respectively. He was a visiting scholar in the Robotic Institute of Carnegie Mellon University from 2010 to 2012. He served as Hong Kong Scholar Research Fellow at the Hong Kong Polytechnic University from 2016 to 2017. He is currently a professor with DUT-RU International School of Information Science \& Engineering, Dalian University of Technology. 
His research interests include machine learning, optimization, computer vision and multimedia. 
\end{IEEEbiography}
\vspace{-1cm}
\begin{IEEEbiography}[{\includegraphics[width=1in,height=1.25in,clip,keepaspectratio]{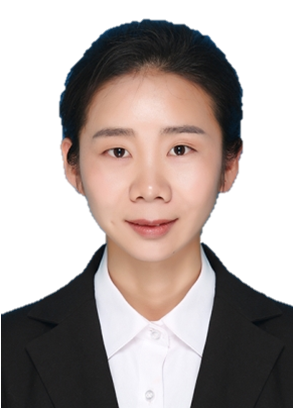}}]{Pan Mu} received the B.S. degree in Applied Mathematics from Henan University, China, in 2014, the M.S. degree in Operational Research and Cybernetics from Dalian University of Technology, China, in 2017. She received the Ph.D. degrees in mathematics from the Dalian University of Technology in 2021. She is currently a lecturer with the College of Computer Science and Technology, Zhejiang University of Technology. Her research interests include computer vision, machine learning, control and optimization. 
\end{IEEEbiography}
\vspace{-1cm}
\begin{IEEEbiography}[{\includegraphics[width=1in,height=1.25in,clip,keepaspectratio]{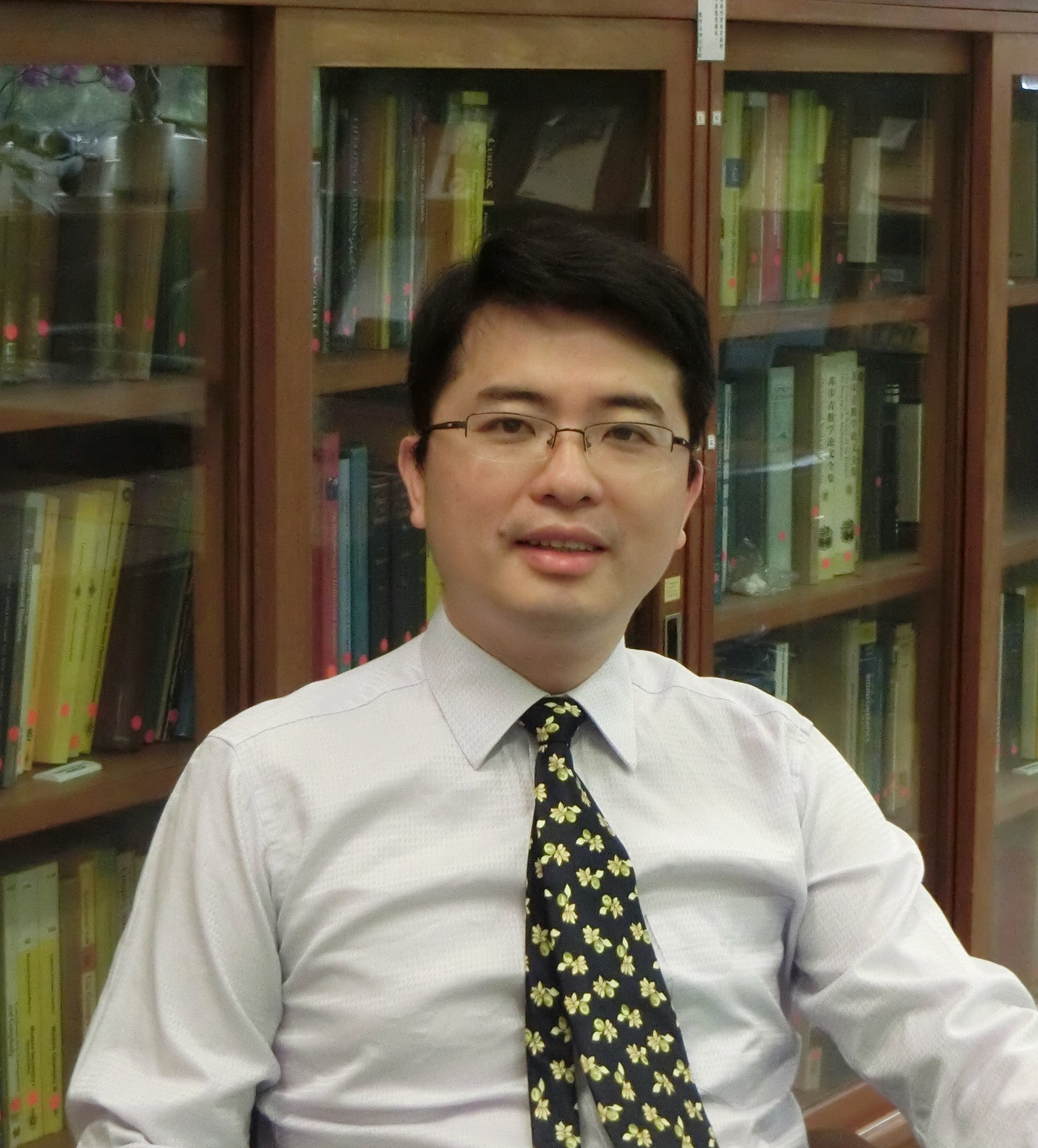}}]{Xiaoming Yuan} is Professor at Department of Mathematics, The University of Hong Kong. His main research interests include numerical optimization, scientific computing and optimal control. Recently, he is particularly interested in optimization problems in various AI and cloud computing areas.
\end{IEEEbiography}
\vspace{-1cm}
\begin{IEEEbiography}[{\includegraphics[width=1in,height=1.25in,clip,keepaspectratio]{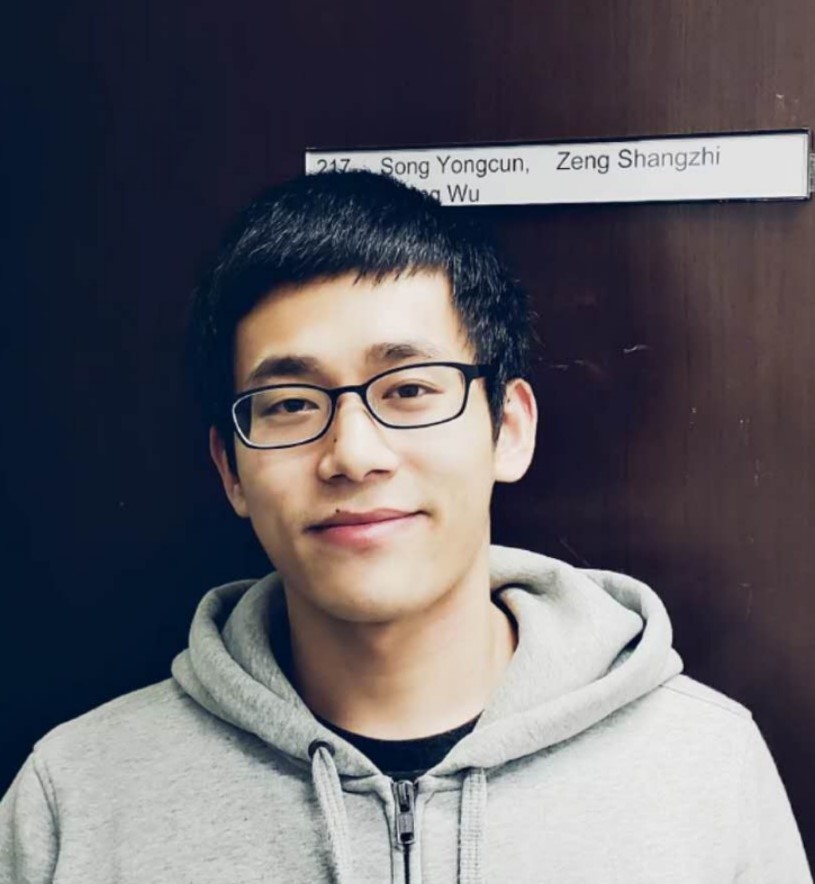}}]{Shangzhi Zeng} received the B.Sc. degree in Mathematics and Applied Mathematics from Wuhan University in 2015, the M.Phil. degree from Hong Kong Baptist University in 2017, and the Ph.D. degree from the University of Hong Kong in 2021. He is currently a PIMS postdoctoral fellow in the Department of Mathematics and Statistics at University of Victoria. His current research interests include variational analysis and bilevel optimization.
\end{IEEEbiography}
\vspace{-1cm}
\begin{IEEEbiography}[{\includegraphics[width=1in,height=1.32in,clip,keepaspectratio]{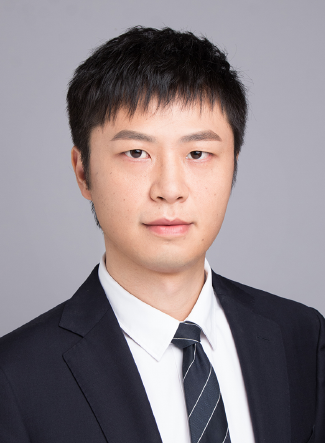}}]{Jin Zhang} received the B.A. degree in Journalism from the Dalian University of Technology in 2007. He received the M.S. degree in mathematics from the Dalian University of Technology, China, in 2010, and the PhD degree in Applied Mathematics from University of Victoria, Canada, in 2015. After working in Hong Kong Baptist University for 3 years, he joined Southern University of Science and Technology as a tenure-track assistant professor in the Department of Mathematics. His research interests include optimization, variational analysis and their applications in economics, engineering and data science.
\end{IEEEbiography}

\end{document}